\documentclass[a4paper,fleqn]{cas-dc}

\usepackage[authoryear,longnamesfirst]{natbib}

\def\tsc#1{\csdef{#1}{\textsc{\lowercase{#1}}\xspace}}
\tsc{QILP-0}

\usepackage{amsmath,amssymb,amsthm}
\usepackage{algorithm}
\usepackage{booktabs}

\usepackage{graphicx}
\usepackage{booktabs}
\usepackage{url}  
\usepackage{hyperref}

\usepackage[section]{placeins}
\usepackage{float}

\usepackage{amsmath}
\usepackage{amssymb}
\usepackage{mathtools}
\usepackage{stmaryrd}

\usepackage{relsize}
\usepackage{graphicx}
\usepackage{subcaption}
\usepackage{amsmath}
\usepackage{mathtools}
\usepackage{algorithm}
\usepackage[noend]{algpseudocode}
\usepackage{xcolor,colortbl}
\usepackage{amssymb}
\usepackage{moresize}
\usepackage{multirow}
\usepackage{mdframed}
\usepackage{stmaryrd}
\usepackage{setspace}
\usepackage{enumitem}
\usepackage{comment}

\newcounter{example}[section]
\newenvironment{example}[1][]{\refstepcounter{example}\par\medskip
	\noindent \textbf{Example~\theexample} \rmfamily}{\medskip}

\newcounter{definition}[section]
\newenvironment{definition}[1][]{\refstepcounter{definition}\par\medskip
	\noindent \textbf{Definition~\thedefinition~(#1)} \rmfamily}{\medskip}

\newcounter{proposition}[section]
\newenvironment{proposition}[1][]{\refstepcounter{proposition}\par\medskip
	\noindent \textbf{Proposition~\theproposition} \rmfamily}{\medskip}

\newcounter{theorem}[section]
\newenvironment{theorem}[1][]{\refstepcounter{theorem}\par\medskip
	\noindent \textbf{Theorem~\thetheorem} \rmfamily}{\medskip}
	
\theoremstyle{remark}
\newtheorem{remark}[theorem]{Remark}
	
\newtheorem{corollary}[theorem]{Corollary}

\usepackage{prettyref}
\newrefformat{def}{Definition~\ref{#1}}
\newrefformat{prop}{Proposition~\ref{#1}}
\newrefformat{th}{Theorem~\ref{#1}}
\newrefformat{algo}{Algorithm~\ref{#1}}
\newrefformat{fig}{Figure~\ref{#1}}
\newrefformat{tab}{Table~\ref{#1}}
\newrefformat{ex}{Example~\ref{#1}}
\newcommand{\pref}{\prettyref}

\definecolor{gray50}{gray}{0.45}

\newcommand{\ie}{i.e., }
\newcommand{\resp}{resp.\ }

\newcommand{\N}{\ensuremath{\mathbb{N}}}
\newcommand{\segm}[2]{\ensuremath{\llbracket #1 ; #2 \rrbracket}}
\newcommand{\card}[1]{\ensuremath{|#1|}}
\newcommand{\powerset}{\ensuremath{\wp}}

\newcommand{\mvl}{\ensuremath{\mathcal{M}\mathrm{V}\mathrm{L}}}
\newcommand{\mvlp}{\ensuremath{\mathcal{M}\mathrm{V}\mathrm{L}\mathrm{P}}}
\newcommand{\V}{\ensuremath{\mathcal{V}}}
\newcommand{\var}{\ensuremath{\mathrm{v}}}

\newcommand{\val}{\ensuremath{{val}}}
\newcommand{\vvi}[1]{\ensuremath{\var_{#1}^{\val_{#1}}}}
\newcommand{\vv}{\ensuremath{\var^{\val}}}

\newcommand{\hvar}[1]{\ensuremath{\mathrm{var}({h({#1})})}}
\newcommand{\bvar}[1]{\ensuremath{\mathrm{var}({b({#1})})}}
\newcommand{\setvar}[1]{\ensuremath{\mathrm{var}({#1})}}
\newcommand{\dom}{\ensuremath{\mathsf{dom}}}
\newcommand{\A}{\ensuremath{\mathcal{A}}}
\newcommand{\Sall}{\ensuremath{\mathcal{S}}}

\newcommand{\dmvlp}{\ensuremath{\mathcal{D}\mathcal{M}\mathrm{V}\mathrm{L}\mathrm{P}}}
\newcommand{\F}{\ensuremath{\mathcal{F}}}
\newcommand{\T}{\ensuremath{\mathcal{T}}}
\newcommand{\Val}{\ensuremath{\mathcal{V}al}}
\newcommand{\SallF}{\ensuremath{\Sall^{\mathcal{F}}}}
\newcommand{\SallT}{\ensuremath{\Sall^{\mathcal{T}}}}

\begin{document}
\let\WriteBookmarks\relax
\def\floatpagepagefraction{1}
\def\textpagefraction{.001}

\shorttitle{QILP-0: observational declarative twins for quantum circuits}
\shortauthors{Author et al.}

\title[mode=title]{QILP-0: Constructing Observational Declarative Twins of Quantum Circuits}

\author[1]{Marina de la Cruz Echeandía}[
orcid=0000-0002-3038-5541
]
\ead{marina.delacruz@unir.net}
\credit{Software, Validation, Writing}

\affiliation[1]{
	organization={Universidad Internacional de la Rioja UNIR},
	addressline={Escuela Superior de Ingeniería y Tecnología},
	city={Logroño},
	country={Spain}
}

\author[3]{César Luis Alonso}
\ead{calonso@uniovi.es}
\credit{Conceptualization, Methodology, Validation, Writing}

\author[2]{Tony Ribeiro}[
orcid=0000-0002-1793-2854
]
\ead{tony.ribeiro@ls2n.fr}
\credit{Software, Validation, Writing}

\affiliation[2]{
	organization={Nantes Université, École Centrale Nantes, CNRS, LS2N, UMR 6004, 44000, Nantes, France;
		National Institute of Informatics, 2-1-2 Hitotsubashi, Chiyoda-ku, Tokyo, 101-8430, Japan}
}

\author[3]{Alfonso Ortega de la Puente}
\cormark[1]
\ead{ortegaalfonso@uniovi.es}
\credit{Conceptualization, Methodology, Software, Validation, Writing}

\affiliation[3]{
	organization={Universidad de Oviedo},
	addressline={Departamento de Informática},
	city={Gijón},
	postcode={33204},
	country={Spain}
}

\cortext[cor1]{Corresponding author}

\begin{abstract}
	This paper introduces QXymb, a general framework for constructing
	observational declarative twins of quantum circuits, and develops QILP-0,
	its first complete order-0 specialization. QILP-0 constructs a finite
	multi-valued propositional logic program from observed circuit behaviour
	within a declared observational scope.
	
	The pipeline traverses a declared family of quantum observables
	incrementally according to a reproducible structural grading and a declared
	observational reference horizon. Progress is quantified through
	reference-relative coverage against a fixed target-independent reference.
	Observable responses are organized through target-independent geometry,
	while retained latent structure is mapped deterministically back to original
	observable columns before symbolic processing, preserving observational
	semantics and provenance.
	
	Selected observable profiles are converted into a finite relation through
	admissible target-independent discretization. The target is used only
	afterwards to audit twin-admissibility and induce the declarative theory.
	A theory is certified as an exact observational declarative twin when it
	completely and correctly reconstructs the resulting finite task-conditioned
	discrete relation. Logical exactness is therefore separated from numerical,
	backend, provider, and discretization uncertainty, which is retained as
	audit metadata.
	
	Validation uses two complementary QML settings. Exhaustive Bars \& Stripes
	experiments compare product and grid-CZ embeddings from 16 to 100 qubits and
	exercise the native-discrete branch. Low-Depth MNIST analyses all 14,708
	digit-0/1 instances before and after a trained variational quantum
	transformation and exercises continuous discretization. In every reported
	relation, the induced QILP-0 theory achieves complete, conflict-free
	reconstruction with strict accuracy equal to one.
\end{abstract}

\begin{highlights}
	\item A target-independent pipeline links quantum observables and logical induction.
	\item Reference-relative coverage tracks complete exact-support layers.
	\item Latent geometry maps reproducibly back to original quantum observables.
	\item A declarative explainability approach for observed quantum-circuit behaviour.
\end{highlights}

\begin{keywords}
quantum machine learning \sep symbolic artificial intelligence \sep inductive logic programming \sep explainable machine learning \sep quantum observables
\end{keywords}

\maketitle

\section{Introduction}
\label{sec:introduction}

Quantum machine learning (QML) combines quantum information processing with
learning procedures in order to construct representations, kernels,
classifiers, or trainable quantum models. This combination introduces
explanatory questions that are not exhausted by the interpretability of the
classical learning component. A classical input may first be transformed by a
quantum embedding, subsequently processed by a parametrized quantum circuit,
and finally exposed only through a restricted set of measurements. Even when
the surrounding learning algorithm is familiar, the quantum representation
itself can therefore become an additional source of opacity
\cite{PiraFerrie2024,GilFusterEtAl2024}.

Two common QML settings illustrate this issue. In quantum-kernel and related
hybrid approaches, a classical datum is encoded into a quantum state before a
classical learning stage operates on similarities or measurements derived from
that representation. The embedding can substantially reorganize the geometry
seen by the downstream learner. In variational approaches, a trainable quantum
transformation further changes that representation according to a supervised
objective. In both cases, inspecting only the final prediction leaves open a
different question: what observable relation has the quantum part of the
pipeline actually produced over the analysed data?

Most explainability methods address related but different objects, such as
feature attribution, gate relevance, local surrogate behaviour, visual
inspection, or intrinsically interpretable architectures. Recent work has also
shown that quantum data can support the discovery of compact and physically
meaningful latent representations followed by symbolic descriptions
\cite{deSchoulepnikoffEtAl2026QDisc}. These approaches demonstrate the growing
importance of interpretable representations for quantum data. The present work
pursues a complementary objective: rather than replacing the observational
vocabulary by a compact learned representation before symbolic reasoning, we
ask whether an explicitly scoped part of the observable behaviour of a quantum
circuit can be reconstructed as a finite, traceable declarative theory.

This question motivates QXymb, a methodological bridge between quantum
observations and symbolic reasoning. QXymb is intended to accommodate different
observable providers, numerical backends, discretization policies and
declarative engines. The present paper develops QILP-0, its current
order-0 specialization, in which the symbolic layer is a finite multi-valued
propositional logic program induced through the LFIT/PRIDE family.

The central object introduced in this work is an \emph{observational
	declarative twin}. The adjective observational is essential. QILP-0 does not
claim to reconstruct the complete quantum behaviour of a circuit for every
possible input, state, measurement, or execution condition. Instead, the
researcher declares an observational scope comprising a dataset, an observable
family, a reproducible structural grading, and an observational reference
horizon. QILP-0 then constructs a finite declarative representation of the
relation observed within that scope and records the provenance required to
trace symbolic literals back to discrete states and original quantum
observables.

For the Pauli realization used in this work, the structural family can extend
up to full \(n\)-qubit support, but a particular execution may declare a
smaller observational reference horizon according to scientific scope,
available data, provider capabilities, or technological and computational
constraints. Coverage statements are explicitly relative to that declared
reference. They therefore do not estimate the fraction of unobserved
higher-support geometry lying outside it.

This scope qualification also determines how exactness is understood.
Logical exactness is assessed with respect to the finite discrete relation
actually produced by the pipeline. If that relation is target-consistent and
the induced program covers every analysed row without conflicting predictions,
the resulting theory is an exact observational declarative twin of that
reported relation. Any uncertainty introduced earlier by sampling, numerical
approximation, a backend, or a provider concerns the correspondence between the
reported observable responses and their ideal counterparts and must be
recorded separately. Thus, QILP-0 does not hide uncertainty inside the
symbolic claim.

A second design requirement is that supervised information should not shape
the observational vocabulary before declarative induction. Observable
generation, structural traversal, geometric analysis, coverage estimation,
selection of original observable columns and discretization are therefore
performed independently of the target. The target enters only when the finite
relation is audited for twin-admissibility and the declarative theory is
induced. This separation is intended to prevent the explanation vocabulary
from being engineered retrospectively around the classes that it is later
asked to describe.

The continuous observational layer is incremental. In the present
instantiation, Pauli observables are graded by exact support and processed in
complete blocks. Their response profiles are analysed through a
target-independent SVD-based geometry, while progress is measured by
reference-relative coverage against the fixed observational reference defined
for the declared horizon. The geometric stage may reveal latent directions,
but those directions do not replace the symbolic variables. QILP-0 evaluates
the association of the original observable columns with the retained new
subspace and deterministically preserves the score-ranked set required by the
declared column-association threshold. Discretization and declarative induction
therefore operate on original observational variables rather than on latent
singular coordinates.

The declarative layer then converts the selected observable profiles into a
finite symbolic relation through an admissible target-independent
discretization. Native discrete states are preserved; genuinely continuous
columns are discretized at a resolution constrained by the available
statistical, numerical and, when present, backend information. Once
twin-admissibility has been established, PRIDE induces the propositional theory
used by QILP-0 and the pipeline verifies its row-level reconstruction. The
result is accompanied by a certificate recording the observational scope,
discretization audit, symbolic consistency, reconstruction metrics and
provenance.

The contributions of this paper are therefore the following:

\begin{enumerate}
	\item We define the \emph{observational declarative twin} as a
	scope-qualified finite logical reconstruction of an observed quantum
	relation, distinguishing logical exactness from the uncertainty of the
	underlying observable evaluation.

	\item We formulate the methodological conditions required to construct such
	a twin, including semantically traceable observable families, reproducible
	structural grading, a declared observational reference horizon,
	target-independent geometry, reference-relative incremental coverage,
	deterministic mapping from retained latent geometry back to original
	observable columns, admissible target-independent discretization,
	twin-admissibility, and a declarative engine satisfying the required
	reconstruction contract.

	\item We provide a constructive QILP-0 realization of these conditions.
	The current implementation uses Pauli observables, exact-support traversal,
	a fixed observational reference, an SVD-based continuous geometry,
	deterministic original-observable selection, target-independent
	observable-wise discretization, and LFIT/ PRIDE induction, while keeping the
	corresponding components conceptually replaceable inside QXymb.

	\item We make the construction auditable through explicit provenance and
	certification. Symbolic literals remain traceable to discrete states,
	numerical intervals or native values, original observables, supports and
	qubits, and exact-twin claims are issued only after consistency and
	row-level reconstruction checks.
	
	\item We validate the construction in two complementary QML settings.
	An exhaustive Bars \& Stripes study exercises the native-discrete branch
	on fixed product and grid-CZ embeddings from 16 to 100 qubits. A
	Low-Depth MNIST study exercises the continuous branch on all 14,708
	digit-0/1 instances and compares the same representation before and after a
	trained variational transformation. In every reported relation, the resulting
	QILP-0 theory provides complete, conflict-free reconstruction with strict
	reconstruction accuracy equal to one.

\end{enumerate}

The experiments are intended to validate the methodological construction
rather than to benchmark predictive quantum advantage. Bars \& Stripes isolates
the embedding stage as a controlled source of representational change, whereas
Low-Depth MNIST allows the observational geometry and declarative relation to
be compared before and after supervised variational processing. Together they
exercise the two branches of the declarative interface---native discrete and
continuous---and show how the same certificate semantics applies to both.

The remainder of the paper is organized as follows.
Section~\ref{sec:background} reviews explainability in QML, logical
formalizations of quantum computation, interpretable representation learning
from quantum data, and the LFIT family underlying the current declarative
engine. The following methodological sections introduce QXymb/QILP-0, formalize
the continuous and declarative conditions required by the construction, define
the observational declarative twin, and give its constructive realization and
certificate. Section~\ref{sec:results} reports the Bars \& Stripes and
Low-Depth MNIST experiments together with their cross-experiment
certification. The final section summarizes the conclusions, limitations and
main directions for future work.

\section{Background and Related Work}
\label{sec:background}

The context of this contribution includes current approaches to explainability
in QML, logical and formal descriptions of quantum computation, and methods for
inducing equivalent logical theories from processes represented as datasets.

\subsection{Explainability in QML}
\label{sec:related-xqml}

QML models that wrap classical ML engines with a quantum embedding level can
benefit from the same explainability tools used for their classical ML
component. These approaches are outside the scope of the present contribution,
which is centred on specifically quantum explainability and interpretability
approaches.

Current QML models for classical data, such as variational quantum circuits
(VQCs), commonly combine a data-encoding circuit, a trainable parametrized
circuit and one or more measurements. Their hybrid structure makes them
amenable to some classical explainability tools, but it also creates
specifically quantum difficulties. Intermediate quantum states are not
generally available as reusable layer activations; exact state descriptions
scale exponentially; measurements are probabilistic; and finite-shot noise can
affect both predictions and their explanations. Consequently, explainability
techniques developed for classical neural networks cannot always be transferred
without modification \cite{PiraFerrie2024,GilFusterEtAl2024}.

The emerging literature in this field can be organized into four complementary
families.

The first adapts \emph{post-hoc feature attribution and local surrogate
	methods} to quantum classifiers. Q-LIME, for example, extends local
model-agnostic explanation to quantum neural networks and explicitly considers
the randomness introduced by quantum measurements \cite{PiraFerrie2024}. Other
studies combine occlusion, gradient-based attribution and example influence to
explain QNN predictions \cite{TianYang2024}, or wrap quantum classifiers with
established LIME and SHAP procedures to obtain local and global feature-level
accounts \cite{KadianEtAl2025,DonKhalil2025}. Recent work has also begun to
examine the stability of local explanations for quantum classifiers
\cite{AcamporaVitiello2026}.

A second family attributes relevance to \emph{internal circuit components}
rather than to input features. Heese et al.\ adapt Shapley values to quantify
the contribution of gates or groups of gates to a task-dependent objective
\cite{HeeseEtAl2025}. These explanations are useful for circuit diagnosis and
design, but they remain attribution scores rather than declarative
descriptions of the observed input--output relation.

A third line develops \emph{quantum-aware analytical or intrinsically
	interpretable models}. Gil-Fuster et al.\ formulate a broader framework for
explainable QML and propose techniques tailored to parametrized quantum
circuits \cite{GilFusterEtAl2024}. Complementarily, concept-driven QNNs
introduce a human-interpretable concept layer into the model itself
\cite{TianYangConcept2024}. Such approaches are promising, although intrinsic
interpretability usually requires architectural commitments that are not
available when an already trained circuit must be analysed.

A fourth family treats explainability as a \emph{visual-analytics problem}.
VIOLET separates the encoding, ansatz and learned-feature views of a QNN and
combines circuit structure, parameter evolution and measurement information in
an interactive environment \cite{RuanEtAl2023}. This illustrates that QML
explanation may require coordinated views of several stages of the
quantum--classical pipeline rather than a single attribution vector.

Several adjacent research lines also deserve consideration when assessing the
scope of a symbolic approach. Classical surrogates can reproduce the
input--output behaviour of a quantum learning model, and shadow models can
transfer information obtained from quantum experiments to an efficiently
deployable classical predictor
\cite{SchreiberEisertMeyer2023,JerbiEtAl2024}. Their primary output is,
however, another predictive model rather than a logical theory whose literals
are directly traceable to selected observables. Other work infers compact
dynamical generators from trajectories of local quantum observables
\cite{CeminEtAl2024}; this produces interpretable equations for dynamics, but
does not induce a class-target rule theory from a dataset of circuit
executions. Finally, formal quantum logics, symbolic verification and quantum
logic programming use logical languages to specify, verify or execute quantum
processes \cite{BauerMarquartEtAl2023,Brunet2016,Balu2015}. Their direction is
therefore different from learning a logical description from the observed
behaviour of an already defined circuit.

A particularly close and rapidly developing line of work uses representation
learning to extract physically meaningful low-dimensional descriptions from
quantum data. Earlier work on operationally meaningful representations showed
that neural representations can be constrained so that their latent factors
retain an explicit physical interpretation, including compact
representations of two-qubit states that separate local information from
quantum correlations \cite{NautrupEtAl2022Operational}. More recently,
probabilistic variational autoencoders have been adapted to the intrinsic
randomness and correlations of quantum measurement data, with the explicit
goal of learning compact and physically interpretable latent representations
without prior labels or known order parameters
\cite{deSchoulepnikoffEtAl2025ProbVAE}. Related work on quantum-simulator
snapshots likewise uses unsupervised variational autoencoders to discover
minimal latent representations correlated with physically relevant variables
\cite{MollerEtAl2026MinimalManyBody}, while action-induced representations
provide a complementary route in which latent components are associated with
physical degrees of freedom through the experimental actions that affect them
\cite{MunozGilEtAl2026AIR}.

The recent QDisc framework brings this programme especially close to the
objectives considered here
\cite{deSchoulepnikoffEtAl2026QDisc}. QDisc processes quantum measurement data
with a probabilistic variational autoencoder, identifies structure in the
resulting compact latent representation, and subsequently applies symbolic
regression to obtain compact analytical descriptors that can act as order
parameters for the discovered regimes. It has been demonstrated on
experimental Rydberg-atom data, classical-shadow data and fermionic datasets.
The combination of interpretable representation learning and symbolic
regression makes QDisc an important neighbouring approach for
data-driven symbolic discovery in quantum systems.

The methodological objective of QILP-0 is nevertheless different. The
representation-learning line above deliberately searches for a compact latent
description containing the physically relevant factors needed to characterize
or reconstruct the observed phenomena. QILP-0, by contrast, does not replace
the declared observational vocabulary by learned latent coordinates before
symbolic induction. Its geometric stage organizes the observable-response
structure in a target-independent manner while preserving the association with
the original observables; the selected original observable profiles are then
discretized under an explicit observational-resolution contract, and logical
induction determines which conditions over those observable states are
sufficient to reconstruct the analysed relation. In this sense, compact
representation learning and QILP-0 place semantic simplification at different
points of the pipeline: the former deliberately compresses the representation
before symbolic description, whereas QILP-0 preserves the original
observational semantics through the representation-building stages and allows
the declarative theory to express the subsequent simplification.

The two approaches therefore address complementary scientific questions.
QDisc asks which compact, physically meaningful latent factors and analytical
expressions characterize structure discovered in quantum data. QILP-0 asks
which finite declarative theory reconstructs an explicitly scoped observed
relation while keeping every symbolic literal traceable to the observable
vocabulary from which that relation was constructed. This distinction concerns
the methodological object being sought rather than a preference for one
learning algorithm over another.

Within this landscape, QILP-0 targets a complementary level of explanation. It
is a post-hoc, dataset-level and representation-oriented pipeline. Instead of
assigning a relevance score only to original input features or individual
gates, it evaluates a declared family of quantum observables, organizes them
incrementally by structural support, and analyses their response geometry in a
target-independent manner. Progress is measured relative to a fixed
observational reference, and the retained latent geometry is mapped
deterministically back to original observable columns before target-independent
discretization and declarative induction.

The researcher controls the observational scope through a declared reference
horizon and an explicit stopping policy. For the Pauli realization considered
in this work, the structural support range extends up to the number of qubits,
but a particular execution may declare a smaller reference horizon according
to scientific scope, available observational data, provider capabilities, or
computational resources. QILP-0 reports coverage relative to that declared
reference and separately records any downstream endpoint reached before the
reference horizon is fully propagated. It therefore does not extrapolate its
coverage claim to unobserved higher-support structure.

The resulting records connect the observational geometry of the quantum
representation, the original observables and qubit supports involved, and a
symbolic account of the observed relation to the target. We call the resulting
object, formally defined in the methodological sections, an
\emph{observational declarative twin}. Every rule literal remains traceable to
an original observable, its qubit support, and a discrete state.

The distinctive contribution of QILP-0 lies in combining support-wise
target-independent observational traversal, reference-relative coverage
against a fixed declared reference, deterministic return from latent geometry
to original observable variables, admissible target-independent
discretization, logical rule induction, and an explicit certificate of
equivalence with the resulting finite observed relation. To the best of our
knowledge, we have not identified a previous framework that combines these
elements into a certified observational declarative twin with this scope and
provenance contract. Closely related approaches address feature or gate
attribution, predictive classical surrogates, compact interpretable latent
representations and symbolic physical descriptors, interpretable observable
dynamics, or logic-based specification and verification. These directions are
complementary to the finite declarative reconstruction pursued here.

\subsection{Formalization of quantum computations}
\label{sec:logic_quantum_circuits}

A complementary line of work, distinct from explainable QML, has investigated
the logical formalization and verification of quantum computation itself.
Quantum dynamic logics provide an important precedent in this direction.
Baltag and Smets' Logic of Quantum Programs (LQP) represents quantum
measurements, unitary transformations, locality and entanglement within a
dynamic-logical semantics, enabling properties of quantum programs and
protocols such as teleportation and quantum secret sharing to be specified and
proved formally \cite{BaltagSmets2006LQP}. Subsequent work extended this line
towards probabilistic reasoning and decidable logics for quantum algorithms,
including formal encodings of quantum search and distributed protocols
\cite{BaltagEtAl2014PLQP}, while a later overview consolidated quantum dynamic
logic as a framework for reasoning about the flow of quantum information and
for verifying quantum protocols \cite{BaltagSmets2022QDLOverview}.

This tradition is conceptually relevant to QXymb because it demonstrates that
quantum programs can be connected to explicit logical objects while
preserving information about their quantum structure. The direction pursued in
our present contribution is, however, essentially inductive rather than
deductive: instead of starting from a logical specification and proving that a
quantum program satisfies it, QILP-0 starts from the observable behaviour of an
existing circuit and induces a declarative theory that reconstructs that
behaviour within a declared observational scope. We therefore regard quantum
program logics as an important conceptual foundation and a complementary
formal layer. In future developments, this connection may also be exploited at
the QXymb level, for example by associating structured circuit
representations with formal specifications or verification interfaces, while
QILP-0 remains responsible for learning declarative descriptions from observed
circuit behaviour.

\subsection{Induction of propositional theories equivalent to datasets}
\label{sec:soa_ILP_propositional}

Although different inductive engines can synthesize declarative models from
examples and counterexamples represented in datasets, for reasons of space we
focus here on the family actually used by QILP-0: Learning From Interpretation
Transition (LFIT).

LFIT \cite{InoueRibeiroSakama2014} was proposed to automatically construct a model of the
dynamics of a system from observations of its state transitions. Given raw
data, such as time-series gene-expression data, a discretization of those data
in the form of state transitions is assumed. From those state transitions,
according to the semantics of the system dynamics, several inference
algorithms modelling the system as a logic program have been proposed. The
semantics of a system's dynamics can differ with regard to the synchronism of
its variables, the determinism of its evolution and the influence of its
history.

The LFIT framework proposes several modelling and learning algorithms to
tackle those different semantics. To date, the following systems have been
addressed: memoryless deterministic systems \cite{InoueRibeiroSakama2014}, systems with
memory \cite{TRFrontier15}, probabilistic systems \cite{DMTRICLP15} and their
multi-valued extensions \cite{TRICMLA15,DMTRICAPS16}. The work
\cite{TRILP2017} proposes a method that allows continuous time-series data to
be handled, with the abstraction itself learned by the algorithm.

In \cite{TRILP18,TRMLJ2020}, LFIT was extended to learn system dynamics
independently of its update semantics. That extension relies on a modelling of
discrete memoryless multi-valued systems as logic programs in which each rule
represents that a variable can take some value at the next state, extending
the formalism introduced in \cite{InoueRibeiroSakama2014,TRILP14}. The representation in
\cite{TRILP18,TRMLJ2020} is based on annotated logics
\cite{Blair1989135,blair1988paraconsistent}. Here, each variable corresponds
to a domain of discrete values. In a rule, a literal is an atom annotated with
one of these values. This allows annotated atoms to be represented as
classical atoms and hence preserves a propositional representation.

This modelling makes it possible to characterize optimal programs
independently of the update semantics and to represent the dynamics of a wide
range of discrete systems, including the finite multi-valued relation required
by QILP-0. LFIT can therefore be used to induce an equivalent propositional
logic program that provides a declarative explanation for each supplied
observation. The specific reconstruction properties required by QILP-0 and
their instantiation through PRIDE are formalized later as part of the
observational-twin contract rather than repeated here.

\section{Continuous observational layer: methodological conditions for target-independent construction}
\label{sec:observational-framework}

The constructive result developed later in this paper relies on a sequence of
methodological conditions that must be made explicit before the declarative
stage is defined. QILP-0 must first be able to represent the observed
behaviour of a quantum circuit numerically, interrogate that behaviour through
a semantically meaningful observable family, traverse the family according to
a reproducible structural grading, quantify the observational structure
revealed during that traversal, and stop under explicit conditions without
using the supervised target to guide the construction. Finally, the numerical
geometry used to organize the exploration must be mapped back to the original
observable vocabulary before symbolic induction.

The following subsections state these conditions and provide the mathematical
and bibliographic support used by the current QILP-0 construction. 
All guarantees remain relative to the declared observational
scope: the analysed inputs, observable provider, structural horizon,
backend or estimator, numerical tolerances, and subsequent discretization
policy.

\subsection{Condition 1: an observational behaviour matrix can be constructed}
\label{subsec:condition-observational-matrix}

The starting condition of QILP-0 is that the behaviour of the quantum system
under a finite collection of semantically identifiable input situations can
be represented through a real-valued observational dataset. This numerical
representation constitutes the continuous layer of the current methodology:
individual observable profiles may in practice take either continuously
varying values or a finite set of native values, but they are represented
before symbolic discretization as numerical columns in a real matrix.

Let \(m\) be the number of analysed input situations and let
\(\{x_i\}_{i=1}^{m}\) denote their classical descriptions. Classical inputs
can be embedded into the Hilbert space of a quantum system through
data-dependent quantum feature maps
\cite{schuldkilloran2019feature}. After the data-dependent preparation and any
subsequent quantum processing, let

\begin{equation}
	\rho(x_i)\in\mathbb C^{2^n\times 2^n}
\end{equation}

denote the resulting density operator for input \(x_i\). Thus,
\(\rho(x_i)\succeq0\) and
\(\operatorname{Tr}(\rho(x_i))=1\).

Let \(\{O_j\}_{j=1}^{p}\) be a declared family of \(p\) Hermitian observables
acting on the same Hilbert space. Quantum learning models commonly expose
properties of their processed states through expectation values of chosen
observables \cite{mitarai2018quantum}. QILP-0 therefore defines

\begin{equation}
	\begin{split}
	(X_{\mathrm{obs}})_{ij}
	=
	X_{ij}
	=
	\operatorname{Tr}\!\left[\rho(x_i)O_j\right],
	\qquad \\
	i=1,\ldots,m,\quad
	j=1,\ldots,p,
	\label{eq:qilp-observational-entry}
	\end{split}
\end{equation}

with
\(X_{\mathrm{obs}}\in\mathbb R^{m\times p}\).

Because \(O_j\) is Hermitian, every \(X_{ij}\) is real. Row \(i\) of
\(X_{\mathrm{obs}}\) is the observable profile produced by the quantum system
for input \(x_i\), whereas column \(j\) records the response of observable
\(O_j\) across the complete analysed dataset.

\paragraph{Intuition.}
The observational matrix can be viewed as a table of questions and answers.
Each observable asks one fixed question about the processed quantum state.
Each row corresponds to one input situation and records the answers obtained
for that situation. QILP-0 does not immediately collapse this table into a
prediction or into latent coordinates: the observable responses remain
individually identifiable because the later declarative description must be
able to refer back to what was actually observed.

Unlike approaches that immediately aggregate expectation values into a single
prediction, QILP-0 preserves the original observable profiles and their
identifiers. A separate target, when the analysed task is supervised, is not
used to construct \(X_{\mathrm{obs}}\). Once the analysed quantum producer is
fixed, QILP-0 does not access that target during Conditions~1--8; its first
access occurs in the twin-admissibility audit of Condition~9.

This first condition therefore does not claim that \(X_{\mathrm{obs}}\)
contains every physically available property of the circuit. It establishes
a finite observational representation of the circuit behaviour for the
declared inputs and observable family. The expressive adequacy of that
observable family is the second methodological condition.

\subsection{Condition 2: a semantically adequate observable family can be declared}
\label{subsec:condition-observable-family}

The first condition is deliberately agnostic about the internal structure of
the quantum behaviour being represented. It only establishes that, for a
declared collection of input situations, the circuit can be associated with a
real-valued observational dataset. The second condition asks a stronger
question: whether there exists a family of observables rich enough to provide
a complete description of the quantum state underlying those observations.

For an \(n\)-qubit system, such a possible family could be provided by the Pauli operators.
Their expectation values can be regarded as coordinates of the density
operator in a complete orthogonal operator basis. This makes the Pauli family
a particularly well-founded observational language for QILP-0: rather than
assuming a particular internal organization of the circuit behaviour, the
methodology starts from a basis capable, in the non-truncated case, of
expressing the complete state from which that behaviour is observed.

The QILP-0 observable provider used in the present work is the Pauli family.

For an \(n\)-qubit system, define
\(\sigma_0=I\), \(\sigma_1=X\), \(\sigma_2=Y\), and
\(\sigma_3=Z\). For every multi-index

\begin{equation}
	\boldsymbol{\alpha}
	=
	(\alpha_1,\ldots,\alpha_n)
	\in
	\{0,1,2,3\}^{n},
\end{equation}

let

\begin{equation}
	P_{\boldsymbol{\alpha}}
	=
	\sigma_{\alpha_1}\otimes\cdots\otimes\sigma_{\alpha_n},
	\qquad
	\mathcal P_n
	=
	\left\{
	P_{\boldsymbol{\alpha}}:
	\boldsymbol{\alpha}\in\{0,1,2,3\}^{n}
	\right\}.
	\label{eq:qilp-pauli-family}
\end{equation}

Tensor products of single-qubit Pauli matrices form a complete operator basis
on the \(n\)-qubit Hilbert space
\cite{siewert2022orthogonal,lawrence2002mutually}. Under the
Hilbert--Schmidt inner product,

\begin{equation}
	\operatorname{Tr}
	\left(
	P_{\boldsymbol{\alpha}}^{\dagger}
	P_{\boldsymbol{\beta}}
	\right)
	=
	2^n
	\delta_{\boldsymbol{\alpha}\boldsymbol{\beta}},
	\label{eq:qilp-pauli-hs-orthogonality}
\end{equation}

where
\(\delta_{\boldsymbol{\alpha}\boldsymbol{\beta}}\)
is the Kronecker delta. Since
\(\lvert\mathcal P_n\rvert=4^n\), equal to the dimension
\((2^n)^2\) of the linear operator space, the family is complete.
Consequently, every \(n\)-qubit density operator admits the expansion

\begin{equation}
	\rho
	=
	\frac{1}{2^n}
	\sum_{\boldsymbol{\alpha}\in\{0,1,2,3\}^{n}}
	\operatorname{Tr}
	\left(
	\rho P_{\boldsymbol{\alpha}}
	\right)
	P_{\boldsymbol{\alpha}}.
	\label{eq:qilp-pauli-density-expansion}
\end{equation}

Thus, the complete collection of non-identity Pauli expectation values,
together with
\(\operatorname{Tr}(\rho)=1\),
determines the state.

\paragraph{Intuition.}
Condition 1 states that QILP-0 can describe an execution by recording answers
to observable questions. Condition 2 asks whether the available vocabulary of
questions is rich enough to support the intended description. The full Pauli
family gives a particularly strong answer: in principle, its expectation
values form a complete coordinate system for the density operator. QILP-0
does not need to claim that every execution will ask every one of those
questions; instead, the complete family provides a well-founded observational
language from which a finite observational reference horizon can be declared.

QILP-0 therefore uses Pauli observables as the provider in the experiments
reported here, but the methodology is not restricted to this choice.
An alternative observable provider is admissible when it exposes a
declared and semantically traceable family, stable identifiers, a reproducible
organization of that family, and the provenance required to interpret the
resulting values. When incremental traversal is intended, the provider must
also define the meaning of its incremental blocks and the observational
reference horizon against which later coverage will be interpreted.

The provider metadata must document its semantics, intended observational
scope, observable identifiers, and any approximation, truncation, sampling,
numerical error, or backend-induced uncertainty affecting the expectation
values. These metadata remain available to the later geometric,
discretization, and symbolic stages.

The completeness of the full Pauli basis therefore supports this provider
choice, whereas any actual QILP-0 claim remains relative to the observable
family and observational reference horizon that were actually declared and
evaluated.

\subsection{Condition 3: the observable family admits a reproducible structural grading}
\label{subsec:condition-structural-grading}

QILP-0 does not know in advance which observables will provide the most useful
description of the analysed behaviour. Its incremental exploration therefore
requires a structural grading that can be defined before any target-dependent
analysis is performed.

For the Pauli provider used here, QILP-0 uses Pauli support.
Given

\begin{equation}
	P_{\boldsymbol{\alpha}}
	=
	\sigma_{\alpha_1}\otimes\cdots\otimes\sigma_{\alpha_n},
\end{equation}

its support and Pauli weight are

\begin{equation}
	\begin{split}
		\operatorname{supp}
		\left(
		P_{\boldsymbol{\alpha}}
		\right)
		&=
		\left\{
		q\in\{1,\ldots,n\}:
		\alpha_q\neq0
		\right\},
		\\
		\operatorname{wt}
		\left(
		P_{\boldsymbol{\alpha}}
		\right)
		&=
		\left|
		\operatorname{supp}
		\left(
		P_{\boldsymbol{\alpha}}
		\right)
		\right|.
		\label{eq:qilp-pauli-support-weight}
	\end{split}
\end{equation}

This agrees with the standard interpretation of Pauli weight as the number of
qubits on which an operator acts non-trivially
\cite{ippoliti2024classicalshadows}. Locality is also operationally relevant:
under local Pauli measurements, estimation cost can depend strongly on the
active support
\cite{huang2020predicting,ippoliti2024classicalshadows}.

For each \(k\in\{1,\ldots,n\}\), define the exact-support block

\begin{equation}
	\mathcal O^{(k)}
	=
	\left\{
	P\in\mathcal P_n:
	\operatorname{wt}(P)=k
	\right\},
	\label{eq:qilp-exact-support-block}
\end{equation}

with cardinality

\begin{equation}
	\left|
	\mathcal O^{(k)}
	\right|
	=
	\binom{n}{k}3^k.
	\label{eq:qilp-exact-support-count}
\end{equation}

The factor
\(\binom{n}{k}\)
selects the active qubits and
\(3^k\)
selects \(X\), \(Y\), or \(Z\) on each active position.

\paragraph{Intuition.}
Pauli support provides an observational zoom. Support one asks questions that
act non-trivially on one qubit at a time; support two allows joint questions
over pairs; support three over triples; and so on. Increasing support therefore
opens the observation to progressively larger groups of qubits. This is the
sense in which the original manuscript informally associated the traversal
with increasing complexity: larger supports permit more articulated joint
observations. Support itself is not, however, a direct measure of physical
interaction, entanglement, causal influence, informativeness, or explanatory
importance.

QILP-0 evaluates each declared exact-support block once, in increasing order
of \(k\), without regenerating lower supports. A grade is counted as processed
only when its exact-support block has been completed. No target is used to
alter this order.

For the Pauli provider, the exact-support grading has a structural maximum
\begin{equation}
	K_{\mathrm{phys}}=n,
	\label{eq:qilp-physical-support-horizon}
\end{equation}
because no Pauli word can act non-trivially on more than the \(n\) qubits of
the circuit.

A QILP-0 execution additionally receives a declared observational reference
horizon
\begin{equation}
	1\leq K_{\mathrm{ref}}\leq K_{\mathrm{phys}}.
	\label{eq:qilp-reference-support-horizon}
\end{equation}
The corresponding observational family is
\begin{equation}
	\mathcal O_{\leq K_{\mathrm{ref}}}
	=
	\bigcup_{k=1}^{K_{\mathrm{ref}}}
	\mathcal O^{(k)}.
	\label{eq:qilp-observational-horizon}
\end{equation}

The value of \(K_{\mathrm{ref}}\) is part of the declared observational scope,
not a quantity inferred by the target-dependent stages of QILP-0. It may be
specified from domain requirements, determined by the observables available
from a dataset or provider, or selected through an independent technological
or resource policy. If the complete provider horizon is available and relevant,
one may choose \(K_{\mathrm{ref}}=K_{\mathrm{phys}}\).

Every later statement about geometric mass or coverage is relative to
\(\mathcal O_{\leq K_{\mathrm{ref}}}\). Therefore, when
\(K_{\mathrm{ref}}<K_{\mathrm{phys}}\), QILP-0 makes no quantitative claim
about the fraction of observable geometry lying beyond the declared reference
horizon and does not claim complete physical or informational
characterization of the unrestricted quantum state.

Pauli support is the grading used in the present experiments, not a
restriction of the framework.

Another provider may participate in the
incremental traversal when it supplies a semantically traceable observable
family, stable identifiers, a reproducible grading into finite exact-grade
blocks, and an explicit horizon. The meaning of that grading and any
provider-dependent uncertainty must be recorded as provenance.

Condition 3 therefore supplies a reproducible order for exploration. It does
not decide which blocks are empirically informative. That question is
delegated to the target-independent geometric analysis of the next condition.

\subsection{Condition 4: observable profiles admit a target-independent geometric representation}
\label{subsec:condition-geometric-representation}

The structural grading of Condition 3 provides a reproducible order in which
the observable family can be explored, but it does not quantify redundancy,
independent variation, or the amount of observational structure contributed
by the corresponding profiles. Carrying every observable directly to the
declarative stage would preserve the complete declared vocabulary, but would
provide no geometric account of these relationships and could unnecessarily
propagate redundant dimensions to subsequent stages.

The current QILP-0 construction therefore introduces a target-independent
geometric stage before declarative induction. In the instantiation studied in
this work, singular value decomposition (SVD) provides the concrete mechanism
used to organize redundancy, identify numerically independent directions, and
support the incremental coverage analysis developed below. The existence of a
declarative representation is not claimed to depend uniquely on this
particular decomposition; rather, SVD is the geometric realization adopted,
formalized, and experimentally validated in the present QILP-0 construction.

This geometric mechanism is not specific to the Pauli provider. Once an
observable provider yields a finite real-valued matrix whose columns remain
semantically identifiable, the same matrix-geometric analysis can in
principle be applied independently of the physical nature of the observables.
Provider-specific structure---such as Pauli support---determines how the
observables are generated and graded, whereas SVD operates on the resulting
observable-response profiles.

The original matrix
\(X_{\mathrm{obs}}\in\mathbb R^{m\times p}\)
is preserved unchanged for subsequent semantic processing. QILP-0 constructs
a separate matrix for geometric analysis using standard centering and scaling
operations
\cite{shlens2014tutorial,bro2003centering,jolliffe2016pca}.

Let
\(x_j\in\mathbb R^m\)
denote column \(j\) of \(X_{\mathrm{obs}}\), and let
\(\mathbf 1_m\in\mathbb R^m\)
be the all-ones vector. Define

\begin{equation}
	\bar x_j
	=
	\frac{1}{m}
	\mathbf 1_m^T x_j,
	\label{eq:qilp-column-mean}
\end{equation}

\begin{equation}
	x_{c,j}
	=
	x_j-\bar x_j\mathbf 1_m,
	\qquad
	\widehat x_j
	=
	\frac{x_{c,j}}{\lVert x_{c,j}\rVert_2}.
	\label{eq:qilp-column-preprocessing}
\end{equation}

A column is valid for this normalized geometry when its entries are finite and
its centred norm is non-zero, or non-zero under the declared numerical
tolerance. If
\(J_{\mathrm{valid}}\subseteq\{1,\ldots,p\}\)
is the corresponding index set and
\(p_{\mathrm{valid}}=\lvert J_{\mathrm{valid}}\rvert\),

\begin{equation}
	X_{\mathrm{geom}}
	=
	\left[
	\widehat x_j
	\right]_{j\in J_{\mathrm{valid}}}
	\in
	\mathbb R^{m\times p_{\mathrm{valid}}}.
	\label{eq:qilp-geometric-matrix}
\end{equation}

Centering prevents common offsets from dominating the singular directions,
while unit-norm scaling prevents an observable from receiving greater initial
geometric weight merely because its empirical dispersion is larger.

For
\(j,\ell\in J_{\mathrm{valid}}\),

\begin{equation}
	\widehat x_j^T\widehat x_\ell
	=
	\frac{
		x_{c,j}^Tx_{c,\ell}
	}{
		\lVert x_{c,j}\rVert_2
		\lVert x_{c,\ell}\rVert_2
	}
	=
	\operatorname{corr}(x_j,x_\ell),
	\label{eq:qilp-profile-correlation}
\end{equation}

where
\(\operatorname{corr}\)
denotes empirical Pearson correlation. Consequently,

\begin{equation}
	X_{\mathrm{geom}}^T X_{\mathrm{geom}}
	=
	R,
	\label{eq:qilp-correlation-gram}
\end{equation}

where \(R\) is the correlation matrix of the valid observable profiles.
Because every valid column has unit Euclidean norm,

\begin{equation}
	\lVert X_{\mathrm{geom}}\rVert_F^2
	=
	\sum_{j\in J_{\mathrm{valid}}}
	\lVert\widehat x_j\rVert_2^2
	=
	p_{\mathrm{valid}}.
	\label{eq:qilp-normalized-geometric-mass}
\end{equation}

For a real matrix
\(A\in\mathbb R^{m\times p}\),
let

\begin{equation}
	A
	=
	U\Sigma V^T
	\label{eq:qilp-svd-definition}
\end{equation}

be its thin singular value decomposition. If
\(r=\operatorname{rank}(A)\),
then \(U\) and \(V\) have orthonormal columns and

\begin{equation}
	\Sigma
	=
	\operatorname{diag}
	\left(
	\sigma_1,\ldots,\sigma_r
	\right),
	\qquad
	\sigma_1\geq\cdots\geq\sigma_r>0,
	\label{eq:qilp-svd-dimensions}
\end{equation}

with the standard SVD interpretation
\cite{golub1970svd}. Moreover,

\begin{equation}
	\lVert A\rVert_F^2
	=
	\sum_{\ell=1}^{r}\sigma_\ell^2.
	\label{eq:qilp-svd-geometric-mass}
\end{equation}

The squared singular values therefore distribute the matrix's geometric mass
among orthogonal singular directions.

\paragraph{Intuition.}
After centering and scaling, each observable is represented by the shape of
its response across the same input situations rather than by its absolute
offset or raw amplitude. SVD then asks how many independent patterns are
needed to organize those response profiles. If many observables vary in
closely related ways, a small number of directions can describe much of the
geometry. If substantially more independent directions are required, the
observed representation is more articulated.

\paragraph{Interpretation.}
This geometry is useful to QILP-0 because it provides a target-independent
account of the diversity and structure of observable profiles. Within the
declared dataset and observational reference horizon, numerical rank,
singular-value structure, and later coverage growth can therefore be
interpreted as operational indicators of \emph{observational structural
richness}.
This is
the restricted sense in which the original formulation informally referred to
information or complexity. The quantities are not physical energy,
Hamiltonian expectation values, Shannon information, intrinsic quantum
variance, or a universal measure of circuit complexity.

The construction is closely related to standard PCA--SVD geometry
\cite{shlens2014tutorial,jolliffe2016pca}. PCA has also been applied to
covariance matrices of quantum observables within a fixed state
\cite{mosetti2016quantum}. QILP-0 studies a different object: observable
expectation-value profiles evaluated across multiple semantically identifiable
inputs. In addition, as Condition 7 will make explicit, the latent singular
coordinates are not used as the variables of the later symbolic theory.

\paragraph{Relation to supervised feature selection.}
The role of this geometric stage should not be confused with supervised feature
selection. A decision tree, random forest, mutual-information criterion, or
other target-based importance measure could identify variables that are useful
for predicting the supplied labels, but doing so would make the observational
vocabulary depend on the target that the later declarative theory is intended
to reconstruct. QILP-0 asks a different question at this stage: which
redundancies and numerically resolvable directions are present in the
observable-response matrix itself, before the target is consulted? SVD is used
here for that target-independent geometric purpose. Moreover, its latent
directions are not passed to the symbolic learner: the construction maps the
retained geometry back to original observable columns, as formalized in
Condition~7, so that the declarative vocabulary preserves its quantum
provenance.

\subsection{Condition 5: incremental geometric progress can be measured against a fixed reference}
\label{subsec:condition-fixed-reference}

Conditions 3 and 4 provide, respectively, an ordered sequence of observable
blocks and a geometry in which their empirical structure can be analysed.
QILP-0 next requires that progress through this sequence be measured against a
reference that is fixed independently of the incremental decisions themselves.

For every exact grade
\(k\in\{1,\ldots,K_{\mathrm{ref}}\}\),
let
\(X^{(k)}\in\mathbb R^{m\times p_k}\)
denote the submatrix of
\(X_{\mathrm{geom}}\)
formed by the valid \(p_k\) columns associated with
\(\mathcal O^{(k)}\).
All blocks share the same \(m\) observational rows. The normalized matrix of
the declared reference horizon is
\begin{equation}
	\begin{split}
		X_{\mathrm{ref}}
		&=
		\left[
		X^{(1)}
		\;X^{(2)}
		\;\cdots\;
		X^{(K_{\mathrm{ref}})}
		\right]
		\in
		\mathbb R^{m\times p_{\mathrm{ref}}},
		\\
		p_{\mathrm{ref}}
		&=
		\sum_{k=1}^{K_{\mathrm{ref}}}p_k.
		\label{eq:qilp-global-horizon-matrix}
	\end{split}
\end{equation}

The value \(K_{\mathrm{ref}}\) has already been declared before the fixed
reference is constructed. QILP-0 does not require a particular mechanism for
choosing it. What the construction does require is that every complete grade
included in the declared reference horizon contribute to the same fixed
reference.

A reference-calibration pass therefore obtains the observable responses
required for all complete grades
\(1,\ldots,K_{\mathrm{ref}}\), either by evaluating them through the declared
provider or by retrieving equivalent stored observational data, and
accumulates their contribution to the fixed geometric reference. The
implementation need not keep all observable columns simultaneously resident
in memory.

Only after this reference has been fixed does the incremental traversal begin.
The traversal may terminate at a support
\(K_{\mathrm{end}}<K_{\mathrm{ref}}\)
because a stopping criterion has been reached or because the next complete
grade cannot be propagated through the configured downstream processing.
A grade that belongs to the reference horizon may therefore contribute to
reference calibration without becoming a completed incremental layer. This
does not redefine the reference denominator.

QILP-0 fixes before incremental selection the reference row Gram matrix

\begin{equation}
	G_{\mathrm{ref}}
	=
	X_{\mathrm{ref}}X_{\mathrm{ref}}^T
	\in
	\mathbb R^{m\times m},
	\label{eq:qilp-global-gram}
\end{equation}

and the reference geometric mass

\begin{equation}
	M_{\mathrm{ref}}
	=
	\operatorname{Tr}
	\left(
	G_{\mathrm{ref}}
	\right)
	=
	\lVert X_{\mathrm{ref}}\rVert_F^2.
	\label{eq:qilp-total-geometric-mass}
\end{equation}

Because every valid column is normalized to unit Euclidean norm,
\(M_{\mathrm{ref}}=p_{\mathrm{ref}}\)
under the unweighted normalization defined in Condition 4, up to numerical
roundoff.

The norm-based definition is retained because it extends naturally to
weighted, uncertainty-aware, or provider-dependent variants.

For any
\(Q\in\mathbb R^{m\times r_Q}\)
with orthonormal columns,
\(Q^TQ=I_{r_Q}\),
where \(r_Q\) denotes the number of orthonormal basis vectors in \(Q\),
and therefore the dimension of the subspace represented by \(Q\),
let
\(P_Q=QQ^T\)
be the orthogonal projector onto
\(\operatorname{span}(Q)\).
Here,

\begin{equation}
	\operatorname{span}(Q)
	=
	\{Qa:a\in\mathbb{R}^{r_Q}\}
\end{equation}

denotes the column space generated by those basis vectors.
In the incremental construction introduced below, \(Q=Q_k\), so \(r_Q\)
corresponds to the number of independent geometric directions accumulated
up to grade \(k\).

The geometric mass of the reference matrix represented by this subspace is

\begin{equation}
	M_Q
	=
	\lVert Q^T X_{\mathrm{ref}}\rVert_F^2
	=
	\operatorname{Tr}
	\left(
	Q^T G_{\mathrm{ref}} Q
	\right).
	\label{eq:qilp-projected-geometric-mass}
\end{equation}

The complementary geometric mass, corresponding to the component of
\(X_{\mathrm{ref}}\)
orthogonal to
\(\operatorname{span}(Q)\),
is

\begin{equation}
	M_Q^\perp
	=
	\lVert
	(I_m-QQ^T)X_{\mathrm{ref}}
	\rVert_F^2.
	\label{eq:qilp-residual-geometric-mass}
\end{equation}

Since the projected and complementary components are orthogonal in the
Frobenius inner product, their geometric masses add exactly:

\begin{equation}
	M_{\mathrm{ref}}
	=
	M_Q
	+
	M_Q^\perp.
	\label{eq:qilp-geometric-mass-decomposition}
\end{equation}

The Frobenius norm, orthogonal projection, and SVD provide the standard
matrix-geometric framework underlying this construction
\cite{halko2011finding}.

\paragraph{Coverage-reference independence.}
The objects
\(X_{\mathrm{ref}}\),
\(G_{\mathrm{ref}}\),
and
\(M_{\mathrm{ref}}\)
are fixed before incremental selection begins. They are independent of the
supervised target, the basis eventually retained, the original observables
eventually passed to the symbolic layer, and the later stopping decision.
Hence, neither a coverage threshold nor a downstream resource endpoint can
alter the reference against which coverage is measured.

The traversal then asks what geometrically new behaviour appears when the
next exact-grade block is introduced. Before processing grade \(k\), let

\begin{equation}
	Q_{k-1}
	\in
	\mathbb R^{m\times r_{k-1}},
	\qquad
	Q_{k-1}^TQ_{k-1}=I_{r_{k-1}},
	\label{eq:qilp-previous-orthonormal-basis}
\end{equation}

span the left singular subspace accumulated from previous blocks. For
\(k=1\), \(Q_0\) is empty and
\(Q_0Q_0^T=0\).

The incoming block decomposes as

\begin{equation}
	X^{(k)}
	=
	Q_{k-1}Q_{k-1}^T X^{(k)}
	+
	\widetilde X^{(k)},
	\label{eq:qilp-block-parallel-residual-decomposition}
\end{equation}

where

\begin{equation}
	\widetilde X^{(k)}
	=
	\left(
	I_m-Q_{k-1}Q_{k-1}^T
	\right)
	X^{(k)}.
	\label{eq:qilp-incremental-residual}
\end{equation}

The first term is already represented by the accumulated subspace. The second
is the local geometric novelty. 
This residual characterizes geometric novelty relative to the previously
accumulated subspace. By itself, it does not measure reference-relative
coverage gain, predictive relevance, information-theoretic content, or
symbolic importance.

This is the standard decomposition underlying
incremental SVD updates
\cite{brand2002incremental,brand2006fast}, and

\begin{equation}
	Q_{k-1}^T
	\widetilde X^{(k)}
	=
	0.
	\label{eq:qilp-residual-orthogonality}
\end{equation}

Let

\begin{equation}
	\widetilde X^{(k)}
	=
	U^{(k)}
	\Sigma^{(k)}
	{V^{(k)}}^T
	\label{eq:qilp-residual-svd}
\end{equation}

be its thin SVD. QILP-0 uses a declared component-retention parameter
\(\tau_{\mathrm{dir}}\in(0,1]\)
to determine how much of the resolvable local residual geometry is retained
in the new subspace. In a direct SVD realization, the smallest leading set of
singular directions whose squared singular values reach the declared
fraction of the local residual geometric mass is retained, subject to the
configured numerical-rank tolerance. Equivalent residual-Gram or structured
implementations may realize the same retained subspace without explicitly
materializing the full SVD.

The component-retention parameter controls geometric directions only. It is
distinct from the original-observable selection parameter introduced in
Condition~7, which determines how much of the association between original
columns and the retained subspace must be preserved.

If
\(\mathcal R_k\)
is their index set,

\begin{equation}
	Q_{\mathrm{add}}^{(k)}
	=
	\left[
	u_\ell^{(k)}
	\right]_{\ell\in\mathcal R_k},
	\label{eq:qilp-added-directions}
\end{equation}
and

\begin{equation}
	Q_k
	=
	\left[
	Q_{k-1}
	\quad
	Q_{\mathrm{add}}^{(k)}
	\right].
	\label{eq:qilp-incremental-basis-update}
\end{equation}
so that
\(Q_{\mathrm{add}}^{(k)}
\in\mathbb R^{m\times|\mathcal R_k|}\).
Consequently,

\[
r_k
=
r_{k-1}
+
|\mathcal R_k|
\]

is the number of retained independent geometric directions accumulated after
processing grade \(k\). We refer to \(r_k\) as the
\emph{accumulated retained geometric dimension}.

After any required numerical reorthogonalization,
\(Q_k^TQ_k=I_{r_k}\).

The construction does not require a particular incremental-SVD
implementation. An explicit residual SVD, a structured thin-SVD update, or
an equivalent numerically stable implementation may be used, provided that
the retained subspace, orthogonality conditions, and provenance remain
consistent with the definitions above.

\paragraph{Intuition.}
Before asking what support \(k\) contributes, QILP-0 removes everything in
that block that can already be represented using directions discovered at
lower supports. The residual therefore asks: \emph{what is genuinely new in
	the observable geometry at this level?} The answer enlarges the accumulated
basis, but progress is always evaluated against the same declared reference
horizon.

Four related quantities make the distinction between local novelty and
reference-relative progress explicit.
The geometric mass of the incoming block is

\begin{equation}
	M_{\mathrm{block}}(k)
	=
	\lVert X^{(k)}\rVert_F^2,
	\label{eq:qilp-block-geometric-mass}
\end{equation}

and its local residual mass is

\begin{equation}
	M_{\mathrm{res}}^{\mathrm{local}}(k)
	=
	\lVert\widetilde X^{(k)}\rVert_F^2.
	\label{eq:qilp-local-residual-mass}
\end{equation}

After processing grade \(k\), the mass of the declared reference horizon
represented by the accumulated basis is

\begin{equation}
	M_{\leq k}^{\mathrm{ref}}
	=
	\operatorname{Tr}
	\left(
	Q_k^T G_{\mathrm{ref}} Q_k
	\right)
	=
	\lVert
	Q_k^T X_{\mathrm{ref}}
	\rVert_F^2.
	\label{eq:qilp-global-represented-mass}
\end{equation}

The corresponding increment in represented reference mass is

\begin{equation}
	\Delta M_{\mathrm{ref}}(k)
	=
	M_{\leq k}^{\mathrm{ref}}
	-
	M_{\leq k-1}^{\mathrm{ref}}.
	\label{eq:qilp-global-geometric-increment}
\end{equation}

In general,

\begin{equation}
	\Delta M_{\mathrm{ref}}(k)
	\neq
	M_{\mathrm{res}}^{\mathrm{local}}(k).
	\label{eq:qilp-local-global-mass-distinction}
\end{equation}

The local residual is evaluated only on the incoming block and may contain
directions that are not retained. By contrast,
\(\Delta M_{\mathrm{ref}}(k)\)
is evaluated against the complete fixed reference and may also account for
structure in other blocks of that reference that is represented by the newly
retained directions.

\paragraph{Intuition.}
A book provides a useful mental model. Suppose exact-support blocks are
chapters read in order.
\(M_{\mathrm{block}}(k)\)
is how much normalized observational structure chapter \(k\) contains;
\(M_{\mathrm{res}}^{\mathrm{local}}(k)\)
is the part of that chapter that the previous summary could not already
represent;
\(M_{\leq k}^{\mathrm{ref}}\)
is how much of the complete declared reference can be represented by the
summary built after reading through chapter \(k\); and
\(\Delta M_{\mathrm{ref}}(k)\)
is the improvement, measured against that same fixed reference, attributable
to the new directions retained at this step. The reference book is fixed
before the incremental accounting begins. This is why local novelty cannot
simply be accumulated and interpreted as reference-relative coverage.

All these quantities refer to normalized observational geometry within the
declared reference horizon. Their role is to operationalize the amount of
observable structure represented at each stage. They are not exact values of
physical energy, Shannon information, computational complexity, or
target-dependent relevance.

\subsection{Condition 6: the traversal is monotone and admits explicit stopping conditions}
\label{subsec:condition-stopping}

QILP-0 requires formal guarantees that the incremental traversal has
measurable progress and an explicit endpoint.

Let
\(X_{\mathrm{ref}}\in\mathbb R^{m\times p_{\mathrm{ref}}}\)
be the fixed normalized reference matrix of Condition 5 and assume
\begin{equation}
	M_{\mathrm{ref}}
	=
	\lVert X_{\mathrm{ref}}\rVert_F^2
	>
	0.
	\label{eq:qilp-positive-total-geometric-mass}
\end{equation}

After exact-support blocks through grade \(k\) have been processed, let
\(Q_k\)
be the accumulated orthonormal basis. QILP-0 defines
\emph{reference-relative observational coverage} as
\begin{equation}
	C_{\mathrm{ref}}(k)
	=
	\frac{
		M_{\leq k}^{\mathrm{ref}}
	}{
		M_{\mathrm{ref}}
	}
	=
	\frac{
		\lVert Q_k^TX_{\mathrm{ref}}\rVert_F^2
	}{
		\lVert X_{\mathrm{ref}}\rVert_F^2
	}.
	\label{eq:qilp-global-coverage}
\end{equation}

Thus,
\(C_{\mathrm{ref}}(k)\)
is the fraction of normalized geometric mass of the declared reference horizon
represented by
\(\operatorname{span}(Q_k)\).
When
\(K_{\mathrm{ref}}<K_{\mathrm{phys}}\),
this quantity is not an estimate of the fraction of the unrestricted Pauli
geometry represented beyond \(K_{\mathrm{ref}}\).

This parallels proportions of total variance in PCA
\cite{shlens2014tutorial,jolliffe2016pca},
while the projection geometry follows the standard framework summarized in
\cite{halko2011finding}.

Since

\begin{equation}
	X_{\mathrm{ref}}
	=
	Q_kQ_k^T X_{\mathrm{ref}}
	+
	(I_m-Q_kQ_k^T)X_{\mathrm{ref}}
	\label{eq:qilp-global-orthogonal-decomposition}
\end{equation}

is an orthogonal decomposition,

\begin{equation}
	1-C_{\mathrm{ref}}(k)
	=
	\frac{
		\lVert
		(I_m-Q_kQ_k^T)X_{\mathrm{ref}}
		\rVert_F^2
	}{
		\lVert X_{\mathrm{ref}}\rVert_F^2
	}.
	\label{eq:qilp-normalized-global-residual}
\end{equation}

The accumulated bases are nested,

\begin{equation}
	Q_k
	=
	\left[
	Q_{k-1}
	\quad
	Q_{\mathrm{add}}^{(k)}
	\right],
	\qquad
	Q_{k-1}^TQ_{\mathrm{add}}^{(k)}=0,
	\label{eq:qilp-nested-basis}
\end{equation}

and therefore

\begin{equation}
	C_{\mathrm{ref}}(k)
	=
	C_{\mathrm{ref}}(k-1)
	+
	\frac{
		\left\lVert
		{Q_{\mathrm{add}}^{(k)}}^T X_{\mathrm{ref}}
		\right\rVert_F^2
	}{
		M_{\mathrm{ref}}
	}.
	\label{eq:qilp-coverage-monotone-update}
\end{equation}

Hence,

\begin{equation}
	0
	\leq
	C_{\mathrm{ref}}(k-1)
	\leq
	C_{\mathrm{ref}}(k)
	\leq
	1,
	\label{eq:qilp-coverage-bounds}
\end{equation}

and

\begin{equation}
	\Delta C_{\mathrm{ref}}(k)
	=
	C_{\mathrm{ref}}(k)-C_{\mathrm{ref}}(k-1)
	=
	\frac{
		\Delta M_{\mathrm{ref}}(k)
	}{
		M_{\mathrm{ref}}
	}
	\geq0.
	\label{eq:qilp-coverage-increment}
\end{equation}

\paragraph{Interpretation.}
Coverage answers a restricted and operational question: how much of the
observable-response geometry that QILP-0 committed to inspect is already
represented by the directions discovered up to support \(k\)? A high coverage
means that little additional geometric mass remains unresolved inside that
declared reference horizon. It does not imply that the unrestricted quantum
state, every possible observable, or all physical information has been
reconstructed.

The stopping policy is evaluated only after a complete exact-grade block. Let
\(\tau_{\mathrm{cov}}\in(0,1]\)
be the coverage target,
\(\tau_{\mathrm{plateau}}>0\)
the small-increment threshold, and
\(s_{\mathrm{plateau}}\in\mathbb N\)
the number of consecutive small increments required to confirm a plateau. The
reference configuration used in the experiments is

\begin{equation}
	\tau_{\mathrm{cov}}=0.99,
	\qquad
	\tau_{\mathrm{plateau}}=0.005,
	\qquad
	s_{\mathrm{plateau}}=2.
	\label{eq:qilp-reference-thresholds}
\end{equation}

The traversal reaches an endpoint under one of the following conditions:

\begin{enumerate}
	\item \textbf{Reference coverage reached:}
	\(C_{\mathrm{ref}}(k)\geq\tau_{\mathrm{cov}}\).
	
	\item \textbf{Plateau confirmed, when enabled as a stopping condition:}
	the last
	\(s_{\mathrm{plateau}}\)
	complete blocks satisfy
	\(\Delta C_{\mathrm{ref}}(t)<\tau_{\mathrm{plateau}}\).
	
	\item \textbf{Downstream resource limit:}
	the next complete grade
	\(k+1\leq K_{\mathrm{ref}}\)
	cannot be propagated through the configured incremental/downstream
	processing stages. No partial subset of that grade is substituted for the
	declared block.
	
	\item \textbf{Reference horizon exhausted:}
	\(k=K_{\mathrm{ref}}\).
\end{enumerate}

The support of the last completed incremental layer is recorded as
\(K_{\mathrm{end}}\).
Hence,
\[
1\leq K_{\mathrm{end}}\leq K_{\mathrm{ref}}\leq K_{\mathrm{phys}}.
\]

The thresholds, active conditions, and final stop reason are recorded as
execution metadata.
If
\(M_{\mathrm{ref}}=0\),
coverage is undefined and the execution terminates with a
degenerate-geometry status. Numerical implementations also monitor
\(\lVert Q_k^TQ_k-I_{r_k}\rVert\)
and diagnose any decrease in coverage beyond the declared numerical
tolerance.

\paragraph{Intuition.}
The process can be imagined as progressively improving a summary while the
object to be summarized remains fixed. Coverage tells us when the summary
already represents almost all the geometric structure that was declared in
scope. A plateau warns that successive increases in structural grade are
adding very little. 
A resource endpoint says something different: it does not
claim convergence, only that the next complete observational layer cannot be
admitted and propagated through the configured downstream stages under the
declared budget.

The plateau criterion is particularly useful as diagnostic information in an
interactive execution. A domain expert may decide whether a warning justifies
stopping or whether exploration should continue. In autonomous executions, the
same condition can be configured as a warning rather than an unconditional
stop; its occurrence is nevertheless stored in the provenance.

The fixed reference may be accumulated through a reference-calibration pass
without requiring all exact-support blocks to be simultaneously resident in
memory. Reference construction and downstream block processing are distinct
resource events. The former must obtain the contribution of every complete
grade in the declared reference horizon; the latter may additionally require
residualization, factorization, original-column selection, downstream
materialization, discretization, and declarative processing.

Consequently, a grade can contribute to
\(G_{\mathrm{ref}}\)
during reference calibration and still fail to become a completed incremental
layer. Such an endpoint changes
\(K_{\mathrm{end}}\)
but not
\(K_{\mathrm{ref}}\),
\(G_{\mathrm{ref}}\), or
\(M_{\mathrm{ref}}\).

\subsection{Condition 7: latent geometry can be mapped back to the original observable vocabulary}
\label{subsec:condition-original-vocabulary}

The previous conditions establish a latent geometric description of the
observational matrix. For QILP-0, however, a declarative explanation is useful
only if the variables that reach the symbolic layer remain semantically
identifiable. The final condition of the continuous layer is therefore that
the retained geometry can be related back to the original observable
vocabulary without substituting latent components for physical observables.

For exact grade \(k\), recall

\begin{equation}
	\widetilde X^{(k)}
	=
	U^{(k)}
	\Sigma^{(k)}
	{V^{(k)}}^T.
	\label{eq:qilp-residual-svd-selection}
\end{equation}

Let
\[
r_k^{\mathrm{res}}
=
\operatorname{rank}
\left(
\widetilde X^{(k)}
\right).
\]

Writing

\begin{equation}
	\widetilde X^{(k)}
	=
	\left[
	\widetilde x_1^{(k)}
	\;\cdots\;
	\widetilde x_{p_k}^{(k)}
	\right],
\end{equation}

the \(j\)-th residualized observable profile has the singular-triplet
expansion

\begin{equation}
	\widetilde x_j^{(k)}
	=
	\sum_{\ell=1}^{r_k^{\mathrm{res}}}
	\sigma_\ell^{(k)}
	v_{j\ell}^{(k)}
	u_\ell^{(k)}.
	\label{eq:qilp-column-triplets}
\end{equation}

Equivalently,

\begin{equation}
	{u_\ell^{(k)}}^T
	\widetilde x_j^{(k)}
	=
	\sigma_\ell^{(k)}
	v_{j\ell}^{(k)},
	\label{eq:qilp-singular-association-coordinate}
\end{equation}

so
\(\sigma_\ell^{(k)}v_{j\ell}^{(k)}\)
is the coordinate of residualized observable profile \(j\) along left singular
direction \(\ell\).

If
\(\mathcal R_k\)
is the set of retained singular directions, define

\begin{equation}
	a_j^{(k)}
	=
	\left(
	\sigma_\ell^{(k)}
	v_{j\ell}^{(k)}
	\right)_{\ell\in\mathcal R_k}
	\in
	\mathbb R^{|\mathcal R_k|},
	\label{eq:qilp-observable-association-vector}
\end{equation}

with magnitude

\begin{equation}
	\alpha_j^{(k)}
	=
	\left\|a_j^{(k)}\right\|_2
	=
	\sqrt{
		\sum_{\ell\in\mathcal R_k}
		\left(
		\sigma_\ell^{(k)}
		v_{j\ell}^{(k)}
		\right)^2
	}.
	\label{eq:qilp-observable-association-magnitude}
\end{equation}

These quantities describe the association between original residualized
observable profiles and the newly retained geometric directions. The
singular-triplet interpretation follows standard SVD and incremental-SVD
geometry
\cite{brand2002incremental,brand2006fast},
while the distinction between latent directions and original variables is
consistent with the usual PCA--SVD interpretation
\cite{shlens2014tutorial,jolliffe2016pca}.

The retained directions define a geometric subspace, but QILP-0 does not use
their latent coordinates as variables of the symbolic dataset. Instead, it
maps that subspace back to the valid original observable columns of the same
exact-grade block.

Let
\(\mathcal V_k\)
denote the ordered index set of valid, non-constant original observable
columns of grade \(k\), and let
\(Q_{\mathrm{add}}^{(k)}\)
be an orthonormal basis for the new subspace retained at that grade. For
\(j\in\mathcal V_k\), define the original-observable association score

\begin{equation}
	s_j^{(k)}
	=
	\left\|
	{Q_{\mathrm{add}}^{(k)}}^T
	x_j^{(k)}
	\right\|_2^2,
	\label{eq:qilp-original-observable-score}
\end{equation}

where
\(x_j^{(k)}\)
is the normalized geometric profile of the original observable before local
residualization. Since
\(Q_{\mathrm{add}}^{(k)}\)
is orthogonal to the previously accumulated subspace,

\begin{equation}
	s_j^{(k)}
	=
	\left\|
	{Q_{\mathrm{add}}^{(k)}}^T
	\widetilde x_j^{(k)}
	\right\|_2^2.
	\label{eq:qilp-original-observable-score-residual}
\end{equation}

In an explicit residual-SVD realization in which
\(Q_{\mathrm{add}}^{(k)}\)
is formed by the retained left singular directions, this score is equivalently

\begin{equation}
	s_j^{(k)}
	=
	\sum_{\ell\in\mathcal R_k}
	\left(
	\sigma_\ell^{(k)}
	v_{j\ell}^{(k)}
	\right)^2
	=
	\left(
	\alpha_j^{(k)}
	\right)^2.
	\label{eq:qilp-original-observable-score-svd}
\end{equation}

Thus, the score depends on the retained geometric subspace rather than on a
particular choice of orthonormal basis within that subspace.

Let

\begin{equation}
	S_k
	=
	\sum_{j\in\mathcal V_k}
	s_j^{(k)}
	\label{eq:qilp-total-original-observable-score}
\end{equation}

be the total original-column association score for grade \(k\). If
\(S_k=0\), the retained subspace has no measurable association with any valid
original column under the declared numerical policy and no new symbolic
observable is added from that grade.

Otherwise, let
\(\pi_k\)
be the deterministic permutation of
\(\mathcal V_k\)
that orders columns by non-increasing score,

\begin{equation}
	s_{\pi_k(1)}^{(k)}
	\geq
	s_{\pi_k(2)}^{(k)}
	\geq
	\cdots,
	\label{eq:qilp-original-observable-score-order}
\end{equation}

with exact score ties resolved by a stable target-independent structural
ordinal attached to every observable.

For a declared column-association coverage parameter
\(\tau_{\mathrm{col}}\in(0,1]\),
define

\begin{equation}
	r_k^\star
	=
	\min
	\left\{
	r:
	\frac{
		\sum_{t=1}^{r}
		s_{\pi_k(t)}^{(k)}
	}{
		S_k
	}
	\geq
	\tau_{\mathrm{col}}
	\right\}.
	\label{eq:qilp-original-observable-cutoff}
\end{equation}

The original observables selected from exact grade \(k\) are then

\begin{equation}
	\mathcal J_k
	=
	\left\{
	\pi_k(1),
	\ldots,
	\pi_k(r_k^\star)
	\right\}.
	\label{eq:qilp-original-observable-selected-set}
\end{equation}

This criterion selects the smallest deterministic score-ranked prefix whose
cumulative association with the retained new geometry reaches the declared
column threshold. It is not a supervised ranking, a fixed top-\(N\) rule, or
a replacement of original observables by latent components.

The selected sets are accumulated across processed grades. At the symbolic
boundary, their source columns are placed in a stable target-independent
structural order, which need not coincide with the score ranking used to
decide membership in
\(\mathcal J_k\).
Let
\(\mathcal J_{\leq k}\)
denote that ordered accumulation. The corresponding symbolic vocabulary is

\begin{equation}
	\mathcal O_{\mathrm{sym}}^{(\leq k)}
	=
	\left\{
	O_j:
	j\in\mathcal J_{\leq k}
	\right\}.
	\label{eq:qilp-symbolic-vocabulary}
\end{equation}

If
\(x_j\in\mathbb R^m\)
is the original continuous observational profile associated with observable
\(O_j\), the dataset delivered to discretization and declarative induction is

\begin{equation}
	X_{\mathrm{sym}}^{(\leq k)}
	=
	\left[
	x_j
	\right]_{j\in\mathcal J_{\leq k}}.
	\label{eq:qilp-symbolic-continuous-dataset}
\end{equation}

Consequently, the number of retained geometric directions need not equal the
number of original variables passed to the symbolic stage. Several original
observables may contribute to the same retained direction, and one observable
may contribute to several directions.

\paragraph{Intuition.}
SVD tells QILP-0 which new independent directions organize the observable
geometry, but those directions are not the vocabulary in which the final
theory should be written. A singular direction may mix many physically
identifiable observables. QILP-0 therefore returns to the original columns and
asks how strongly each of them participates in the newly retained geometric
subspace.

The first threshold,
\(\tau_{\mathrm{dir}}\),
controls how much of the local residual geometry is retained as numerical
directions. The second,
\(\tau_{\mathrm{col}}\),
controls how much of the resulting original-column association is preserved
when returning from that latent geometry to the observable vocabulary.
Accordingly, QILP-0 does not keep an observable because it predicts the target,
nor does it require every column with an arbitrarily small non-zero projection
to survive. It keeps the deterministic score-ranked set of original
observables required to preserve the declared fraction of their association
with the retained new geometry.

Thus, geometric reduction is used to decide which original observables must
be preserved, while the latent singular coordinates themselves never become
variables of the declarative theory.

This design supports the provenance chain

\begin{equation}
	\begin{split}
		\text{rule literal}
		&\longrightarrow
		\text{discrete state}
		\longrightarrow
		\text{interval or native value}
		\\
		&\longrightarrow
		\text{original observable}
		\longrightarrow
		\text{Pauli word} \\
		& \longrightarrow
		\text{support and qubits}.
	\end{split}
	\label{eq:qilp-traceability-chain}
\end{equation}

The current QILP-0 instantiation uses selected original observables as symbolic
atoms. For every admitted observable, the selection metadata record its source
identifier, exact grade, qubit support, structural ordinal, geometric
association score, and original provider provenance. The active geometric
selection thresholds are recorded separately with the corresponding
support/run configuration.

Future variants may use communities, effective operators, regions, or other
domain-defined semantic entities, but such alternatives require an explicit
translation layer and are outside the present construction.

All operations in Conditions 1--7 are target-independent. The target has not
been used to generate observables, define the structural grading, normalize
profiles, build the fixed observational reference, perform SVD, select
original observable columns, or determine the traversal endpoint. This
target-independence requirement also extends to the discretization of
Condition~8. Once the analysed quantum producer is fixed, QILP-0 first
accesses the supervised target in the twin-admissibility audit of
Condition~9; declarative induction follows only after that audit succeeds.

\section{Declarative reasoning layer: from continuous observations to a finite relation}
\label{sec:declarative-layer}

QILP-0 is organized into two main methodological layers. Conditions 1--7
formalized the continuous processing layer: quantum-circuit behaviour was
represented numerically, explored through a declared observable family,
organized through a target-independent geometry, while preserving a mapping
back to original, semantically traceable observable columns and, through their
provenance, to the qubits on which those observables act.

The declarative reasoning layer begins from that continuous observational
vocabulary. Its first task is not yet to induce rules, but to construct a
finite symbolic representation without using the supervised target to shape
the representation itself. Only after this representation has been fixed does
QILP-0 audit whether it is compatible with an exact deterministic
input--target relation on the analysed dataset.

Accordingly, this section states two additional methodological conditions.
Condition 8 specifies when a column-wise discretization is admissible for
QILP-0. Condition 9 checks, after discretization, whether the resulting finite
representation is twin-admissible for the observed target. Once both
conditions are satisfied, the finite relation presented to the inductive logic
engine can be defined, followed by the contractual definition of the
observational declarative twin that QILP-0 will construct in
Section~\ref{sec:constructive-realization}.

\subsection{Condition 8: each observable admits a target-independent finite discretization at a defensible resolution}
\label{subsec:condition-admissible-discretization}

The declarative stages of QILP-0 require finite-valued variables, whereas the
continuous observational layer preserves the numerical responses of the
selected observables. The transition between both layers therefore raises a
methodological question: for each observable, which finite symbolic
resolution can be justified by the values actually reported and by the
available information about their numerical or observational resolution?

This question is also motivated by the operational way in which quantum
observations are made available to software. A backend never exposes an
abstract real number with unlimited accessible precision: it returns a finite
numerical representation of the requested observable value. Depending on the
evaluation mechanism, the effective resolution may additionally be constrained
by finite sampling, numerical approximation, provider-specific tolerances, or
explicit uncertainty estimates. These mechanisms do not themselves define the
symbolic states required by QILP-0, but they make resolution an intrinsic part
of the observational contract.

QILP-0 therefore treats discretization as a controlled change of
representation. For each observable, it asks how many symbolic distinctions
can be supported by its reported profile and by the applicable resolution
information, rather than imposing an arbitrary common precision on all
observables.

QILP-0 addresses this question without attempting to recover an assumed set
of ``true'' discrete states underlying a continuous observable. Instead, it
constructs a finite symbolic domain whose granularity is constrained by the
empirical resolution of the observed profile and, when available, by
uncertainty or effective-resolution information supplied by the
backend or observable provider. The objective is to preserve the semantics of
the observable while avoiding symbolic distinctions finer than the available
observations can defend.

Let
\[
\mathcal S
=
\{O_1,\ldots,O_d\}
\]
denote the ordered set of original observables delivered by
Condition~7, and let

\begin{equation}
	X_{\mathrm{sym}}
	=
	\left[
	x_1
	\;\cdots\;
	x_d
	\right]
	\in
	\mathbb R^{m\times d}
	\label{eq:qilp-symbolic-input-matrix}
\end{equation}

be their continuous observational matrix over the same \(m\) analysed rows.
QILP-0 associates each selected observable \(O_j\) with a finite
discretization map
\(\Delta_j\).
The complete discretizer is therefore

\begin{equation}
	\Delta
	=
	\left(
	\Delta_1,\ldots,\Delta_d
	\right).
	\label{eq:qilp-columnwise-discretizer}
\end{equation}

The maps are fitted without using the supervised target. In the terminology
of classical discretization, the resulting construction is global, marginal,
static by column, and unsupervised
\cite{dougherty1995discretization}: each observable is partitioned using its
complete analysed profile, and the possible target does not participate in
determining its symbolic states.

\cite{dougherty1995discretization}. Here, \emph{global} means that the cuts
for one column are fitted from all analysed values of that column;
\emph{marginal} means that columns are discretized independently; and
\emph{unsupervised} means that the target does not participate in the fitting
process.

Each selected column is first classified as
\emph{constant}, \emph{native discrete}, or \emph{continuous}. Declared
provider metadata have priority. When those metadata are absent or
inconclusive, an empirical audit groups numerical values that are
indistinguishable under a declared column-wise tolerance.

Constant columns receive their unique state and their degeneracy is recorded.
If a column is declared or audited as an exact native discrete variable, its
distinguishable observed values are encoded bijectively and are not
requantized. Only genuinely continuous columns enter the bin-cardinality and
cut-construction procedure described below.

\paragraph{Intuition.}
A numerical observable profile already reaches QILP-0 through a finite
operational representation. Depending on the evaluation mechanism, its
meaningful resolution may be limited only by numerical representation, or
more strongly by sampling, approximation, or declared uncertainty. The
symbolic layer should respect those limits. A profile that is already
natively discrete should therefore preserve its existing states, whereas a
continuous profile should be partitioned only as finely as its empirical and
declared observational resolution can justify.

For a genuinely continuous column
\(x_j\in\mathbb R^m\), define

\begin{equation}
	\begin{split}
		R_j
		&=
		\max_{1\leq i\leq m}x_{ij}
		-
		\min_{1\leq i\leq m}x_{ij},
		\\
		\operatorname{IQR}(x_j)
		&=
		Q_{0.75}(x_j)-Q_{0.25}(x_j),
	\end{split}
	\label{eq:qilp-discretization-range-iqr}
\end{equation}

where
\(Q_\gamma(x_j)\)
denotes the empirical quantile of order
\(\gamma\).
The Freedman--Diaconis candidate width is

\begin{equation}
	h_{\mathrm{FD},j}
	=
	2\,\operatorname{IQR}(x_j)m^{-1/3},
	\label{eq:qilp-fd-width}
\end{equation}

and, whenever
\(h_{\mathrm{FD},j}>0\),
the corresponding candidate number of bins is

\begin{equation}
	q_{\mathrm{FD},j}
	=
	\left\lceil
	\frac{R_j}{h_{\mathrm{FD},j}}
	\right\rceil.
	\label{eq:qilp-fd-bins}
\end{equation}

The Freedman--Diaconis scale provides a data-dependent compromise between
bin width and sampling variability
\cite{freedman1981histogram}.

Let
\(q_{\mathrm{distinct},j}\)
denote the number of empirically resolvable value levels in column
\(x_j\).
Two numerical values are treated as indistinguishable when their separation
does not exceed a declared tolerance
\(\delta_j\).

Let
\(v_{j,(1)}\leq\cdots\leq v_{j,(m_j)}\)
be the ordered finite values of column
\(x_j\).
QILP-0 applies a deterministic ordered tolerance-grouping rule. The first
value initializes the first resolvable level and acts as its representative.
Each subsequent value is assigned to the current level when its distance from
the current representative does not exceed
\(\delta_j\); otherwise, it initializes a new level and becomes its
representative. The resulting number of groups defines
\(q_{\mathrm{distinct},j}\).

The distinguishability tolerance is

\begin{equation}
	\delta_j
	=
	\begin{cases}
		\tau_{\mathrm{native}},
		&
		\text{no backend uncert. available},
		\\[1mm]
		\max\!\left(
		\tau_{\mathrm{native}},
		\lambda_{\mathrm{noise}}\eta_j
		\right),
		&
		\text{otherwise},
	\end{cases}
	\label{eq:qilp-distinguishability-tolerance}
\end{equation}

where
\(\eta_j\geq0\)
is the backend-provided uncertainty or effective resolution when available,
\(\lambda_{\mathrm{noise}}>0\)
is a declared safety multiplier, and
\(\tau_{\mathrm{native}}\)
is the configured numerical fallback tolerance.

In the reference validation configuration,
\(\tau_{\mathrm{native}}=10^{-10}\)
is used as an operational numerical floor. It is not interpreted as a
backend-independent physical accuracy guarantee. Its value and provenance are
recorded and may be replaced by backend- or provider-specific information
when available.

\paragraph{Provider-aware refinement.}
The scalar
\(\tau_{\mathrm{native}}\)
used in the current reference implementation is a fallback rather than a
universal property of observable evaluation. The same observational contract
admits a more informative column-specific numerical floor when suitable
metadata are available:

\begin{equation}
	\tau_{\mathrm{native},j}
	=
	\max
	\left\{
	\tau_{\mathrm{repr},j},
	\tau_{\mathrm{backend},j},
	\tau_{\mathrm{provider},j},
	\tau_{\mathrm{cal},j}
	\right\},
	\label{eq:qilp-native-tolerance-decomposition}
\end{equation}

where the available terms denote, respectively, a numerical-representation
resolution, a declared backend resolution threshold, an observable-provider
approximation bound, and an optional reproducible empirical calibration
bound. Terms for which no applicable information is available are omitted.

Equation~\eqref{eq:qilp-native-tolerance-decomposition} specifies an
extensible refinement of the observational-resolution contract; it does not
claim that all four sources are available in the current reference
implementation or in every backend. When an applicable
\(\tau_{\mathrm{native},j}\)
is available, it can replace the scalar fallback
\(\tau_{\mathrm{native}}\)
in the distinguishability criterion of
Equation~\eqref{eq:qilp-distinguishability-tolerance}.
Otherwise, the configured fallback is retained.

This construction follows the standard metrological principle that
differences below the effective measurement resolution should not be promoted
to distinct empirical states
\cite{jcgm2008gum,nistResolution}. When expectation values are estimated from
finite quantum measurements,
\(\eta_j\)
may also encode the corresponding finite-sampling uncertainty
\cite{crawford2021finiteSampling}.

When the backend provides an uncertainty or effective resolution estimate
\(\eta_j>0\), QILP-0 additionally defines

\begin{equation}
	q_{\mathrm{SNR},j}
	=
	\max
	\left(
	1,
	\left\lfloor
	\frac{R_j}
	{\lambda_{\mathrm{noise}}\eta_j}
	\right\rfloor
	\right).
	\label{eq:qilp-snr-bins}
\end{equation}
Here,
\(\eta_j\)
is the uncertainty or effective-resolution estimate associated with observable
\(j\), and
\(\lambda_{\mathrm{noise}}>0\)
is a declared dimensionless safety multiplier applied to that estimate before
it is used as a distinguishability scale. Its configured value and provenance
are recorded as part of the discretization metadata.

This bound prevents the symbolic vocabulary from requesting distinctions at a
scale finer than the declared observational resolution.

For every continuous column, let
\(\mathcal B_j\)
be the set of available informative bounds,

\begin{equation}
	\mathcal B_j
	=
	\{
	q_{\mathrm{FD},j},
	q_{\mathrm{distinct},j}
	\}
	\cup
	\left\{
	q_{\mathrm{SNR},j}
	\;\middle|\;
	\eta_j>0
	\right\},
	\label{eq:qilp-available-bin-bounds}
\end{equation}

after removing undefined or non-informative candidates. The requested
cardinality is

\begin{equation}
	q_{\mathrm{requested},j}
	=
	\max
	\left(
	2,
	\min\mathcal B_j
	\right).
	\label{eq:qilp-requested-cardinality}
\end{equation}

Thus, the current
policy retains the most restrictive informative column-specific bound,
subject to the minimum cardinality required for a column that remains
classified as genuinely continuous.

A separate sample-sufficiency quantity is retained as a diagnostic warning.
Under the current policy it does not silently reduce
\(q_{\mathrm{requested},j}\)
for an otherwise variable column.

Once
\(q_{\mathrm{requested},j}\)
has been fixed, QILP-0 uses target-independent quantile boundaries by default.
For

\[
r=1,\ldots,q_{\mathrm{requested},j}-1,
\]

the provisional internal cut is

\begin{equation}
	c_{j,r}
	=
	Q_{r/q_{\mathrm{requested},j}}(x_j).
	\label{eq:qilp-quantile-cuts}
\end{equation}

Here,
\(Q_\gamma\)
denotes the empirical quantile function
.

Coincident or numerically indistinguishable cuts are collapsed under the
active tolerance. The effective number of states is therefore

\begin{equation}
	\begin{split}
	q_{\mathrm{effective},j}
	= \\
	1
	+
	\#\{
	\text{distinct retained internal cuts for column }j
	\},
	\label{eq:qilp-effective-cardinality}
	\end{split}
\end{equation}

and may be smaller than
\(q_{\mathrm{requested},j}\).

The fitted map
\(\Delta_j\)
assigns every finite observed value deterministically to exactly one ordered
state defined by the retained cuts. For native-discrete variables,
\(\Delta_j\)
is instead the recorded bijective encoding of the distinguishable native
values.

For every column, QILP-0 records its type, the source of that decision, all
available candidate bounds, requested and effective cardinalities, retained
cuts or native states, bin occupancies, numerical tolerances, uncertainty
metadata, warnings, observable identifier, exact grade, qubit support, and
provenance.

\paragraph{Intuition.}
The discretization does not attempt to discover the ``true bins'' of an
observable. Its purpose is more limited and auditable: to construct a finite
symbolic vocabulary whose granularity does not claim distinctions finer than
the observed profile and the declared observational resolution can support.

This preserves an important separation of responsibilities. The discretizer
determines which states can be defensibly distinguished for each observable;
relationships among the resulting observable states are subsequently exposed
to the logic-induction stage. The supervised target is not used to make those
states artificially convenient for the later reconstruction.

\begin{definition}[Admissible agnostic discretization]
	\label{def:admissible-discretization}

	Let
	\(\mathcal S=\{O_1,\ldots,O_d\}\)
	be the selected original observable vocabulary. An observable-wise
	discretization
	
	\begin{equation}
		\Delta
		=
		(\Delta_j)_{j=1}^{d}
		\label{eq:qilp-admissible-discretizer}
	\end{equation}
	
	is admissible for QILP-0 when it satisfies the following contract.
	
	\begin{enumerate}
		\item \textbf{Target independence.}
		The target values are not used to infer column types, cardinalities, cuts,
		state identifiers, or column order.
		
		\item \textbf{Totality, finiteness, and determinism.}
		Every finite reported value receives exactly one state from a finite domain.
		
		\item \textbf{Preservation of native discrete variables.}
		If a column is declared or empirically audited as an exact native discrete
		variable, its distinguishable observed values are preserved rather than
		requantized.
		
		\item \textbf{Bounded continuous resolution.}
		For a genuinely continuous column, the requested cardinality is determined
		from the available column-specific statistical, distinguishability, and,
		when available, uncertainty-based bounds. These bounds are computed without
		the target.
		
		\item \textbf{Numerical non-degeneracy.}
		Candidate cuts that are indistinguishable under the declared numerical or
		backend resolution are collapsed. The effective state count and all observed
		state occupancies are recorded.

		\item \textbf{Traceability and auditability.}
		Every discrete state is linked to its native value or numerical interval,
		the originating observable, its structural grade and qubit support, and the
		evaluation and uncertainty metadata required to interpret it.
	\end{enumerate}
\end{definition}

This definition is a methodological contract rather than a claim that one
universal optimal discretization exists. It specifies when the numerical
profile of each observable can be mapped to a finite and auditable symbolic
domain at a resolution justified by the available observations. The resulting
collection of discrete observable states provides the vocabulary on which the
joint declarative structure is subsequently induced.

The contract deliberately allows the defensible resolution to depend on the
information available from the observational mechanism. In the absence of
backend- or provider-specific uncertainty information, QILP-0 relies on the
empirical profile and the declared numerical fallback tolerance. When a
backend or observable provider supplies an applicable uncertainty,
resolution, or approximation bound, that information can further restrict the
symbolic resolution. In both cases, the source of the applicable resolution
criterion is retained as provenance rather than hidden inside the
discretization procedure. \\

For row
\(i\),
let

\begin{equation}
	\mathbf x_i^{\mathrm{sym}}
	=
	\left(
	x_{i1},\ldots,x_{id}
	\right)
	\in
	\mathbb R^d
\end{equation}

be its selected continuous observable vector. The fitted column-wise
discretizer produces the finite state

\begin{equation}
	\mathbf z_i
	=
	\Delta
	\left(
	\mathbf x_i^{\mathrm{sym}}
	\right)
	=
	\left(
	\Delta_1(x_{i1}),
	\ldots,
	\Delta_d(x_{id})
	\right).
	\label{eq:qilp-discrete-observational-state}
\end{equation}

At this point the target has still not participated in constructing the
symbolic representation. It enters only in the compatibility audit of the
next condition and, if that audit succeeds, in the subsequent inductive logic
stage.
Accordingly, the fixed numerical floor used by the reference implementation
is not part of the semantics of QILP-0 itself: the contract requires a
declared and traceable resolution criterion, which may be refined through
backend- or provider-specific information as in
Equation~\eqref{eq:qilp-native-tolerance-decomposition}.

\subsection{Condition 9: the discretized representation is twin-admissible for the observed target}
\label{subsec:condition-twin-admissibility}

An exact deterministic declarative reconstruction is possible only if the
target remains well defined after the independently fitted discretization.
QILP-0 therefore performs a compatibility audit only after
\(\Delta\)
has been fixed.

\begin{definition}[Twin-admissibility on the analysed dataset]
	\label{def:twin-admissible}
	
	Let
	\(\mathbf x_i^{\mathrm{sym}}\in\mathbb R^d\)
	be the selected continuous observational vector associated with analysed row
	\(i\), let
	
	\[
	\mathbf z_i
	=
	\Delta(\mathbf x_i^{\mathrm{sym}})
	\]
	
	be its finite discrete observational state, and let
	\(y_i\)
	denote the observed target associated with that row.
	
	An admissible agnostic discretization
	\(\Delta\)
	is twin-admissible for the analysed dataset and selected observable
	vocabulary when
	
	\begin{equation}
		\mathbf z_i
		=
		\mathbf z_j
		\quad\Longrightarrow\quad
		y_i=y_j
		\qquad
		\text{for all }i,j.
		\label{eq:qilp-target-consistency}
	\end{equation}
	
	Equivalently,
	
	\begin{equation}
		\Delta(\mathbf x_i^{\mathrm{sym}})
		=
		\Delta(\mathbf x_j^{\mathrm{sym}})
		\quad\Longrightarrow\quad
		y_i=y_j.
	\end{equation}
\end{definition}

This condition is audited after discretization; it is not used to choose
column types, cardinalities or cuts.

\paragraph{Intuition.}
QILP-0 does not tune the discretization until target conflicts disappear.
It first builds the symbolic vocabulary without the target and then asks
whether that independently obtained representation is sufficiently
discriminating to support an exact deterministic relation on the analysed
rows. If two rows become symbolically indistinguishable but require different
targets, the current configuration is not twin-admissible and QILP-0 must not
claim an exact observational declarative twin for it.

\begin{definition}[Observed discrete relation]
	\label{def:observed-discrete-relation}
	Let
	
	\begin{equation}
		\mathcal D
		=
		\{
		(x_i,y_i)
		\}_{i=1}^{m}
	\end{equation}
	
	be the finite analysed dataset. Once the selected original observable
	vocabulary
	\(\mathcal S\)
	and an admissible discretization
	\(\Delta\)
	have been fixed, the relation presented to the symbolic learner is
	
	\begin{equation}
		\mathcal R_{C,\mathcal D,\mathcal S,\Delta}
		=
		\left\{
		\left(
		\mathbf z_i,y_i
		\right)
		:
		i=1,\ldots,m
		\right\},
		\qquad
		\mathbf z_i
		=
		\Delta(\mathbf x_i^{\mathrm{sym}}).
		\label{eq:qilp-observed-discrete-relation}
	\end{equation}
\end{definition}

If repeated rows induce the same pair
\((\mathbf z_i,y_i)\),
they represent repeated observations of the same relation element. If the
same discrete state occurs with incompatible targets, the finite observed
relation is still constructed, but it is not deterministic with respect to the
target. QILP-0 records the conflicting states and the failed consistency audit
in the execution certificate. Condition~9 is therefore not satisfied and no
exact observational declarative twin is claimed for that configuration. In
the exact-twin branch, this audit terminates the construction before PRIDE
induction, while preserving the non-deterministic relation and its provenance
as an auditable result.

\subsection{Condition 10: the declarative engine can reconstruct the observed finite relation}
\label{sec:condition-PRIDE}

The previous conditions establish when the observational behaviour of the
analysed quantum circuit can be represented as a finite, target-consistent
relation over semantically traceable observable states. The final requirement
of the declarative layer concerns the inductive engine itself: given such a
finite relation, the engine must be able to construct a finite logic program
that reconstructs the observed target relation completely and correctly while
retaining the required notion of rule-body minimality.

QILP-0 instantiates this requirement through PRIDE, within the LFIT family.
Accordingly, the role of PRIDE in QILP-0 is not to discover the observational
vocabulary, determine its resolution, or repair target inconsistencies. Those
operations have already been completed by the preceding target-independent
stages. PRIDE receives the finite relation produced by Conditions~8 and~9 and
induces the declarative program used in the observational-twin construction.

\subsubsection*{PRIDE as inductive propositional logic engine}

GULA \cite{TRILP18,TRMLJ2020} and PRIDE \cite{ISTE2020} are particular implementations of the LFIT framework \cite{InoueRibeiroSakama2014}. 

In this work, PRIDE is used as the LFIT engine for the QILP-0 declarative
layer because it can induce complete and correct multi-valued propositional
rule theories from the finite relations constructed by the preceding stages.

In the present section we introduce notation and describe the fundamentals of both methods.

In the following, we denote by $\N := \{ 0, 1, 2, ... \}$
the set of natural numbers,
and for all $k, n \in \N$, $\segm{k}{n} := \{ i \in \N \mid k \leq i \leq n \}$
is the set of natural numbers between $k$ and $n$ included.
For any set $S$, the cardinal of $S$ is denoted $\card{S}$
and the power set of $S$ is denoted $\powerset(S)$.

Let $\V=\{\var_1,\dots,\var_n\}$ be a finite set of $n \in \N$ variables,
$\Val$ the set in which variables take their values and
$\dom : \V \rightarrow \powerset(\Val)$
a function associating a domain to each variable.
The atoms of \mvl\ (multi-valued logic) are of the form \vv\ where $\var\in\V$ and $\val\in\dom(\var)$.
The set of such atoms is denoted by $\A^{\V}_{\dom} = \{\vv \in \V \times \Val \mid \val \in \dom(\var) \}$
for a given set of variables $\V$
and a given domain function $\dom$.
In the following, we work on specific $\V$ and $\dom$
that we omit to mention when the context makes no ambiguity,
thus simply writing $\A$ for $\A^{\V}_{\dom}$.

\begin{example}
	For a system of 3 variables, the typical set of variables is $\V = \{ a, b, c \}$.
	In general, $\Val = \N$ so that domains are sets of natural integers, for instance:
	$\dom(a) = \{ 0, 1 \}$,
	$\dom(b) = \{ 0, 1, 2 \}$ and
	$\dom(c) = \{ 0, 1, 2, 3 \}$.
	Thus, the set of all atoms is:
	$\A = \{ a^0, a^1, b^0, b^1, b^2, c^0, c^1, c^2, c^3 \}$.
\end{example}

A \mvl\ rule $R$ is defined by:
\begin{equation}\label{discrete_rule}
	R ~~=~~ \vvi{0} \leftarrow \vvi{1} \wedge \cdots \wedge \vvi{m}
\end{equation}
where $\forall i \in \segm{0}{m}, \vvi{i} \in \A$ are atoms in \mvl{}
so that every variable is mentioned at most once in the right-hand part:
$\forall j,k \in \segm{1}{m}, j \neq k \Rightarrow \var_j \neq \var_k$.
Intuitively, the rule $R$ has the following meaning: the variable $\var_0$ can take the value $\val_0$ in the next dynamical step if for each $i \in \segm {1}{m}$, variable $\var_i$ has value $\val_i$ in the current dynamical step.

The atom on the left-hand side of the arrow is called the {\em head} of $R$ and is denoted $h(R) := \vvi{0}$.
The notation $\hvar{R} := \var_0$ denotes the variable that occurs in $h(R)$.
The conjunction on the right-hand side of the arrow is called the {\em body} of $R$, written $b(R)$
and can be assimilated to the set $\{\vvi{1},\dots,\vvi{m}\}$;
we thus use set operations such as $\in$ and $\cap$ on it.
The notation $\bvar{R} := \{ \var_1, \cdots, \var_m \}$ denotes the set of variables that occur in $b(R)$.

More generally, for any set of atoms $X \subseteq \A$, we denote $\setvar{X} := \{ \var \in \V \mid \exists \val \in \dom(\var), \vv \in X \}$ the set of variables appearing in the atoms of $X$.

A {\it multi-valued logic program} (\mvlp) is a set of \mvl\ rules.

\pref{def:domination} introduces a domination relation between rules
that defines a partial anti-symmetric ordering.
Rules with the most general bodies dominate the other rules.
In practice, these are the rules we are interested in since they cover the most general cases.

\begin{definition}[Rule Domination]
	\label{def:domination}
	Let $R_1$, $R_2$ be two \mvl\ rules.
	The rule $R_1$ {\em dominates} $R_2$, written $R_2 \leq R_1$ if $h(R_1) = h(R_2)$ and $b(R_1)\subseteq b(R_2)$.
\end{definition}

In \cite{TRMLJ2020}, the set of variables is divided into two disjoint subsets: $\T$ (for targets) and $\F$ (for features).
This distinction makes it possible to define dynamic multi-valued logic
programs that capture the dynamics of the problem considered in this paper.

\begin{definition}[Dynamic \mvlp]
	Let $\T \subset \V$ and $\F \subset \V$ such that $\F = \V \setminus \T$.
	A \dmvlp\ $P$ is a \mvlp\ such that $\forall R \in P,
	\hvar{R} \in \T$ and $\forall \vv \in b(R), \var \in \F$.
\end{definition}

The dynamical system we want to learn the rules of is represented by a succession of {\em states} as formally given by \pref{def:discrete_state}.
We also define the “compatibility” of a rule with a state in \pref{def:matching}.

\begin{definition}[Discrete state]
	\label{def:discrete_state}
	A {\em discrete state} $s$ on $\T$ (\resp $\F$) of a \dmvlp\ is a function from $\T$ (\resp $\F$) to $\mathbb{N}$, \ie it associates an integer value to each variable in $\T$ (\resp $\F$).
	It can be equivalently represented by the set of atoms
	$\{ \var^{s(\var)} \mid \var \in \T\text{ (\resp \F)} \}$
	and thus we can use classical set operations on it.
	We write \SallT\ (\resp \SallF) to denote the set of all discrete states of $\T$ (\resp $\F$),
	and a couple of states $(s,s') \in \SallF \times \SallT$ is called a \emph{transition}.
\end{definition}

\begin{definition}[Rule-state matching]
	\label{def:matching}
	Let $s \in \SallF$.
	The \mvl\ rule $R$ {\em matches} $s$, written $R\sqcap s$, if $b(R) \subseteq s$.
\end{definition}

In the present setting, an LFIT transition corresponds to one observed
feature--target pair in the finite relation presented to the learner.

When a rule matches a feature state, its body provides a sufficient condition
for the corresponding target conclusion within the learned relation.

The final program we want to learn should both:
\begin{itemize}
	\item match the observations in a complete (all observations are explained) and correct (no spurious explanation) way;
	\item represent only minimal necessary interactions (according to Occam's razor: no overly-complex bodies of rules)
\end{itemize}
GULA \cite{TRILP18,TRMLJ2020} and PRIDE \cite{ISTE2020} can produce such programs.

Formally, given a set of observations $T$, GULA \cite{TRILP18,TRMLJ2020} and PRIDE \cite{ISTE2020} will learn a set of rules $P$ such that all observations are explained: $\forall (s,s') \in T, \forall \vv \in s', \exists R \in P, R \sqcap s, h(R) = \vv$.
All rules of $P$ are correct w.r.t. $T$: $\forall R \in P, \forall (s1,s2) \in T, R \sqcap s1 \implies \exists (s1,s3) \in T, h(R) \in s3$ (if $T$ is deterministic, $s2 = s3$).
All rules are minimal w.r.t. $\F$: $\forall R \in P, \forall R' \in \mvlp, R'$ correct w.r.t. $T$ it holds that $R \leq R' \implies R' = R$.

The possible explanations of an observation are the rules that match the feature state of this observation.
The body of the rules gives minimal condition over feature variables to obtain its conclusions over a target variable.
Multiple rules can match the same feature state, thus multiple explanations can be possible.
Rules can be weighted by the number of observations they match to assert their level of confidence.
Output programs of GULA and PRIDE can also be used in order to predict and explain from unseen feature states by learning additional rules that encode when a target variable value is not possible as shown in the experiments of \cite{TRMLJ2020}.

\subsection{The observational declarative twin as a contractual object}
\label{subsec:observational-declarative-twin-definition}

Having established the conditions under which a finite, target-consistent and
traceable observational relation can be constructed, we can now state
precisely what QILP-0 considers an observational declarative twin.

The object reconstructed by QILP-0 is not the quantum producer \(C\) in
isolation. For fixed
\((C,\mathcal D,\mathcal S,\Delta)\),
Definition~\ref{def:observed-discrete-relation} constructs
\(\mathcal R_{C,\mathcal D,\mathcal S,\Delta}\), which couples the discrete
states obtained from the selected observables with the target values observed
on the analysed dataset. We refer to this finite object as the
\emph{task-conditioned observational relation induced by \(C\)} under the
declared observational scope. The qualifier \emph{task-conditioned} refers to
the presence of the target in the final relation; it does not alter the
target-independent construction of the observable vocabulary, geometry,
selection, or discretization.

\begin{definition}[Observational declarative twin]
	\label{def:observational-declarative-twin}
	A finite multi-valued logic program
	\(\mathcal T\)
	is an observational declarative twin of the task-conditioned observational
	relation induced by the circuit or quantum producer
	\(C\)
	under the declared scope
	\((\mathcal D,\mathcal O_{\leq K_{\mathrm{ref}}},\mathcal S,\Delta)\)
	when:
	
	\begin{enumerate}
		\item \(\mathcal T\) is complete and correct for
		\(\mathcal R_{C,\mathcal D,\mathcal S,\Delta}\);
		
		\item the induced program is a sufficient set of rules whose bodies are
		minimal with respect to the observed relation;
		
		\item every rule literal is traceable to an original selected observable,
		its discrete state or interval/native value, its structural grade, and the
		qubits on which the observable acts; and
		
		\item the declared equivalence scope explicitly records the analysed dataset,
		observable provider and horizon, stopping policy, observable-selection
		contract, discretization policy, backend or estimator, and relevant numerical
		or uncertainty metadata.
	\end{enumerate}
\end{definition}

When the declared scope is clear, we use the shorter expression
\emph{observational declarative twin of \(C\)} for this relation-qualified
object. The abbreviation must not be interpreted as equivalence to the complete
unitary semantics of \(C\), nor as a claim that the induced theory reproduces
every possible input--output behaviour of a predictor attached to \(C\). The
equivalence is observational and finite-scope: it concerns
\(\mathcal R_{C,\mathcal D,\mathcal S,\Delta}\) under the declared scope. It
also does not imply that the induced program is globally minimal in number of
rules.

\paragraph{Interpretation.}
The definition acts as a contract between the continuous and declarative
parts of QILP-0. Conditions 1--9 determine what has legitimately reached the
finite relation; the logic-induction stage must then reconstruct that relation
while preserving the provenance required to interpret its literals. The next
section shows that the implemented QILP-0 pipeline realizes this contract
constructively.

\section{Constructive realization and correctness of QILP-0}
\label{sec:constructive-realization}

The previous sections separated the QILP-0 construction into explicit
methodological conditions and defined the finite-scope object that the method
is intended to produce. We now make the construction operational.

The algorithms below are not introduced as an alternative formulation of the
method. They are the constructive realization of the contracts already
defined: target-independent observational discovery, preservation of original
observable semantics, column-wise agnostic discretization, post-discretization
twin-admissibility auditing, logic induction, and certification.

They specify the scientific behaviour required of an implementation rather
than one particular software API. Equivalent numerically stable
implementations may be used when they preserve the same observational scope,
subspaces, stopping semantics, original-variable traceability, discretization
contract, and recorded provenance.

\begin{algorithm}[t]
	\caption{QILP-0 observational-declarative-twin pipeline}
	\label{alg:qilp-main}
	\begin{algorithmic}[1]
		\Require Analysed dataset \(\mathcal D\); circuit or quantum producer \(C\);
						declared observable provider; reference horizon
		\(K_{\mathrm{ref}}\);
		exact-grade families
		\(\{\mathcal O^{(k)}\}_{k=1}^{K_{\mathrm{ref}}}\);
		stopping policy \(\pi\);
		discretization configuration \(\Theta_\Delta\)
		\Ensure Theory \(\mathcal T\), metadata \(\mathcal M\), and certificate
		\(\Gamma\)

		\State Obtain the complete observational blocks required by
		\(\mathcal O_{\leq K_{\mathrm{ref}}}\)
		and build the fixed reference
		\((X_{\mathrm{ref}},G_{\mathrm{ref}},M_{\mathrm{ref}})\)
		
		\State
		\((\mathcal S,X_{\mathrm{sym}},K_{\mathrm{end}},
		\mathcal M_{\mathrm{geom}})
		\gets
		\Call{IncrObsDiscovery}
		{\mathcal D,C,\{\mathcal O^{(k)}\},K_{\mathrm{ref}},
			X_{\mathrm{ref}},G_{\mathrm{ref}},\pi}\)

		\State
		\((Z,\Delta,\mathcal M_\Delta)
		\gets\)
		\Statex
		\(
		\Call{AdmissibleColumnwiseDiscretization}
		{X_{\mathrm{sym}},\mathcal S,\Theta_\Delta}\)
		
		\State
		\(\mathcal R
		\gets
		\{(Z_i,y_i):(x_i,y_i)\in\mathcal D\}\)
		
		\State
		\((\mathcal T,\mathcal M_{\mathrm{sym}},\Gamma)
		\gets
		\Call{InduceAndCertifyTwin}
		{\mathcal R,\mathcal S,\mathcal M_\Delta}\)
		
		\State
		\(\mathcal M
		\gets
		\mathcal M_{\mathrm{geom}}
		\cup
		\mathcal M_\Delta
		\cup
		\mathcal M_{\mathrm{sym}}\)
		
		\State \Return
		\((\mathcal T,\mathcal M,\Gamma)\)
	\end{algorithmic}
\end{algorithm}

\begin{algorithm}[t]
	\caption{Target-independent incremental observable discovery}
	\label{alg:qilp-observable-discovery}
	\small
	\begin{algorithmic}[1]
		\Require \(\mathcal D\), \(C\), exact-grade families
		\(\{\mathcal O^{(k)}\}_{k=1}^{K_{\mathrm{ref}}}\),
		fixed reference
		\((X_{\mathrm{ref}},G_{\mathrm{ref}},M_{\mathrm{ref}})\),
		and stopping/resource policy \(\pi\)
		\Ensure Selected original observables \(\mathcal S\), their continuous matrix
		\(X_{\mathrm{sym}}\), last completed support \(K_{\mathrm{end}}\), and
		geometric metadata \(\mathcal M_{\mathrm{geom}}\)
		
		\State \(\mathcal S\gets\emptyset\);
		\(Q\gets\) empty accumulated orthonormal basis;
		\(K_{\mathrm{end}}\gets0\)
		
		\For{\(k=1,\ldots,K_{\mathrm{ref}}\)}
		
		\State Access the complete response block associated with
		\(\mathcal O^{(k)}\) under the same provider semantics used for reference
		calibration
		
		\State Reuse persisted responses when available, or retrieve/recompute an
		equivalent block without changing the fixed reference
		
		\State Construct the valid normalized geometric block according to
		Condition~4
		
		\State Residualize the current block against the previously accumulated
		subspace \(Q\)
		
		\State Compute the residual thin SVD or an equivalent numerically stable
		subspace update

		\State Retain the smallest leading set of numerically resolvable new
		directions satisfying the declared \(\tau_{\mathrm{dir}}\) criterion and
		form the candidate basis
		\(Q_{\mathrm{cand}}=[Q\;Q_{\mathrm{add}}^{(k)}]\)

		\State Compute the association score
		\(s_j^{(k)}\)
		of every valid original grade-\(k\) column with the candidate new subspace
		according to Condition~7
		
		\State Rank candidate columns by decreasing association score, breaking exact
		ties by the stable target-independent structural ordinal
		
		\State Select the smallest ranked prefix whose cumulative score reaches
		\(\tau_{\mathrm{col}}\), obtaining
		\(\mathcal J_k\)
		
		\State Evaluate the configured downstream admissibility of the candidate
		accumulation
		\(\mathcal S\cup\mathcal J_k\)
		
		\If{the candidate accumulation violates the downstream resource policy}
		\State Persist the candidate geometric/selection diagnostics and the
		resource endpoint reason
		\State \textbf{break}
		\EndIf
		
		\State Commit \(Q\gets Q_{\mathrm{cand}}\)
		
		\State Update the represented mass and
		\(C_{\mathrm{ref}}(k)\)
		against the fixed reference
		
		\State Append the original columns indexed by
		\(\mathcal J_k\)
		to
		\(\mathcal S\)
		using the declared target-independent structural order
		
		\State Set \(K_{\mathrm{end}}\gets k\)
		
		\State Persist support-\(k\) geometric quantities, direction- and
		column-retention thresholds, selection scores, selected source-column
		identifiers, structural ordinals, and provider provenance
		
		\If{\(\pi\) stops after the completed grade \(k\) by coverage, plateau,
			or reference-horizon exhaustion}
		\State Persist the endpoint reason
		\State \textbf{break}
		\EndIf
		
		\EndFor
		
		\State Load the original continuous values of the accumulated observables in
		\(\mathcal S\) as \(X_{\mathrm{sym}}\)
		
		\State \Return
		\((\mathcal S,X_{\mathrm{sym}},K_{\mathrm{end}},
		\mathcal M_{\mathrm{geom}})\)
	\end{algorithmic}
\end{algorithm}

\begin{algorithm}[t]
	\caption{Admissible target-independent column-wise discretization}
	\label{alg:qilp-discretization}
	\begin{algorithmic}[1]
		\Require Continuous matrix \(X_{\mathrm{sym}}\), selected-observable metadata
		\(\mathcal S\), configuration \(\Theta_\Delta\)
		\Ensure Discrete matrix \(Z\), fitted discretizer \(\Delta\), diagnostics
		\(\mathcal M_\Delta\)
		
		\For{each selected observable column \(O_j\)}
		\State Infer the column kind from declared metadata and the empirical audit
		\If{\(O_j\) is constant}
		\State Assign its unique state and record the degeneracy
		\ElsIf{\(O_j\) is exact native discrete}
		\State Preserve its distinguishable observed values as states
		\Else
		\State Estimate \(q_{\mathrm{FD},j}\) and
		\(q_{\mathrm{distinct},j}\)
		\State Estimate \(q_{\mathrm{SNR},j}\) when uncertainty metadata exist
		\State Compute \(q_{\mathrm{requested},j}\) using
		Equation~\eqref{eq:qilp-requested-cardinality}
		\State Compute target-independent quantile cuts for column \(j\)
		\State Collapse cuts indistinguishable under the declared
		numerical/backend resolution
		\State Assign states deterministically using the ordered effective cuts
		\EndIf
		\State Persist effective cardinality, cuts or native values, state
		occupancies, warnings, tolerances, and observable provenance
		\EndFor
		
		\State Concatenate the discretized columns in the stable target-independent
		structural order
		
		\State \Return
		\((Z,\Delta,\mathcal M_\Delta)\)
	\end{algorithmic}
\end{algorithm}

\begin{algorithm}[t]
	\caption{PRIDE induction and observational-twin certification}
	\label{alg:qilp-certification}
	\begin{algorithmic}[1]
		\Require Observed relation \(\mathcal R\), selected observables \(\mathcal S\),
		discretization metadata \(\mathcal M_\Delta\)
		\Ensure Theory \(\mathcal T\), symbolic metadata, and certificate \(\Gamma\)
		
		\State Group identical discrete feature states and audit their target values
		
		\If{one discrete state has incompatible targets}
		\State Record the conflicting states and reject the exact-twin claim for
		the current configuration
		\State \Return audited non-twin result
		\EndIf
		
		\State Fit PRIDE to \(\mathcal R\) using the same stable 		feature order

		\State Parse every induced rule and map every literal to its
		\((\text{observable},\text{grade},\text{qubits},\text{state},
		\text{interval/native value})\) provenance
		
		\For{each observed row \((Z_i,y_i)\)}
		\State Evaluate the induced rules on \(Z_i\)
		\State Record coverage, unique correctness, uncovered cases, and conflicts
		\EndFor
		
		\State Compute strict reconstruction accuracy and theory hash
		\State Verify completeness and consistency over all observed rows
		
		\State Build
		\(\Gamma
		=
		(\Gamma_\Delta,\Gamma_{\mathrm{LFIT}},\Gamma_{\mathrm{backend}})\)
		
		\State \Return
		\((\mathcal T,\mathcal M_{\mathrm{sym}},\Gamma)\)
	\end{algorithmic}
\end{algorithm}

\paragraph{Certificate contents.}
The QILP-0 certificate records the evidence required to audit the scope and
correctness claim of a run. In the current construction this includes:
target-independence metadata; native-versus-continuous column treatment;
requested and effective discretization cardinalities; retained or collapsed
cuts; observed state occupancies; numerical tolerances and
sample-sufficiency warnings; provider/backend provenance; target-consistency
of the discrete relation; uncovered or conflicting rows; strict
reconstruction accuracy; theory completeness and consistency; a theory hash;
and literal-to-observable traceability.

Sample-sufficiency warnings are diagnostic flags under the current policy,
not failed certificate conditions. Likewise, backend uncertainty qualifies
the relationship between reported observable values and ideal expectations;
it does not change the exactness of the logical reconstruction with respect
to the reported discretized relation.

\subsection{Constructive correctness}
\label{subsec:qilp-constructive-correctness}

The role of the following result is compositional. It does not introduce an
independent characterization of quantum-circuit semantics; rather, it checks
that the methodological conditions and constructive steps defined above
compose into the contractual object of
Definition~\ref{def:observational-declarative-twin}.

\begin{proposition}[Constructive correctness of QILP-0]
	\label{prop:qilp-constructive-correctness}
	
	Let
	\(\mathcal D\)
	be finite and let the declared observable horizon be finite. Assume that the
	observable evaluations required by the completed traversal are defined under
	the declared provider/backend contract and that Conditions 1--7 return a
	finite set
	\(\mathcal S\)
	of original traceable observables.
	
	If Condition 8 yields an admissible agnostic discretization
	\(\Delta\)
	and Condition 9 verifies that the resulting observed relation is
	twin-admissible, then the QILP-0 construction of
	Algorithms~\ref{alg:qilp-main}--\ref{alg:qilp-certification}
	terminates and returns a finite multi-valued logic program
	\(\mathcal T\)
	that satisfies
	Definition~\ref{def:observational-declarative-twin}
	within the declared observational scope.
\end{proposition}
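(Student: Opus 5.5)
The argument is compositional: we check termination of each algorithm, then verify the four clauses of Definition~\ref{def:observational-declarative-twin} one by one, using each earlier condition as a black box.

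\emph{Termination.} Algorithm~\ref{alg:qilp-observable-discovery} loops over at most \(K_{\mathrm{ref}}\leq K_{\mathrm{phys}}=n\) grades. Each grade involves finitely many columns, \(\lvert\mathcal O^{(k)}\rvert=\binom{n}{k}3^k\), and \(m=\lvert\mathcal D\rvert\) rows. Each iteration performs finitely many finite-dimensional operations: normalization, projection, a thin SVD, and a sort with the tie-breaking ordinal. The cutoff \(r_k^\star\) in~\eqref{eq:qilp-original-observable-cutoff} is well defined because the full prefix attains ratio \(1\geq\tau_{\mathrm{col}}\). The case \(S_k=0\) is handled explicitly. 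Every endpoint of Condition~6 is reached after finitely many steps, so \(\mathcal S\) is finite and \(X_{\mathrm{sym}}\in\mathbb R^{m\times d}\).

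Algorithm~\ref{alg:qilp-discretization} processes \(d\) columns. Each is handled by one of three finite branches, and every bound in \(\mathcal B_j\) is a finite integer, so \(q_{\mathrm{requested},j}<\infty\). By clause~2 of Definition~\ref{def:admissible-discretization}, each \(\Delta_j\) is total into a finite domain. Hence \(\mathcal R_{C,\mathcal D,\mathcal S,\Delta}\) is a finite relation over finite domains.

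In Algorithm~\ref{alg:qilp-certification}, the audit branch is not taken, since Condition~9 holds by hypothesis. PRIDE therefore runs on a finite set of transitions over a finite atom set \(\A\). Only finitely many rules with distinct bodies exist (at most \(\prod_j(|\dom(\var_j)|+1)\) per head), so PRIDE halts. The row-wise verification loop is finite.

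\emph{Completeness, correctness and minimality (clauses 1--2).} We encode each \((\mathbf z_i,y_i)\) as an LFIT transition with \(\F=\mathcal S\) and \(\T=\{y\}\). We then invoke the stated guarantees of PRIDE from Section~\ref{sec:condition-PRIDE}:
\begin{itemize}
\item every observation is explained, which gives completeness;
\item every rule matching a feature state has its head in some target state observed for it;
\item every rule is minimal with respect to \(\F\) under the domination order.
\end{itemize}
Twin-admissibility~\eqref{eq:qilp-target-consistency} makes the relation deterministic. Consequently ``\(h(R)\in s_3\) for some observed \((s_1,s_3)\)'' collapses to \(h(R)=y_i\) for every row \(i\) that \(R\) matches. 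So no rule ever predicts a wrong target on an analysed row, and no conflicts arise. This yields strict reconstruction accuracy \(1\), which the verification loop confirms.

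\emph{Main obstacle.} This step is the main obstacle, because PRIDE's guarantees are formulated for LFIT transitions rather than for the QILP-0 relation. The translation must preserve both determinism and the meaning of ``correct''. I would state this translation as a small lemma: state equality in \(\SallF\) coincides with \(\mathbf z_i=\mathbf z_j\), given the fixed stable feature order. Minimality in clause~2 is then exactly PRIDE's minimality property, with no global minimality in rule count claimed.

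\emph{Traceability and scope (clauses 3--4).} Every literal \(\var^{\val}\) has \(\var\in\mathcal S\) and \(\val\) a state of \(\Delta_j\). By clause~6 of Definition~\ref{def:admissible-discretization}, that state links to its interval or native value, its observable, its grade and its support. By the Condition~7 metadata, the observable links further to its Pauli word and qubits, giving the chain~\eqref{eq:qilp-traceability-chain}.

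Clause~4 holds because Algorithms~\ref{alg:qilp-main}--\ref{alg:qilp-certification} persist the following into \(\mathcal M\) and \(\Gamma\):
\begin{itemize}
\item the dataset, the provider and \(K_{\mathrm{ref}}\);
\item the stop reason;
\item the thresholds \(\tau_{\mathrm{dir}}\) and \(\tau_{\mathrm{col}}\);
\item the discretization metadata;
\item the backend and tolerance metadata.
\end{itemize}
Together these establish Definition~\ref{def:observational-declarative-twin} within the declared scope.
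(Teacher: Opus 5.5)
Your proposal is correct and follows essentially the same compositional route as the paper's proof. In both, the finite horizon and explicit endpoint give a finite \(\mathcal S\), Condition~8 gives finitely many discrete states, Condition~9 makes the relation deterministic, the PRIDE guarantees give completeness, correctness and body-minimality, and the preserved provenance gives clauses~3--4; your version adds useful detail, such as how determinism reduces the LFIT correctness condition to \(h(R)=y_i\). One small point: finiteness of the rule space shows only that \(\mathcal T\) is finite, not that PRIDE halts, so termination of that step should rest on PRIDE's known polynomial-time bound, as the paper does by citation.
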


\begin{proof}
	The analysed dataset and the declared observational reference horizon are
	finite. Conditions 1--7 organize that finite reference horizon into complete
	exact-grade blocks and define a traversal with an explicit endpoint.
 Therefore the continuous
	construction terminates after finitely many completed grades and returns a
	finite selected vocabulary
	\(\mathcal S\)
	of original observables together with their provenance.
	
	By Condition 8, the fitted discretizer
	\(\Delta\)
	acts independently on each selected column, is deterministic and
	target-independent, and maps every finite observed response to one state in a
	finite domain. Hence the set of discretized observational states is finite.
	Condition 9 is evaluated only after
	\(\Delta\)
	has been fixed. When it succeeds, identical discrete feature states have
	identical targets, so
	\(\mathcal R_{C,\mathcal D,\mathcal S,\Delta}\)
	is a finite deterministic target relation.
	
	LFIT provides logic-program constructions for finite deterministic
	interpretation-transition relations, with the corresponding soundness and
	completeness properties established for that framework
	\cite{InoueRibeiroSakama2014}. In the QILP-0 explanatory encoding, each
	observed feature--target pair is represented as such a finite relation.
	PRIDE returns in polynomial time a sufficient set of rules whose bodies are
	minimal with respect to the supplied observations and which completely and
	correctly explain that relation
	\cite{OrtegaEtAl2021Symbolic}.
	
	Applying PRIDE to
	\(\mathcal R_{C,\mathcal D,\mathcal S,\Delta}\)
	therefore yields a finite program
	\(\mathcal T\)
	that is complete and correct for the observed discrete relation, with
	rule-body minimality in the stated PRIDE sense. QILP-0 additionally preserves
	the mapping from every rule literal to its discrete state, interval or native
	value, original observable, structural grade, qubit support, and evaluation
	metadata.
	
	Thus each clause of
	Definition~\ref{def:observational-declarative-twin}
	is satisfied, and
	\(\mathcal T\)
	is an observational declarative twin within the declared finite scope.
\end{proof}

\begin{corollary}[Exact-backend observational scope]
	\label{cor:qilp-exact-backend}
	
	If the observable evaluations used to construct
	\(X_{\mathrm{sym}}\)
	are exact, then the resulting symbolic equivalence is exact with respect to
	the discretized relation obtained from those exact observable responses within
	the declared dataset, provider, horizon, selection, and discretization scope.
\end{corollary}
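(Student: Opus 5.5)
The corollary is a specialization of Proposition~\ref{prop:qilp-constructive-correctness}, so I plan to derive it by instantiating that proposition rather than re-running the construction. The hypotheses of the proposition are meant to be satisfied whenever the corollary is invoked. In particular, Conditions~8 and~9 are assumed to have succeeded on the exact responses. If they fail, no twin is claimed and the corollary is vacuous, which I will state explicitly.

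\textbf{Step 1: what exactness fixes.} I will first observe that when every entry \(X_{ij}=\operatorname{Tr}[\rho(x_i)O_j]\) used to build \(X_{\mathrm{sym}}\) equals its ideal value, the matrix \(X_{\mathrm{sym}}\) is uniquely determined by \((C,\mathcal D,\mathcal S)\). Every subsequent stage is deterministic given the declared configuration: the normalization of Condition~4, the residual SVD with \(\tau_{\mathrm{dir}}\) and the numerical-rank tolerance, the column selection with \(\tau_{\mathrm{col}}\) and its stable tie-breaking, and the target-independent fitting of \(\Delta\) under Definition~\ref{def:admissible-discretization}. Consequently the relation \(\mathcal R_{C,\mathcal D,\mathcal S,\Delta}\) of Definition~\ref{def:observed-discrete-relation} is the relation obtained from the ideal responses, and no provider or backend error enters it. When exact evaluation is used, the tolerance \(\delta_j\) reduces to the fallback \(\tau_{\mathrm{native}}\), since \(\eta_j=0\) removes the uncertainty term in Equation~\eqref{eq:qilp-distinguishability-tolerance}. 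This fallback is part of the declared discretization scope.

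\textbf{Step 2: apply the proposition.} Under these hypotheses, Proposition~\ref{prop:qilp-constructive-correctness} yields a finite program \(\mathcal T\) that is complete and correct for this relation. Completeness and correctness are purely logical properties of \(\mathcal T\) relative to \(\mathcal R\). Together with twin-admissibility, they mean that for every analysed row the matching rules of \(\mathcal T\) predict exactly \(y_i\). Hence the symbolic equivalence with \(\mathcal R\) holds with no slack.

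\textbf{Main obstacle.} The delicate step is Step~1: the word ``exact'' must not be read as equivalence to the ideal continuous expectation values or to the full unitary semantics of \(C\). I will handle this by noting three residual scope qualifications that remain: the fallback floor \(\tau_{\mathrm{native}}\), the numerical-rank tolerance, and the declared horizon and discretization. Each is recorded in the scope required by clause~4 of Definition~\ref{def:observational-declarative-twin}. The conclusion is therefore exact only relative to the discretized relation from exact responses, which is precisely what the corollary states.
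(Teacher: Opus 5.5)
Your proposal is correct and matches the paper, which gives no separate proof: the corollary is an immediate specialization of Proposition~\ref{prop:qilp-constructive-correctness}. When the evaluations are exact, the reported discretized relation \(\mathcal R_{C,\mathcal D,\mathcal S,\Delta}\) is simply the relation obtained from exact responses, so the logical exactness the proposition already guarantees carries no backend qualifier (your determinism discussion in Step~1 goes beyond what is needed, since it suffices that \(\Delta\) is fitted on and applied to the exact entries of \(X_{\mathrm{sym}}\), but it does no harm).
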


\begin{remark}[Estimated-backend observational scope]
	\label{rem:qilp-estimated-backend}
	
	If observable responses are numerical or sampling-based estimates, QILP-0
	still requires exact logical reconstruction of the reported discretized
	relation whenever an exact observational declarative twin is claimed. The
	relationship between those reported values and ideal quantum expectations is
	instead qualified by the provider/backend uncertainty, numerical tolerances,
	and discretization metadata recorded in the certificate.

\end{remark}

\begin{remark}[Failure of twin-admissibility]
	\label{rem:qilp-twin-admissibility-failure}
	
	If Equation~\eqref{eq:qilp-target-consistency} fails, the independently
	constructed discretized representation does not define a deterministic target
	relation on the analysed dataset. QILP-0 records the conflicting states and
	does not claim an exact observational declarative twin for that configuration.
	
	A different provider, observational reference horizon, independently
	specified discretization policy, or other declared configuration may be
	evaluated in a separate run,
	but the target conflicts of the current run are not used
	retroactively to modify its cuts.
\end{remark}

\subsection{Computational scope and scalability}
\label{subsec:qilp-computational-scope}

The computational cost of QILP-0 should not be treated as a single
undifferentiated quantity. The pipeline separates at least four computational
stages: observable acquisition, geometric processing, discretization, and
declarative induction. These stages depend on different parameters and may be
executed under substantially different computational regimes. In particular,
an efficient downstream representation does not retroactively remove the cost
of observables that had to be acquired in order to construct it.

For the Pauli provider used in the present experiments, the number of
non-identity Pauli words at exact support \(k\) on \(n\) qubits is
\begin{equation}
	N_k(n)
	=
	\binom{n}{k}3^k,
	\label{eq:qilp-pauli-exact-support-count}
\end{equation}
and the number contained in a declared horizon \(K\leq n\) is
\begin{equation}
	N_{\leq K}(n)
	=
	\sum_{k=1}^{K}\binom{n}{k}3^k.
	\label{eq:qilp-pauli-horizon-count}
\end{equation}
For fixed \(K\), this quantity is \(O(n^K)\); an unrestricted traversal through
\(K=n\) contains \(4^n-1\) non-identity Pauli words. Consequently, a
support-bounded observational reference horizon is computationally different
from an unrestricted traversal of the full Pauli observable family.

For the Pauli provider,
\(K_{\mathrm{phys}}=n\)
is therefore a structural maximum, not a promise that every execution can or
should evaluate the complete family through support \(n\).
QILP-0 separates this provider-defined maximum from the declared reference
horizon \(K_{\mathrm{ref}}\). The latter is part of the observational scope
supplied to the pipeline and may equal \(K_{\mathrm{phys}}\) whenever the full
provider horizon is available and scientifically intended.

The number of declared observables does not, by itself, determine acquisition
cost. Each observable provider and execution environment also determine the
cost of evaluating a complete structural grade. Depending on the application,
observable responses may already be available in a persisted dataset, may be
obtained from quantum hardware, or may be computed through analytical,
statevector, tensor-network, multicore, GPU-accelerated, or other simulation
backends. QILP-0 therefore does not assume a unique computational architecture
for observable acquisition.

QILP-0 does not prescribe how
\(K_{\mathrm{ref}}\)
must be chosen. An application may provide it directly, for example because a
domain analysis defines the support range of interest or because an existing
observational dataset is available only through a particular grade.
A provider
or execution environment may likewise impose an effective technological
horizon.

When an installation chooses to determine
\(K_{\mathrm{ref}}\)
automatically from computational constraints, it may use an independent
prospective resource-assessment policy
\begin{equation}
	A_{\mathrm{ref}}
	\left(
	K;\mathcal P,\mathcal D,\mathcal E,\mathcal B_{\mathrm{ref}}
	\right)
	\in\{0,1\},
	\label{eq:qilp-reference-resource-admissibility}
\end{equation}
where the policy estimates whether all complete grades through \(K\) can be
used to construct the declared reference. This assessment precedes observable
acquisition for the reference and need not compute the corresponding
observable responses in order to estimate their structural cost.

Once
\(K_{\mathrm{ref}}\)
has been declared and the fixed reference has been calibrated, a separate
processing-admissibility policy may govern whether the next complete grade can
be propagated through the incremental and downstream stages:
\begin{equation}
	A_{\mathrm{proc}}
	\left(
	k;\mathcal P,\mathcal E,\mathcal B_{\mathrm{proc}}
	\right)
	\in\{0,1\}.
	\label{eq:qilp-processing-resource-admissibility}
\end{equation}

The two policies have different semantics. A prospective rejection by
\(A_{\mathrm{ref}}\)
prevents a candidate grade from entering the declared reference horizon. A
later rejection by
\(A_{\mathrm{proc}}\)
may stop the incremental traversal at
\(K_{\mathrm{end}}<K_{\mathrm{ref}}\)
even though the rejected grade already contributed to reference calibration.
Neither policy may silently replace a complete structural grade by a partial
subset. The declared reference horizon, last completed support, relevant policy
configuration, and termination reason are recorded as execution metadata.

The geometric stage operates on the observable-response profiles after
acquisition. 
Its cost depends on the number of analysed rows, the number of
valid columns in each completed grade, the retained geometric dimension, and the
chosen numerical realization.
The scientific contract does not require a
single implementation strategy: explicit-Gram, block, streaming, matrix-free,
or equivalent structured SVD realizations may be used provided that they
preserve the reference-relative coverage semantics of the fixed observational
reference, numerical equivalence, orthogonality, and provenance.
Importantly, geometric selection can reduce the
number of original observables propagated to discretization and declarative
induction, but it does not eliminate the acquisition cost of the declared
blocks already evaluated.

Discretization is performed observable-wise on the selected columns and is
therefore naturally separable across variables. Its practical cost depends on
the number of retained observable profiles, the number of analysed rows, and
the resolution policy. The final PRIDE stage depends instead on the finite
relation presented to the inductive engine, including the number of symbolic
variables, their effective cardinalities, and the structure of the observed
relation. Resource requirements should therefore be interpreted stage by
stage rather than attributed to QILP-0 through a single hardware-independent
runtime figure.

This separation also determines the scope of the computational evaluation in
the present methodological study. Wall-clock time and peak-memory measurements
obtained on a single installation would combine properties of the observable
provider, simulator or quantum backend, numerical implementation, resource
policy, and physical computing platform. We therefore report the structural
growth of the declared observational family, the supports actually completed,
and whether termination was caused by the coverage criterion or by the
declared resource policy, rather than presenting installation-specific timings
as general evidence of QILP-0 scalability. A controlled performance study
across providers, execution environments, and computational architectures is a
separate systems question and remains an important direction for future work.

It is important to conclude that the 100-qubit Bars \& Stripes executions demonstrate that the complete
declared support-\(\leq2\) reference can be constructed for these
embedding-specific providers under the reported policy. They do not establish
that the unrestricted Pauli horizon
\(K_{\mathrm{phys}}=100\)
can be evaluated at comparable cost.

\paragraph{AI-assisted software development.}
OpenAI ChatGPT was used during software implementation to accelerate prototype
coding, debugging, and technical documentation. All such assistance was
supervised and reviewed by the authors and was not used to alter experimental
data, results, or scientific conclusions.

\section{Experimental Validation}
\label{sec:results}

This section evaluates the constructive QILP-0 pipeline on two experimental
settings selected to exercise qualitatively different observational regimes.
The objective is not to benchmark quantum predictive performance, but to
verify that the methodological construction developed in the previous
sections can be carried out in practice and that the resulting declarative
theories satisfy the observational-twin contract within their declared scope.

The first case, Bars \& Stripes, provides an exhaustive and controlled
native-discrete setting in which two fixed quantum embeddings can be compared
over increasing system sizes. The second case, Low-Depth MNIST, provides a
genuinely continuous observational setting and allows us to compare the same
data representation before and after a trained variational transformation.
Together, the two cases exercise the native-discrete and continuous branches
of the QILP-0 declarative interface and allow the geometric, discretization,
symbolic, and certification stages to be audited under different conditions.

\subsection{Validation objectives and certificate criteria}
\label{sec:experimental-certificate-criteria}

For each analysed relation, validation separates the continuous observational
stage from the declarative reconstruction stage. 
We report the executed
structural horizon, geometric coverage and retained geometric dimension; the
discretization audit applicable to the selected original observables;
consistency of the resulting discrete relation; and the row-level
reconstruction obtained from the induced PRIDE theory.

These quantities are interpreted only within the declared
observational scope.

Unless otherwise stated, the reported geometric runs used a residual-direction
retention threshold
\(\tau_{\mathrm{dir}}=0.999\)
and an original-column association threshold
\(\tau_{\mathrm{col}}=0.999\).
Constant-column detection used tolerance
\(10^{-12}\);
the residual-zero diagnostic tolerance was
\(10^{-10}\);
and numerical rank or reorthogonalization decisions used tolerance
\(10^{-8}\).
The explicit-residual and column-Gram implementations realize the same
target-independent geometric-selection contract through different numerical
paths; the active configuration and selected-source metadata are recorded for
audit.

An exact observational-twin certificate requires the discretized relation to
be target-consistent and the induced declarative theory to reconstruct that
relation completely and correctly: every analysed row must be covered, no
conflicting prediction may be produced, and strict reconstruction accuracy
must equal one. The certificate additionally preserves the provenance required
to trace symbolic literals back to their discrete states and original
observables.

Backend or provider uncertainty, when present, qualifies the correspondence
between the reported observable values and their ideal values; it does not by
itself turn an exact symbolic reconstruction of the reported relation into an
approximate logical reconstruction. The symbolic equivalence claim therefore
remains exact with respect to the reported discrete relation, while the
correspondence between that relation and the ideal quantum responses remains
qualified by the declared uncertainty metadata and is made explicit for the
domain expert.

\subsection{Native-discrete validation: Bars \& Stripes}
\label{sec:bars-stripes}

\subsubsection{Community benchmark and controlled architectural comparison}

Bars \& Stripes is a synthetic image family widely used as a
proof-of-principle benchmark in quantum generative modelling and quantum
machine-learning studies
\cite{benedetti2019generative,liu2018differentiable,bowles2024subtle}.
We selected it from this community experimental tradition rather than
designing a dataset specifically for QILP-0. An accessible implementation and
derived train/test datasets are also distributed through PennyLane Datasets
\cite{bowles2024pennylaneBas}.

Our experiments use an exhaustive, noise-free binary-classification variant.
For an $L\times L$ image, $2^L-2$ unambiguous bar patterns and $2^L-2$
unambiguous stripe patterns remain after excluding the all-zero and all-one
images, which satisfy both definitions. The evaluated dataset therefore
contains
\begin{equation}
	(2^L-2)+(2^L-2)=2^{L+1}-4.
	\label{eq:bas-cardinality}
\end{equation}
We considered $L=4,\ldots,10$, corresponding to 28--2,044 patterns and
16--100 qubits.

We compare two fixed, non-trainable quantum feature maps for exactly the same
inputs. Both use the same local angle encoding, while the second adds a
nearest-neighbour entangling layer. Rotation-based angle encodings are
standard constructions in the QML literature
\cite{schuldkilloran2019feature,schuld2021encoding}.

The first circuit is the separable product embedding
\begin{equation}
	|\phi_{\mathrm{prod}}(x)\rangle
	=
	\bigotimes_{j=0}^{L^2-1}
	R_Y(\alpha x_j)|0\rangle,
	\qquad
	\alpha=\pi/3.
	\label{eq:bas-product-embedding}
\end{equation}

\begin{figure}[!htbp]
	\centering
	\includegraphics[width=1\linewidth]{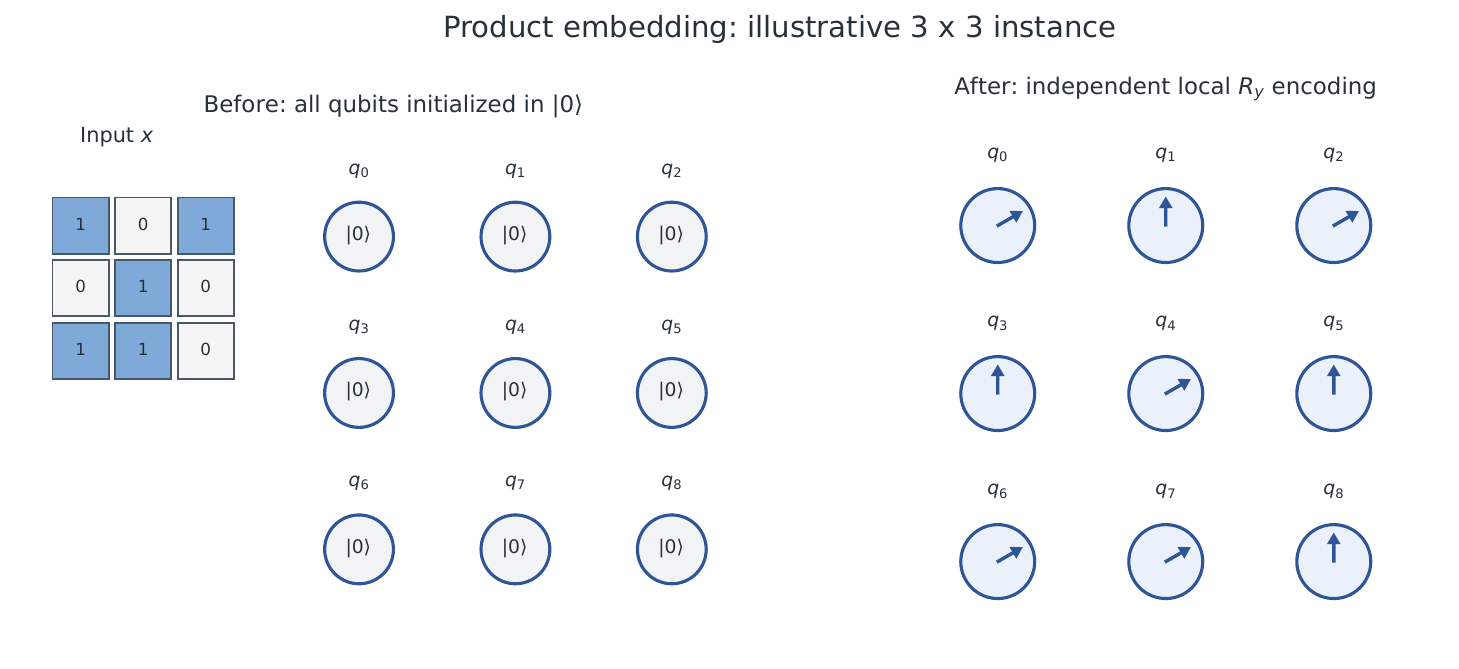}
	\caption{Product embedding as a local state transformation. A
		$3\times3$ instance is shown for readability. Each binary pixel $x_j$ is
		assigned to qubit $q_j$, initially prepared in $|0\rangle$, and independently
		encoded by $R_Y(\alpha x_j)$ with $\alpha=\pi/3$. No two-qubit interaction
		is present. The reported experiments use $L=4,\ldots,10$, rather than the
		illustrative $L=3$ instance.}
	\label{fig:bs-product-before-after}
\end{figure}

The second circuit uses the same local encoding and then applies CZ gates
along all horizontal and vertical nearest-neighbour edges of the open
$L\times L$ grid:
\begin{equation}
	|\phi_{\mathrm{CZ}}(x)\rangle
	=
	U_{\mathrm{CZ}}(G_L)
	\left[
	\bigotimes_{j=0}^{L^2-1}
	R_Y(\alpha x_j)
	\right]
	|0\cdots0\rangle.
	\label{eq:bas-grid-cz-embedding}
\end{equation}

\begin{figure}[!htbp]
	\centering
	\includegraphics[width=1\linewidth]{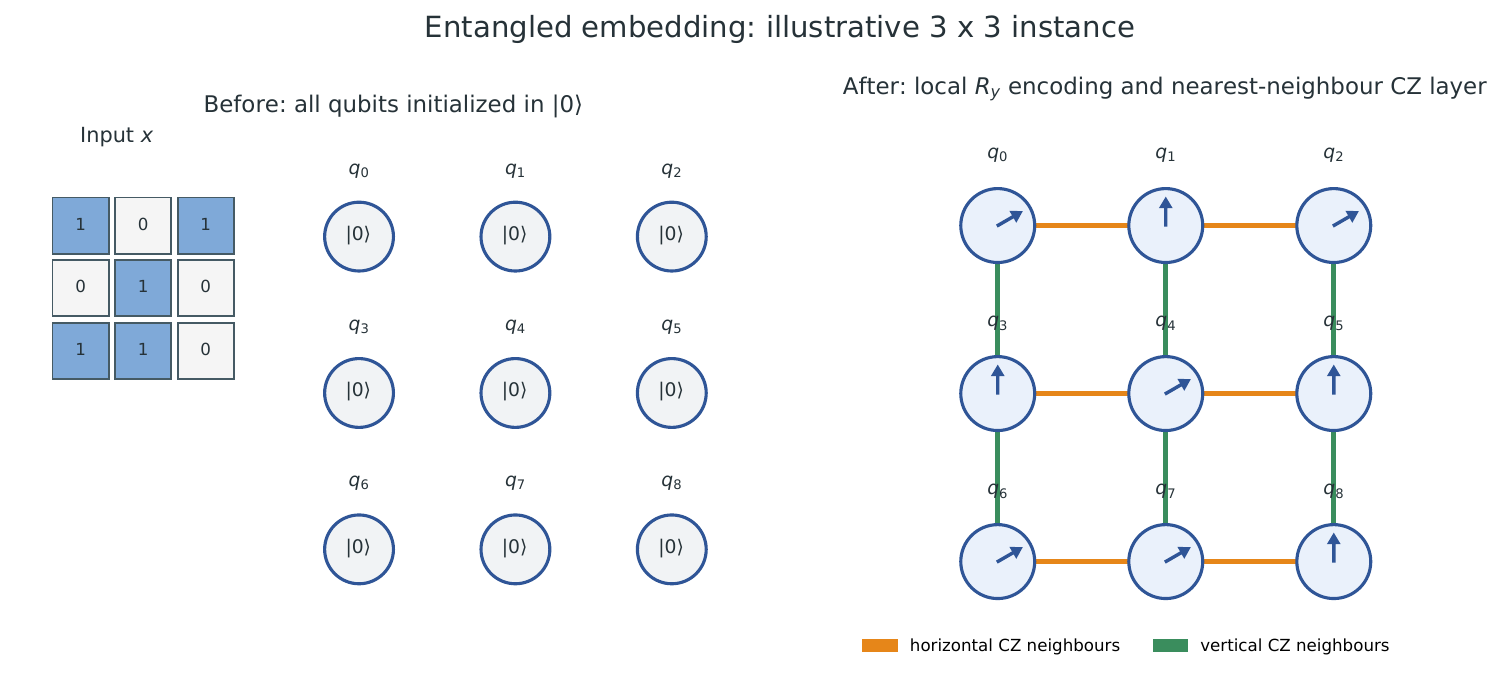}
	\caption{Local encoding followed by grid-neighbour entanglement. The initial
		state and local $R_Y(\alpha x_j)$ rotations are identical to those of the
		product embedding. The encoded qubits are then coupled through CZ gates on
		all horizontal and vertical nearest-neighbour edges. The $3\times3$ grid is
		illustrative; the reported experiments use the corresponding $L\times L$
		connectivity for $L=4,\ldots,10$.}
	\label{fig:bs-entangled-before-after}
\end{figure}

Thus, the controlled comparison changes only the inclusion of elementary
two-qubit interactions after the common local encoding. QILP-0 is used to
examine whether this architectural change modifies the observable-response
geometry and whether any such modification propagates to the induced
declarative relation. The illustrative $3\times3$ diagrams are used only for
legibility; all quantitative results below correspond to the executed
$L=4,\ldots,10$ study.

\subsubsection{Exact observable evaluation and executed supports}

For the product embedding, every local state lies in the $XZ$ plane of the
Bloch sphere, so $\langle Y_j\rangle=0$. Because the complete state is a tensor
product, every Pauli word containing at least one $Y$ has an exactly vanishing
expectation value. Such words therefore disappear from the effective
observational matrix as a direct consequence of the embedding semantics,
without approximation or heuristic filtering. The remaining observables were
evaluated analytically, and the implementation was checked against exact
statevector calculations on validation instances.

For the grid-CZ embedding, the CZ layer is a Clifford unitary. Conjugation by
this unitary therefore maps each Pauli word exactly to another Pauli word,
possibly with a sign \cite{aaronson2004stabilizer}. We evaluated the requested
observables through this Clifford pullback on the underlying product state.
The algebraic transformation introduces neither approximation nor sampling
uncertainty; small-system exhaustive enumeration and exact statevector
calculations were used only to check the implementation.

In the Bars \& Stripes experiments, the reference horizon was not supplied by
an external domain constraint. We therefore instantiated it through a
target-independent prospective resource assessment performed before reference
calibration. For each candidate exact-support grade, the implementation used
the provider-specific number of observables to be evaluated,
\(N_k^{\mathrm{eval}}\), together with the number of dataset rows \(m\), to
estimate the resulting number of scalar observable values
\begin{equation}
	V_k=mN_k^{\mathrm{eval}},
	\label{eq:bs-prospective-values}
\end{equation}
and a conservative Gram-work proxy
\begin{equation}
	W_k=m^2N_k^{\mathrm{eval}}.
	\label{eq:bs-prospective-gram-work}
\end{equation}
The validated implementation also monitored an estimated peak working-memory
requirement with a safety factor of 1.35.

The declared limits were 750,000 observables in a complete grade,
450,000,000 scalar observable values, an
\(8\times10^{11}\) Gram-work proxy, and approximately 8 GB of estimated
working memory. The assessment inspected complete candidate grades
prospectively and chose the largest consecutive support satisfying the
declared policy. It did not use the target or the actual observable-response
values to decide the horizon.

For both Bars \& Stripes routes, the resulting reference horizons for
\(L=4,\ldots,10\) were
\(6,4,3,3,3,2,2\), respectively. The implementation also imposed a prospective
inspection cap of support 8, but this cap did not determine any of the reported
reference horizons because a resource criterion became active first in every
case.

After \(K_{\mathrm{ref}}\) had been fixed, the reference-calibration pass
obtained all complete blocks within that horizon and constructed the fixed
reference. The subsequent incremental traversal could stop at a smaller
\(K_{\mathrm{end}}\) after the reference had been fixed, either because the
0.99 reference-coverage criterion had already been reached or because the
configured downstream selected-observable/declarative-processing budget
rejected the next complete grade.

This distinction is particularly visible for the product runs at
\(L=7\) and \(L=8\). In both cases,
\(K_{\mathrm{ref}}=3\),
so support 3 contributed to the fixed reference during calibration, whereas
the incremental traversal stopped at
\(K_{\mathrm{end}}=2\).
Consequently, the reported coverages 0.978420 and 0.977275 are
\(C_{\mathrm{ref}}(2)\) values measured against the fixed
support-\(\leq3\) reference. They are not obtained by redefining the
denominator at the last processed support.

At 100 qubits, both routes used
\(K_{\mathrm{ref}}=2\)
and completed both grades of that declared reference horizon. The resulting
reference-relative coverages were 1.000000 for the product embedding and
0.999874 for grid-CZ. These values characterize the support-\(\leq2\)
reference used in the reported executions; they do not quantify the amount of
observable geometry that may exist at higher Pauli supports.

Table~\ref{tab:bs-observational-results} reports the actual final support,
coverage, retained geometric dimension, number of original observables
retained for symbolic processing, and stopping condition for every evaluated
size.

\begin{table*}[t]
	\centering
	\scriptsize
	\setlength{\tabcolsep}{3pt}
	\caption{Bars \& Stripes: declared reference horizon, final processed
		support, and observational summary. Coverage is measured relative to the
		fixed reference horizon. ``Coverage'' denotes termination after reaching
		the 0.99 criterion; ``resources'' denotes a downstream post-selection
		endpoint in which the next support was analyzed geometrically but its
		selected original observables were not admitted because they would have
		exceeded the configured cumulative feature budget.}
	\label{tab:bs-observational-results}
	\begin{tabular}{llrrrrrrrl}
		\toprule
		Embedding & $L$ & Patterns & Qubits & $K_{\mathrm{ref}}$ &
		$K_{\mathrm{end}}$ & Ref. coverage & Geom. dim. & Selected obs. & Endpoint \\
		\midrule
		Product $R_Y$ & 4  & 28    & 16  & 6 & 3 & 0.994704 & 26  & 3,784  & coverage \\
		Product $R_Y$ & 5  & 60    & 25  & 4 & 3 & 0.997892 & 49  & 16,193 & coverage \\
		Product $R_Y$ & 6  & 124   & 36  & 3 & 3 & 1.000000 & 81  & 51,644 & coverage \\
		Product $R_Y$ & 7  & 252   & 49  & 3 & 2 & 0.978420 & 55  & 4,780  & resources \\
		Product $R_Y$ & 8  & 508   & 64  & 3 & 2 & 0.977275 & 71  & 8,153  & resources \\
		Product $R_Y$ & 9  & 1,020 & 81  & 2 & 2 & 1.000000 & 89  & 13,058 & coverage \\
		Product $R_Y$ & 10 & 2,044 & 100 & 2 & 2 & 1.000000 & 109 & 19,902 & coverage \\
		\midrule
		Grid-CZ & 4  & 28    & 16  & 6 & 2 & 0.990118 & 26  & 494    & coverage \\
		Grid-CZ & 5  & 60    & 25  & 4 & 2 & 0.992181 & 50  & 1,221  & coverage \\
		Grid-CZ & 6  & 124   & 36  & 3 & 2 & 0.994095 & 77  & 2,536  & coverage \\
		Grid-CZ & 7  & 252   & 49  & 3 & 2 & 0.990438 & 111 & 4,706  & coverage \\
		Grid-CZ & 8  & 508   & 64  & 3 & 2 & 0.987322 & 151 & 8,010  & resources \\
		Grid-CZ & 9  & 1,020 & 81  & 2 & 2 & 0.999878 & 196 & 12,872 & coverage \\
		Grid-CZ & 10 & 2,044 & 100 & 2 & 2 & 0.999874 & 248 & 19,615 & coverage \\
		\bottomrule
	\end{tabular}
\end{table*}

\subsubsection{Incremental observational coverage}

Relative to each run's fixed reference horizon, the product embedding reached
the 0.99 reference-coverage criterion at support 3 for
\(L=4,5,6\) and at support 2 for \(L=9,10\).
For \(L=7\) and \(L=8\),
\(C_{\mathrm{ref}}(2)\)
was 0.978420 and 0.977275, respectively, and the next complete downstream
grade was rejected by the declared processing budget. These endpoints are
therefore resource-bounded and must not be interpreted as geometric plateaus
or convergence.

For the grid-CZ embedding, support 2 reached the 0.99 reference-coverage
criterion for every evaluated size except \(L=8\), where
\(C_{\mathrm{ref}}(2)=0.987322\)
and the next complete downstream grade was rejected by the configured resource
policy.

The accumulated retained geometric dimension is the number of independent
directions preserved in the normalized observable-response geometry after
applying both the declared numerical-resolvability criterion and the
\(\tau_{\mathrm{dir}}\) component-retention criterion. In QILP-0 it is used
as a compact descriptor of observational structural richness; it is not a
measure of physical energy, Shannon information, or intrinsic circuit
complexity. At $L=10$, the product and grid-CZ embeddings achieved almost
identical coverage---1.000000 and 0.999874, respectively---while their
accumulated retained geometric dimensions were 109 and 248. Comparable
coverage can therefore coexist with substantially different latent
organization within the evaluated observational scope.

\begin{figure}[!htbp]
	\centering
	\includegraphics[width=1\linewidth]{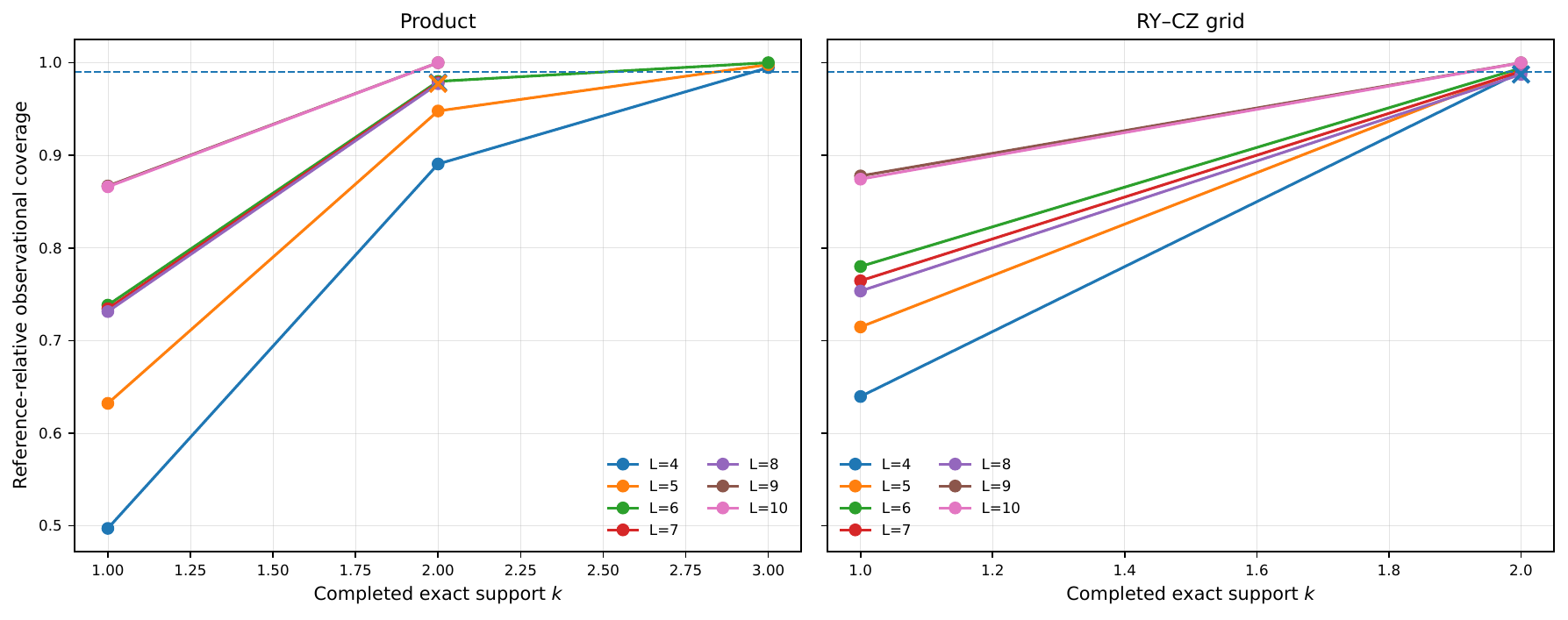}
	\caption{Incremental reference-relative observational coverage for the
		product and grid-CZ embeddings. Curves include only completed
		exact-support layers within each run's declared reference horizon. The
		dashed line marks the 0.99 criterion; crosses identify runs whose next
		candidate support was analyzed geometrically but rejected by the
		downstream post-selection feature budget.}
	\label{fig:bs-global-coverage}
\end{figure}

Figure~\ref{fig:bs-global-coverage} summarizes the incremental
reference-relative observational coverage of both embeddings across the
evaluated system sizes. The curves make explicit both the support at which the
coverage criterion is reached and the runs in which the next candidate support
was geometrically analyzed but could not be admitted as a completed
downstream layer under the declared post-selection feature budget.

\paragraph{Meaning of the resource-bounded endpoints.}
The endpoint label ``resources'' does not denote failure of the quantum
observable evaluation itself, nor does it mean that available RAM was
exhausted. In the three affected runs---product
\(L=7\), product \(L=8\), and grid-CZ \(L=8\)---the next exact-support block
was analyzed geometrically, but the set of original observables selected from
that block would have exceeded the declared downstream cumulative feature
budget.

The downstream policy allowed at most 100,000 selected original observables
and 250,000,000 accumulated selected values. For product \(L=7\), the next
support would have produced 151,936 selected observables cumulatively while
requiring 38,287,872 accumulated values. For product \(L=8\), the
corresponding candidate counts were 340,946 observables and 173,200,568
values. For grid-CZ \(L=8\), support 3 would have produced 309,549 selected
observables cumulatively and 157,250,892 accumulated values. Thus, in all
three runs the binding constraint was the 100,000-observable cumulative
feature limit, whereas the accumulated-value limit remained unexceeded.

These downstream endpoints are distinct from the prospective resource policy
used to declare
\(K_{\mathrm{ref}}\).
A support may therefore contribute to the fixed reference calibration while
not becoming a completed downstream layer. In the affected runs,
\(K_{\mathrm{end}}=2\)
although the declared reference horizon extended to support 3. Their reported
coverages are consequently reference-relative values at the last completed
downstream support, not estimates of unrestricted higher-support Pauli
coverage.

\subsubsection{From continuous geometry to declarative reconstruction}

This experiment focuses on quantum embeddings, which, as discussed in the
motivation, constitute a distinct source of opacity in QML pipelines. The two
circuits transform exactly the same classical inputs into different quantum
representations, and QILP-0 is used to examine how this architectural change
is reflected first in the observable geometry and subsequently in the induced
declarative relation.

After discretization, PRIDE induces a propositional logic theory describing
the relation between the selected observable states and the Bars/Stripes
target over the analysed dataset. The purpose of this theory is declarative
reconstruction rather than predictive classification. Nevertheless, evaluating
the induced program on the same observed relation provides a direct operational
check of its correctness and completeness. We therefore report strict
reconstruction accuracy as the fraction of analysed rows for which the theory
recovers the observed target without ambiguity. This quantity measures
equivalence to the analysed relation and must not be interpreted as
out-of-sample predictive accuracy.

In this context we call an induced theory stable across successive accumulated
supports when adding the newly admitted observable block leaves the reported
deterministic PRIDE rule set unchanged.

Under this criterion, the product embedding was already stable at support 1
for every evaluated size.
For the grid-CZ embedding, by contrast, the rule set changed between
supports 1 and 2 for $L=4,5,7,8$, while it remained unchanged for
$L=6,9,10$.

Thus, geometric refinement and declarative refinement need not
coincide: additional observational directions may be incorporated without
altering the final rule theory, whereas the entangling layer makes the declarative reconstruction
support-dependent for some system sizes.

The $L=5$ case provides a compact example of the declarative object produced
by QILP-0. In this dataset, \texttt{target(0)} denotes Bars and
\texttt{target(1)} denotes Stripes. A literal \texttt{Xj(s)} states that the
original single-qubit Pauli observable $X$ on qubit $q_j$ takes native
discrete state $s$. Qubits $q_0,\ldots,q_4$ correspond, from left to right,
to the first image row.

\paragraph{Product embedding.}
For this embedding, each $X_j$ takes one of two exact native values:
\begin{equation}
	\texttt{Xj(0)} \Longleftrightarrow \langle X_j\rangle=0,
	\qquad
	\texttt{Xj(1)} \Longleftrightarrow
	\langle X_j\rangle=\sin(\pi/3).
	\label{eq:bs-product-pride-states}
\end{equation}
The complete $L=5$ theory is
\begin{quote}
	\raggedright
	\ttfamily\scriptsize
	target(0) :- X0(0), X1(1).\\
	target(0) :- X0(1), X1(0).\\
	target(0) :- X0(0), X2(1).\\
	target(0) :- X0(1), X2(0).\\
	target(0) :- X0(0), X3(1).\\
	target(0) :- X0(1), X3(0).\\
	target(0) :- X0(0), X4(1).\\
	target(0) :- X0(1), X4(0).\\
	target(1) :- X0(0), X1(0), X2(0), X3(0), X4(0).\\
	target(1) :- X0(1), X1(1), X2(1), X3(1), X4(1).\\
\end{quote}

The eight rules for \texttt{target(0)} detect a disagreement between the
first observable and at least one other observable in the first row. This is
the expected signature of a non-uniform row and therefore of a Bars pattern,
once the two globally uniform images have been excluded. The two
\texttt{target(1)} rules identify the two possible uniform first rows,
corresponding to Stripes. The rule set is identical for accumulated supports
$\leq1$ and $\leq2$: the additional support-2 observational structure does not
change the declarative reconstruction in this case.

\paragraph{Grid-CZ embedding.}
For grid-CZ, the $L=5$ accumulated support-$\leq2$ theory contains 14 rules.
Compared with the support-$\leq1$ theory, 13 rules are unchanged
and only one Stripes condition is replaced:
\begin{quote}
	\raggedright
	\ttfamily\scriptsize
	\% support $\leq 1$\\
	target(1) :- X1(1), X3(1).\\[2pt]
	\% support $\leq 2$\\
	target(1) :- X0(1), X3(1).\\
\end{quote}

The change is therefore localized rather than a wholesale replacement of the
theory. It also illustrates an important traceability property: although the
support-$\leq2$ symbolic dataset contains observables up to support 2, the
final $L=5$ rule bodies remain expressed through original support-1
observables. For both embeddings and both accumulated supports, the theories
cover all 60 exhaustive $L=5$ patterns, produce no conflicting predictions,
and achieve strict reconstruction accuracy 1.0.

\subsubsection{Observational-twin certificate}
\label{sec:bs-twin-certificate}

The Bars \& Stripes executions provide the native-discrete case of the
constructive QILP-0 result. Across the reported product-embedding runs
($L=4,\ldots,10$), 17 accumulated-support layers were reported; across the
grid-CZ runs, 14 layers were reported.
Every selected feature in both
families was audited as an exact native discrete observable state. Therefore,
the discretizer preserved the observed states and introduced no numerical
cuts. Observable generation, ordering, geometric analysis and discretization
remained structural and target-independent.

Table~\ref{tab:bs-twin-certificate} summarizes the symbolic certificate. Every
reported relation was target-consistent, every PRIDE program covered every
analysed row, no conflicting prediction was produced, and the minimum strict
reconstruction accuracy was 1.0. Hence each reported layer supports an exact
observational-twin claim for its native-discrete relation under the declared
observational scope.

\begin{table*}[t]
	\centering
	\small
	\setlength{\tabcolsep}{4pt}
	\caption{Observational-twin certificate for the Bars \& Stripes experiment
		families. Sample-sufficiency warnings are retained as diagnostics but do
		not alter exact native states.}
	\label{tab:bs-twin-certificate}
	\begin{tabular}{lrrrrlll}
		\toprule
		Embedding & Layers & Supp. range & Continuous cols. & Max. conflicts &
		Max. uncovered & Min. strict acc. & Theory change 1$\rightarrow$2 \\
		\midrule
		Product $R_Y$ & 17 & 1--3 & 0 & 0 & 0 & 1.0 & none \\
		Grid-CZ       & 14 & 1--2 & 0 & 0 & 0 & 1.0 & $L=4,5,7,8$ \\
		\bottomrule
	\end{tabular}
\end{table*}

The audit also recorded sample-sufficiency warnings for some native variables.
These warnings are retained as diagnostic metadata. They indicate that a
conservative sample-cardinality criterion would have suggested fewer states,
but they do not alter the native states observed in the experiment or the
resulting certificate.
A systematic comparison of alternative cardinality bounds and more refined
procedures for estimating defensible symbolic resolution could remain an
interesting direction for future work.

\subsubsection{Methodological interpretation}

This experiment isolates the quantum-embedding stage as a source of
observational opacity in QML. The product circuit provides a separable
encoding, whereas the grid-CZ circuit adds local two-qubit interactions after
the same local input rotations. Because the dataset and local encoding are
held fixed, the comparison allows QILP-0 to track how this controlled change
of quantum producer modifies the normalized observable-response geometry and
whether those changes propagate to the declarative relation reconstructed from
the original observable states.

The results do not establish a universal advantage of entanglement. For this
community benchmark and the executed supports, the grid-CZ circuit can reach
nearly the same reference-relative coverage while exhibiting a larger
accumulated retained geometric dimension and, for some image sizes, a
support-dependent declarative theory.
Conversely,
additional geometric structure does not necessarily alter the rule theory.
Resource-limited endpoints remain explicitly separated from convergence
claims.

\subsection{Continuous validation: Low-Depth MNIST}
\label{sec:low-depth-mnist}

\subsubsection{Community dataset and experimental comparison}

We use the Low-Depth MNIST resource derived from the community study of
structured image loading with shallow quantum circuits
\cite{kiwit2025typical,kiwit2025lowdepthmnist,kiwit2025lowdepthdemo}.
The underlying images belong to MNIST \cite{lecun1998gradient}, while the
target states use the flexible representation of quantum images (FRQI)
\cite{le2011frqi}.

The experiment retains all 14,708 published instances labelled as digits 0
and 1: 6,912 zeros and 7,796 ones. Each image is represented on 11 qubits.
Qubit $q_0$ is the FRQI colour qubit and $q_1,\ldots,q_{10}$ form the address
register. The depth-4 preparation associates each image with 171
input-dependent rotation parameters and a fixed layout of 251 operations:
171 $R_Y$ rotations and 80 CNOT gates. Its mean fidelity with the exact FRQI
state is 0.968639 (median 0.978554). Fidelity is retained only as
representation metadata and is not provided to the symbolic learner.

We compare two representations of the same ordered rows:
\begin{equation}
	\mathbf{A}:\qquad |\psi_{d4}(x)\rangle,
\end{equation}
and
\begin{equation}
	\mathbf{B}:\qquad
	U_{\mathrm{VQC}}(\theta^\star)|\psi_{d4}(x)\rangle.
\end{equation}

Representation $\mathbf{A}$ is the published depth-4 preparation, hereafter
the d4 preparation. Representation $\mathbf{B}$ applies the trained
variational classifier to the same prepared states.

Figure~\ref{fig:mnist-circuit-architecture} summarizes the executed circuit
architecture and provides gate-level details of both the d4 preparation and
the variational stage.

\begin{figure*}[!htbp]
	\centering
	\includegraphics[width=\textwidth]{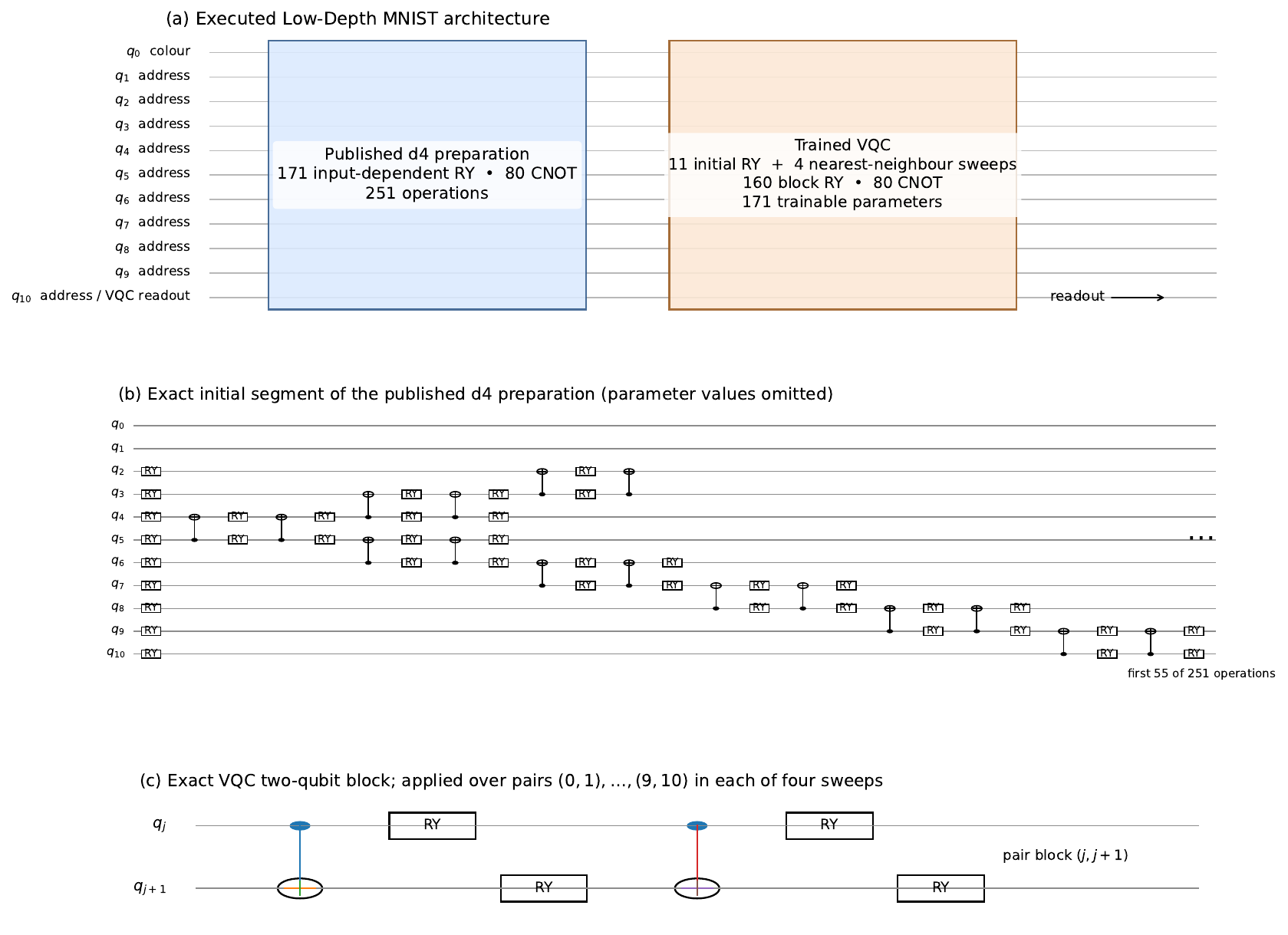}
	\caption{Executed Low-Depth MNIST architecture used for the before/after
		comparison. Panel (a) summarizes the 11-qubit processing chain: the
		published depth-4 (d4) preparation is followed by the trained variational
		classifier. Qubit $q_0$ is the FRQI colour qubit,
		$q_1,\ldots,q_{10}$ form the address register, and $q_{10}$ is also used
		as the VQC readout qubit. Panel (b) shows an exact initial segment of the
		executed d4 preparation; gate ordering is preserved while numerical
		rotation angles are omitted for readability. The complete d4 layout
		contains 171 input-dependent $R_Y$ rotations and 80 CNOT gates.
		Panel (c) shows the exact two-qubit VQC block
		\texttt{CNOT--RY--RY--CNOT--RY--RY}, which is applied over the ten
		nearest-neighbour pairs in each of four sweeps. The complete gate-level
		circuit is retained with the reproducibility materials.}
	\label{fig:mnist-circuit-architecture}
\end{figure*}

\subsubsection{Variational circuit and supervised training}

The classifier follows the small linear VQC used in the community demo
\cite{kiwit2025lowdepthdemo}. It starts with one local $R_Y$ rotation on every
qubit and then performs four sequential sweeps over the ten neighbouring qubit
pairs. Each two-qubit block is decomposed as
\begin{center}
	\texttt{CNOT--RY--RY--CNOT--RY--RY}.
\end{center}
The circuit therefore has
\begin{equation}
	11+4\times10\times4=171
\end{equation}
shared trainable parameters. These parameters are distinct from the 171
image-dependent parameters of the depth-4 preparation. The classifier reads
the computational-basis probabilities of $q_{10}$; this qubit belongs to the
address register in the image preparation and becomes the readout qubit of the
VQC.

Training uses five epochs, batches of 128 rows, Adam with learning rate 0.01,
and a fixed 80/20 split with seed 0. The split contains 11,766 training rows
and 2,942 validation rows. The validation accuracy of the stored parameters
increases from 0.465330 before training to 0.971108 after training. This
quantity documents that the variational transformation has learned the
supervised task; it is conceptually distinct from the strict reconstruction
accuracy later used to certify the QILP-0 declarative theory.

Because the VQC is trained with class labels, representation $\mathbf{B}$
contains supervised information. Once the trained parameters are fixed,
however, the subsequent observable evaluation, geometric analysis, structural
ordering, and discretization do not use the labels.

\subsubsection{Exact observable evaluation and executed supports}

Both representations are analysed with the same complete Pauli families at
exact supports 1, 2 and 3. All 14,708 rows are retained and no observable
columns are sampled. Expectation values are computed from exact simulated
states, so the reported matrices contain neither finite-shot uncertainty nor
hardware noise.

At exact supports 1, 2 and 3, the evaluated families contain respectively 22,
275 and 2,310 non-constant observable columns. Each complete block is processed
once and the original observables selected by the geometric stage are
accumulated for subsequent discretization and declarative induction.

\subsubsection{Incremental observational coverage}

\begin{table*}[t]
	\centering
	\small
	\setlength{\tabcolsep}{4pt}
	\caption{Observable geometry before and after variational training. Coverage
		is accumulated over complete exact-support blocks and interpreted relative
		to the declared observational reference.}
	\label{tab:mnist-observational-results}
	\begin{tabular}{rrrrrrr}
		\toprule
		Support & Coverage d4 & Coverage d4+VQC & Geom. dim. d4 &
		Geom. dim. d4+VQC & Selected d4 & Selected d4+VQC \\
		\midrule
		1 & 0.355399 & 0.667402 & 21 & 19 & 22 & 22 \\
		2 & 0.809468 & 0.901905 & 237 & 257 & 260 & 285 \\
		3 & 0.999805 & 0.999899 & 1,819 & 2,092 & 2,168 & 2,498 \\
		\bottomrule
	\end{tabular}
\end{table*}

Table~\ref{tab:mnist-observational-results} shows that the trained
representation places a substantially larger fraction of the evaluated
normalized observational geometry within the low-support accumulated
subspaces. Support 1 coverage increases from 0.355399 to 0.667402, and
accumulated support 2 coverage increases from 0.809468 to 0.901905. Thus, for
this before/after comparison, a larger fraction of the declared observational
structure is represented at lower support after training.

The final coverages are almost identical, but the accumulated retained
geometric dimension rises from 1,819 to 2,092 and the number of selected
original observables from 2,168 to 2,498. Earlier concentration of coverage
at low support therefore coexists with a larger retained geometric dimension
over the complete evaluated support-3 horizon. This is a statement about the
normalized observable-response geometry of this experiment, not a general
measure of quantum-circuit complexity.

\begin{figure*}[!htbp]
	\centering
	\begin{minipage}[t]{0.32\textwidth}
		\centering
		\includegraphics[width=\linewidth]{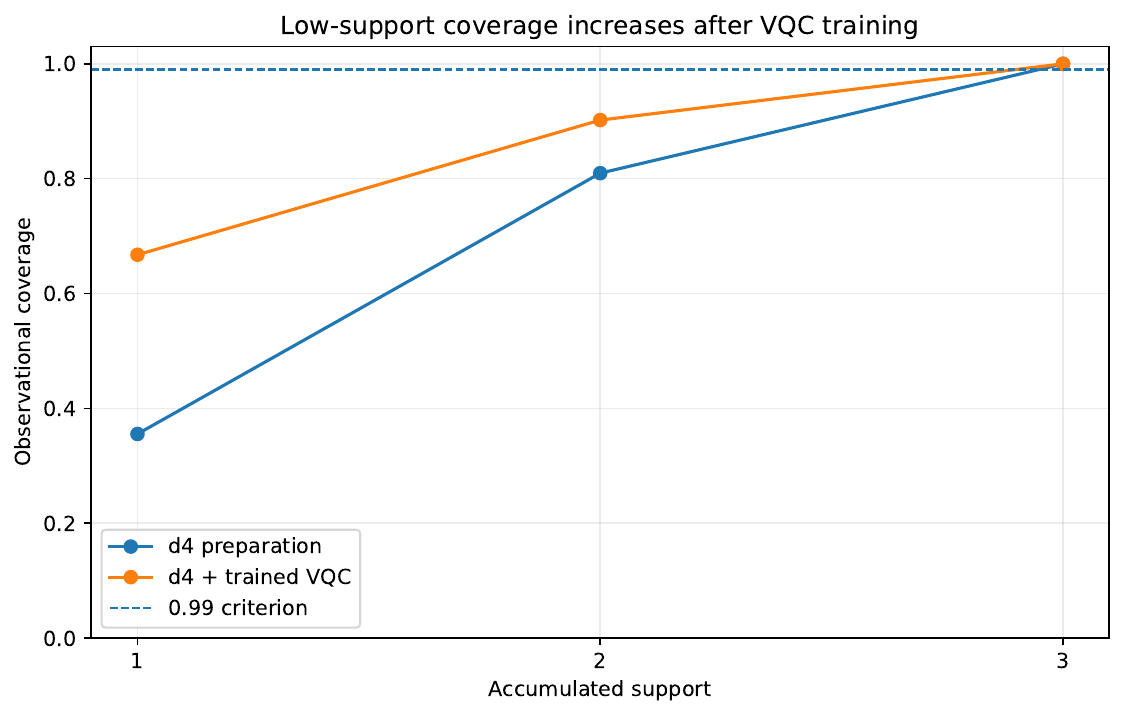}
	\end{minipage}
	\hfill
	\begin{minipage}[t]{0.32\textwidth}
		\centering
		\includegraphics[width=\linewidth]{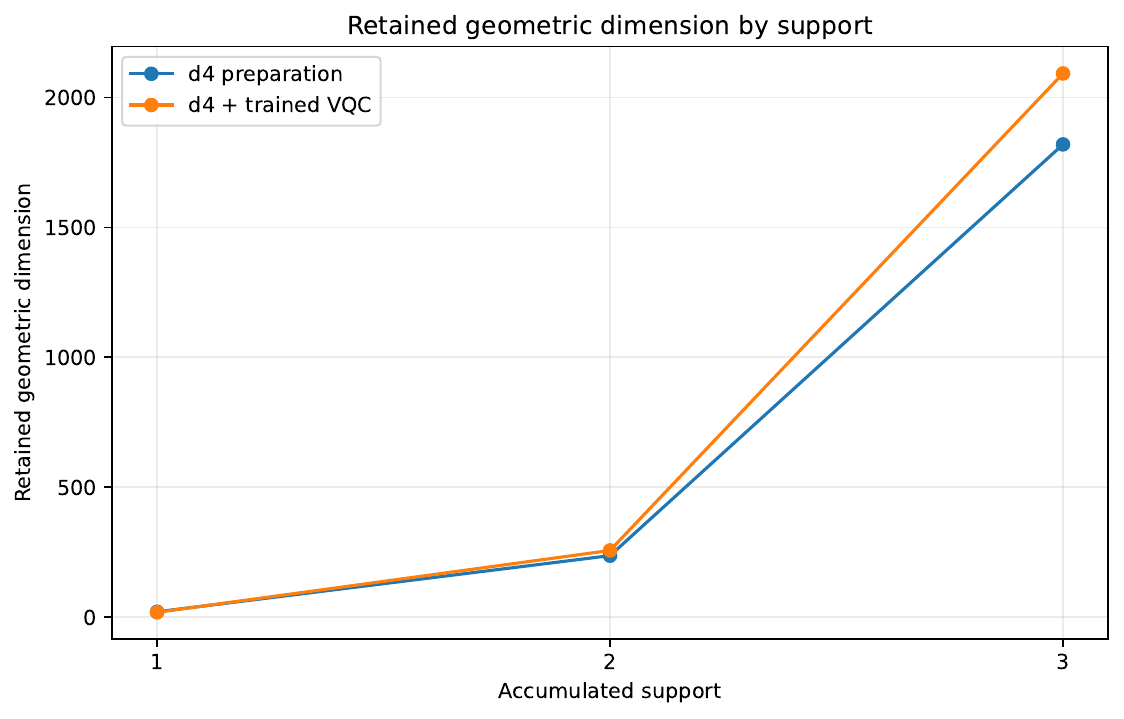}
	\end{minipage}
	\hfill
	\begin{minipage}[t]{0.32\textwidth}
		\centering
		\includegraphics[width=\linewidth]{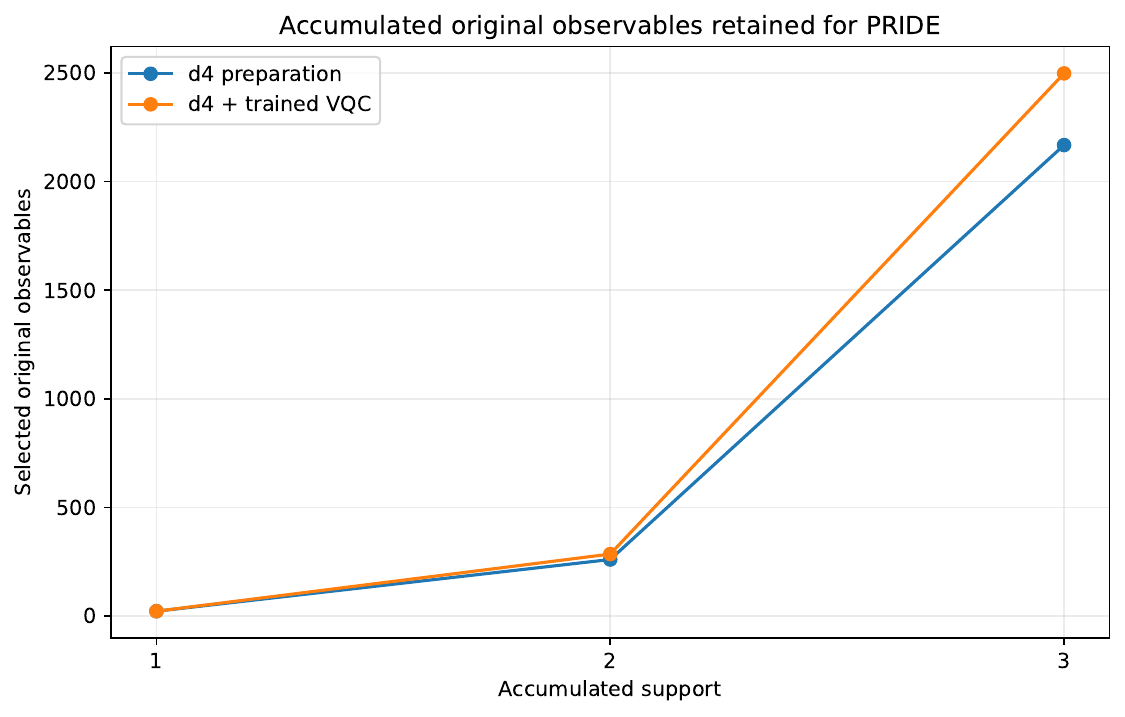}
	\end{minipage}
	\caption{Before/after observational geometry. Left: accumulated coverage over
		complete support blocks. Centre: accumulated retained geometric dimension.
		Right: accumulated number of original observables retained for the symbolic
		stage. The trained representation accumulates more coverage at supports 1
		and 2 while exhibiting a larger retained geometric dimension and
		selected-observable vocabulary at support 3.}
	\label{fig:mnist-geometry-before-after}
\end{figure*}

\subsubsection{Declarative reconstruction and theory structure}

PRIDE receives the discretized original observables selected by QILP-0. In
every evaluated accumulated-support layer, the induced theory is complete and
conflict-free over all 14,708 analysed rows and achieves strict reconstruction
accuracy 1.0. This measures exact reconstruction of the corresponding finite
discrete observational relation and is distinct from the VQC validation
accuracy of 0.971108, which measures predictive performance on the fixed
held-out split.

\begin{table*}[t]
	\centering
	\small
	\setlength{\tabcolsep}{4pt}
	\caption{Symbolic results before and after variational training. ``Available''
		denotes the number of selected original observable columns presented to
		the discretization and symbolic stages at each accumulated support.}
	\label{tab:mnist-symbolic-results}
	\begin{tabular}{rrrrrrr}
		\toprule
		Support & Available d4 & Rules d4 & Literals d4 &
		Available d4+VQC & Rules d4+VQC & Literals d4+VQC \\
		\midrule
		1 & 22    & 1,106 & 2,300 & 22    & 2,749 & 6,147 \\
		2 & 260   & 1,111 & 2,309 & 285   & 2,651 & 5,913 \\
		3 & 2,168 & 1,106 & 2,301 & 2,498 & 2,361 & 5,298 \\
		\bottomrule
	\end{tabular}
\end{table*}

The resulting programs provide the declarative reconstructions analysed below.
Despite the large available vocabularies in
Table~\ref{tab:mnist-symbolic-results}, all literals of the final support-3
theories use only

\begin{equation}
	X_0,\qquad Z_0,\qquad X_{10},\qquad Z_{10}.
\end{equation}

Thus, the final rule bodies are expressed through observables of the FRQI
colour qubit $q_0$ and of $q_{10}$, the address qubit used as the VQC readout.
This does not imply that the intermediate qubits are irrelevant to the circuit
or to its continuous observational geometry; it states only that their
observables do not occur in the final rule bodies.

The complete MNIST theories are too large to reproduce in the article, but
representative rules make their syntax and provenance explicit. A literal such
as \texttt{X0(25)} denotes equality of the original observable $X_0$ to one of
its independently fitted discrete states; it is not a learned numerical
threshold. Representative rules from the final support-3 theories are:
\begin{quote}
	\raggedright
	\ttfamily\scriptsize
	\% d4 preparation\\
	target(0) :- X0(25).\\
	target(1) :- X0(8).\\
	target(0) :- X0(17), Z0(22).\\
	target(1) :- X10(103), Z0(32).\\[3pt]
	\% d4 + trained VQC\\
	target(0) :- Z10(16).\\
	target(1) :- Z10(8).\\
	target(0) :- X0(16), Z0(14).\\
	target(1) :- X10(56), Z0(53).\\
\end{quote}

A direct comparison of raw rule counts requires care because discretization is
fitted independently to the two continuous representations. The VQC changes
the observable profiles and therefore can change the number and occupancy of
discrete states assigned to a given observable. Table~\ref{tab:mnist-rule-structure}
shows that the larger post-VQC theory cannot be attributed to a simple
expansion of the discrete vocabulary. Across the four observables that occur in
the final theories, the total number of available states decreases from 350 to
216 and the number of distinct observable--state atoms actually used decreases
from 244 to 215, while the number of rules increases from 1,106 to 2,361.

\begin{table}[t]
	\centering
	\small
	\caption{Structural comparison of the final support-3 PRIDE theories. An
		observable-level schema is a distinct pair consisting of the target class
		and the set of observable names in the rule body after ignoring their
		discrete state identifiers. ``Available states'' is summed over the four
		observables that actually occur in the final theories.}
	\label{tab:mnist-rule-structure}
	\begin{tabular}{lrr}
		\toprule
		Quantity & d4 & d4 + trained VQC \\
		\midrule
		Rules                                  & 1,106 & 2,361 \\
		Rule literals                          & 2,301 & 5,298 \\
		Unary rules                            & 26    & 32 \\
		Binary rules                           & 965   & 1,721 \\
		Ternary rules                          & 115   & 608 \\
		Mean body length                       & 2.08  & 2.24 \\
		Available states over \\ used observables & 350   & 216 \\
		Distinct discrete atoms used           & 244   & 215 \\
		Observable-level schemas               & 21    & 24 \\
		\bottomrule
	\end{tabular}
\end{table}

The growth of the post-VQC theory is therefore better described as a more
extensively instantiated discrete relation over a closely related small
observable vocabulary. The increase is concentrated mainly in binary and
ternary rule instances, while the number of observable-level schemas changes
only from 21 to 24. These symbolic counts characterize the discretized
relation produced after independently fitted agnostic discretization; they are
not direct measures of the computational cost or intrinsic complexity of the
quantum circuit.

\subsubsection{Admissible-discretization and observational-twin certificate}
\label{sec:mnist-twin-certificate}

Low-Depth MNIST exercises the continuous discretization branch of QILP-0. The d4 and
post-VQC representations were evaluated exactly for all 14,708 rows.
Consequently, the backend uncertainty term is zero and no uncertainty-derived
SNR bound is required in these runs. All selected observable columns were
discretized independently of the target according to the admissible
observable-wise contract defined above.

Table~\ref{tab:mnist-discretization-certificate} summarizes the
discretization audit.
Across all six accumulated-support datasets, every
selected column was treated as continuous, no selected column was constant, no
requested cut set collapsed numerically, every effective bin was occupied, and
the smallest observed bin occupancy remained positive. The large number of
sample-sufficiency warnings follows the current warning-only policy: these
flags record a conservative diagnostic and do not imply empty bins, loss of
totality, or target-dependent modification of the discretizer.

\begin{table*}[t]
	\centering
	\small
	\caption{Admissible-discretization certificate for Low-Depth MNIST. ``Warn.''
		denotes columns carrying the sample-sufficiency audit warning;
		``Min. occ.'' is the minimum effective-bin occupancy over all selected
		columns.}
	\label{tab:mnist-discretization-certificate}
	\begin{tabular}{llrrrrrr}
		\toprule
		Representation & Support $\leq$ &
		Columns & Warn. & Effective bins & Empty bins & Collapsed columns &
		Min. occ. \\
		\midrule
		d4 preparation     & 1 & 22    & 20    & 2,176   & 0 & 0 & 79 \\
		d4 preparation     & 2 & 260   & 256   & 28,703  & 0 & 0 & 43 \\
		d4 preparation     & 3 & 2,168 & 2,155 & 256,706 & 0 & 0 & 23 \\
		d4 + trained VQC   & 1 & 22    & 18    & 1,344   & 0 & 0 & 127 \\
		d4 + trained VQC   & 2 & 285   & 246   & 18,555  & 0 & 0 & 106 \\
		d4 + trained VQC   & 3 & 2,498 & 2,279 & 175,554 & 0 & 0 & 76 \\
		\bottomrule
	\end{tabular}
\end{table*}

The post-discretization audit establishes twin- admissibility at every support
for both representations: no two identical discrete observational states are
associated with incompatible targets. PRIDE then covers all 14,708 rows,
produces no conflicting predictions, and achieves strict reconstruction
accuracy 1.0 in every accumulated-support layer. The certificate therefore
supports exact equivalence between each induced theory and its discrete
observational relation.

This statement does not claim that finite discretization preserves every
metric distinction of the original continuous matrices. Rather, it states that
the target-independent discretization satisfies the declared observational
resolution contract and that the induced declarative theory exactly
reconstructs the finite relation produced under that contract.

\subsubsection{Methodological interpretation}

The experiment separates three effects that should not be conflated. First,
supervised VQC training increases validation accuracy on the fixed held-out
split from 0.465330 to 0.971108. 
Second, the trained representation accumulates
a substantially larger fraction of the evaluated observational geometry at
supports 1 and 2 while also exhibiting a larger retained geometric dimension
at support 3.
Third,
the corresponding discrete relation is represented by a larger and more
extensively instantiated declarative theory, although its final rule bodies
remain restricted to observables of $q_0$ and $q_{10}$.

These observations are specific to the executed Low-Depth MNIST case. 
Their role here is to
show that the same QILP-0 construction can audit and exactly reconstruct two
continuous observational relations located before and after a supervised
quantum transformation.

\subsection{Cross-experiment certification}
\label{sec:cross-experiment-twin-certificate}

Table~\ref{tab:cross-experiment-twin-certificate} collects the common
certificate fields across the four reported experimental routes. All of them
use exact observable evaluation, target-independent construction of the
symbolic vocabulary, complete row-level audits, and exact PRIDE reconstruction
of the resulting discrete relation. Bars \& Stripes exercises preservation of
native discrete states, whereas Low-Depth MNIST exercises the continuous
admissible-discretization contract.

\begin{table*}[t]
	\centering
	\small
	\setlength{\tabcolsep}{4pt}
	\caption{Cross-experiment observational-twin certificate. ``Layers'' denotes
		reported accumulated-support relations.}
	\label{tab:cross-experiment-twin-certificate}
	\begin{tabular}{lclrrrrr}
		\toprule
		Case & Layers & Discret. & Const.cols. & Empty bins/states &
		Max.conflicts & Max.uncov. & Min.strict acc. \\
		\midrule
		B\&S product      & 17 & Native states        & 0 & 0 & 0 & 0 & 1.0 \\
		B\&S grid-CZ      & 14 & Native states        & 0 & 0 & 0 & 0 & 1.0 \\
		MNIST d4          & 3  & Continuous agnostic  & 0 & 0 & 0 & 0 & 1.0 \\
		MNIST d4 + VQC    & 3  & Continuous agnostic  & 0 & 0 & 0 & 0 & 1.0 \\
		\bottomrule
	\end{tabular}
\end{table*}

Within the finite observational scope declared for each run, these results
support exact observational declarative twins for the corresponding reported
discrete relations. Coverage metadata are reported separately relative to each
run's declared observational reference horizon.

\subsection{Interpreting certificates beyond the exact reported cases}
\label{sec:certificate-interpretation}

The experiments above use exact observable evaluations, but exactness of the
symbolic certificate and exactness of the underlying observable evaluation are
logically distinct. If a backend reports estimated observable values together
with uncertainty or effective-resolution metadata, those values can still be
discretized into a target-consistent relation and reconstructed exactly by the
induced theory. In that situation, the declarative equivalence remains exact
with respect to the reported discrete relation, while the correspondence
between the reported responses and ideal quantum responses is qualified by the
reported backend or provider uncertainty.

A different situation occurs when twin-admissibility fails. If two analysed
rows map to the same discrete observational state but carry incompatible
targets, QILP-0 still constructs and preserves the finite observed relation and
records the conflicting states. However, the relation is not deterministic
with respect to the target and no exact observational-twin claim is made for
that configuration. In the exact-twin branch, this consistency audit
terminates the construction before PRIDE induction while retaining the
non-deterministic relation and its provenance as an auditable result.

Likewise, a resource-limited observational traversal changes the declared
scope; it does not by itself make the logical reconstruction approximate.

A future notion of a certified approximate observational twin would require an
explicit approximation contract defining which discrepancies are admitted and
how they are bounded---for example, at the observational, discretization, or
symbolic-reconstruction level. No such additional contract is required by, or
claimed for, the exact experiments reported here.

\subsection{Additional implementation validation}
\label{sec:internal-validation}

Before consolidating the two reported case studies, we performed additional
implementation checks on heterogeneous circuit families and physical-model
instances. These included circuits derived from MQT Bench, small TFIM and
Heisenberg configurations, and product and entangling Bars-and-Stripes
embeddings over increasing circuit sizes. The checks exercised exact-support
observable generation, incremental accumulation without regenerating lower
supports, target-independent geometric processing, coverage and stopping
metadata, agnostic discretization, PRIDE induction, provenance and output
handling, and backend consistency.
Increasing-size executions were also used to expose
practical resource limits and to identify which stages became computationally
restrictive under the tested configurations.

These executions were used to identify computational limits and to verify the
main invariants of the prototype across different circuit structures. They are
not presented as an additional comparative benchmark because the
configurations were exploratory and were not all executed under a single
controlled protocol. Quantitative experimental claims in this paper are
therefore restricted to the Bars \& Stripes and Low-Depth MNIST studies above.

\section{Conclusions and Future Work}
\label{sec:conclusions}

This work has investigated whether the observable behaviour of a quantum
circuit can be transformed into a finite declarative object without allowing
the target to determine the observational vocabulary and without losing the
semantic link between the resulting logical literals and the original quantum
observables. QILP-0 provides a constructive affirmative answer under an
explicit set of methodological conditions and within a declared finite
observational scope.

The resulting object was defined as an \emph{observational declarative twin}.
Its exactness is assessed with respect to the finite task-conditioned discrete
relation produced under the declared observational scope.

The construction separates two layers. The continuous layer generates and
organizes semantically traceable observable profiles, traverses them according
to a reproducible structural grading, analyses their target-independent
geometry, and measures reference-relative incremental coverage against a fixed
observational reference. Although SVD is used to organize the numerical
geometry in the present realization, latent coordinates do not replace the
original observable vocabulary. The retained new subspace is mapped
deterministically back to original observable columns before the symbolic
boundary. The declarative layer then preserves native discrete states or
performs admissible target-independent discretization of continuous profiles,
audits twin-admissibility, and uses LFIT/PRIDE to induce and certify the
corresponding finite logic program. This separation makes the provenance chain
from a rule literal back to its observable, support, and qubits an explicit
part of the result.

The two experimental studies exercised complementary branches of this
construction. In Bars \& Stripes, exhaustive native-discrete datasets were
analysed for product and grid-CZ embeddings over systems ranging from 16 to
100 qubits. 
The experiment showed that similar accumulated coverage can
coexist with different retained geometric dimensions and that geometric
refinement does not necessarily imply a change in the induced declarative
theory.
Every
reported native-discrete relation satisfied the exact-twin certificate.

Low-Depth MNIST exercised the continuous branch on all 14,708 digit-0/1
instances. Comparing the published depth-4 preparation with the same states
after a trained VQC showed that the trained representation placed a larger
fraction of its declared support-3 reference geometry within the accumulated
support-1 and support-2 subspaces, while also producing a larger retained geometric dimension at support 3 and a
more extensively instantiated declarative relation. In
both representations and at every reported accumulated support, the
discretized relation remained twin-admissible and the induced theories covered
all analysed rows without conflicts, yielding strict reconstruction accuracy
equal to one.

These results support three conclusions. First, a useful symbolic account of a
quantum representation need not be based on latent symbolic variables:
continuous latent geometry can guide a reproducible selection while the final
declarative vocabulary remains tied to original observables. Second,
geometric and declarative refinement are distinct phenomena; additional
observational directions may be detected without forcing a corresponding
change in the logical theory. Third, exactness can be stated meaningfully at
the symbolic level provided that the observational reference, discretization,
finite relation, and provenance under which that exactness is claimed are
themselves made explicit.

Several limitations delimit the present contribution. QILP-0 is currently an
order-0 construction over a finite multi-valued propositional representation.

Its certificates apply to finite datasets and declared observational reference
horizons, not to unrestricted circuit behaviour. For the Pauli realization,
the structural maximum is
\(K_{\mathrm{phys}}=n\),
whereas a particular execution may declare
\(K_{\mathrm{ref}}<K_{\mathrm{phys}}\).
In that case, the coverage trajectory is explicitly reference-relative to
\(K_{\mathrm{ref}}\),
and the resulting declarative certificate is scoped to the finite relation
constructed under that reference. No percentage of the unobserved
higher-support geometry is inferred.

The Pauli exact-support realization
also faces the combinatorial growth described in
Section~\ref{subsec:qilp-computational-scope}: the declared support-bounded
family is polynomial in the number of qubits only when the maximum support is
treated as fixed, whereas an unrestricted traversal of the full Pauli family
grows exponentially.

The 100-qubit Bars \& Stripes executions reported here exploit
embedding-specific algebraic structure and the corresponding specialized
observable providers. They must therefore not be interpreted as evidence that
arbitrary 100-qubit circuits can be observed with the same computational cost.

More generally, the reference horizon may be supplied by scientific scope,
available observational data, provider capability, or a technological policy.
When resource assessment is used to choose it, practical limits depend jointly
on the observable provider, dataset size, execution environment, and declared
budgets. A separate downstream policy may still produce
\(K_{\mathrm{end}}<K_{\mathrm{ref}}\)
after the reference has been fixed.

A further limitation is that the two main experimental studies use exact
observable evaluation. The framework already distinguishes backend/provider
uncertainty from logical reconstruction, but a systematic experimental
assessment under finite shots, device noise and hardware-specific error models
remains to be performed. Likewise, the current discretization contract defines
auditable resolution criteria and diagnostics, but alternative combinations
of statistical, numerical and provider-informed bounds deserve controlled
comparison.

These limitations define several directions for future work.

\begin{enumerate}
	\item \textbf{Uncertainty-aware and hardware execution.}
	We intend to evaluate QILP-0 with finite-shot, noisy and real-hardware
	providers, propagating available uncertainty and calibration information
	into the observable-resolution metadata while keeping logical exactness
	explicitly relative to the reported discrete relation.
	
	\item \textbf{Certified approximate observational twins.}
	The present work deliberately avoids calling a failed exact certificate an
	approximate twin. A future extension should define an explicit approximation
	contract, including which discrepancies are admitted at the observational,
	discretization and symbolic levels and which quantitative bounds must be
	reported.
	
	\item \textbf{Scalable and structured observable providers.}
	The observational layer should be extended with providers that exploit
	locality, circuit structure, tensor-network representations,
	classical-shadow techniques, analytically tractable embeddings or other
	domain-supported mechanisms. 

	Such providers must retain stable observable
	semantics, provenance, and an explicit interpretation of their observational
	reference horizon.

	\item \textbf{Resource-aware geometric processing.}
	Streaming, matrix-free and distributed implementations of the geometric
	stage, together with improved prospective resource assessment and adaptive stopping
	policies, can reduce the cost of handling large observational matrices.
	These improvements should preserve the reference-relative coverage semantics
	of the fixed observational reference and the distinction between acquisition
	cost and downstream geometric processing.

	\item \textbf{Discretization and resolution analysis.}
	The current observable-wise contract can be extended through systematic
	comparisons of admissible cardinality bounds, provider-aware resolution
	criteria and uncertainty-derived limits. Of particular interest is
	determining when alternative admissible criteria produce the same symbolic
	relation and when their differences materially affect the induced theory.
	
	\item \textbf{Beyond QILP-0.}
	QILP-0 uses an order-0 finite propositional layer because it provides a
	well-defined first realization of the QXymb bridge. Future QXymb
	specializations may integrate other declarative engines and more expressive
	logical representations, provided that their semantic contract, provenance
	and reconstruction guarantees are stated explicitly.
	
	\item \textbf{Links with formal verification.}
	The induced observational theory and its provenance may provide an
	interface to complementary quantum program logics and verification
	frameworks. A promising direction is to study how empirically induced
	declarative properties can be related to independently specified formal
	properties of a circuit without conflating inductive reconstruction with
	deductive verification.
	
	\item \textbf{Broader empirical validation.}
	Further experiments should cover additional QML architectures, quantum-data
	domains, observable providers and hardware platforms, with controlled
	protocols that separate methodological validation from claims about
	predictive performance or quantum advantage.
\end{enumerate}

The compact qubit-level footprint observed in the Low-Depth MNIST case also
raises two related questions for future work: whether different classification
tasks induce different declarative qubit footprints, and whether such
footprints can be used as hypotheses for task-specific circuit simplification.
The latter requires separate causal or ablation analysis, since absence from
the final rule bodies does not imply that a qubit is unnecessary for producing
the active observables.

QILP-0 should therefore be viewed as a first concrete realization of a broader
idea: quantum behaviour can be exposed through an explicitly delimited
observational interface and then translated into a declarative domain in which
equivalence, provenance and uncertainty can be inspected directly. The present
results establish this construction for a finite propositional setting and
provide the methodological basis on which richer QXymb variants can be built.

\section*{Data and code availability}

A reproducibility software package provides a minimal executable and
didactic realization of the complete QILP-0 construction, extracted from the
research implementation used in this work. It exercises the full
path from target-independent observable acquisition and geometric analysis
against a fixed observational reference to preservation of original
observables, discretization, PRIDE induction, and row-level
observational-twin certification.

To keep the complete execution directly inspectable, the software package uses
an exhaustive non-uniform \(3\times3\) Bars \& Stripes instance with a
directly declared reference horizon
\(K_{\mathrm{ref}}=2\).
This didactic configuration does not reproduce the prospective resource policy
used to choose the reference horizons of the larger reported experiments;
instead, it demonstrates the QILP-0 contract once the observational reference
horizon has been supplied.

This configuration is intentionally smaller than
the \(L=4,\ldots,10\) experiments reported in the paper and is explicitly
identified as a didactic reproducibility case rather than as an additional
quantitative benchmark. Its paper-facing outputs include the declared
observational scope, reference-relative coverage trajectory, selected original
observables, discretization audit, finite discrete relation, induced PRIDE
theory, row-level reconstruction audit, execution environment, and final twin
certificate.

The reference execution was additionally reproduced from a clean Python
environment using only the dependencies declared by the software package. The
resulting theory covers all 12 rows without conflicting predictions, achieves
strict reconstruction accuracy equal to one, and reproduces the reference
theory hash. PRIDE is accessed through the external PyLFIT 0.5.1 package and
is not vendored in the software package. The broader QXymb codebase, of which QILP-0
is the current declarative specialization developed in this work, remains
under active development.

The reproducibility package is publicly available from the project repository
at \url{https://github.com/ortegaalfonso/qilp0-reproducibility}.
Release \texttt{v1.0.0} has been permanently archived in Zenodo and is
available under the persistent identifier
\href{https://doi.org/10.5281/zenodo.22165888}
{doi:10.5281/zenodo.22165888}.

\appendix

\printcredits

\section*{Declaration of competing interest}

The authors declare that they have no known competing financial interests or
personal relationships that could have influenced the work reported in this
paper.

\section*{Funding}

This research has been funded by the European Union. Views and opinions expressed are however those of the author(s) only and do not necessarily reflect those of the European Union or ERCEA. Neither the European Union nor the granting authority can be held responsible for them.

\section*{Declaration of generative AI and AI-assisted technologies
	in the manuscript preparation process}

During the preparation of this work, the authors used OpenAI ChatGPT
to support manuscript organization, language refinement, and the drafting
and revision of explanatory text. After using this tool, the authors
reviewed and edited the content as needed and take full responsibility
for the content of the article.

\bibliographystyle{cas-model2-names}
\bibliography{QILP_refs}

\end{document}